\newcommand{\dist}{\mathrm{dist}_X}
\newcommand{\OPT}{\mathrm{OPT}}
\newcommand{\diam}{\mathrm{diam}}
\newcommand{\vvv}{\overrightarrow}
\newcommand{\argmin}{\mathrm{argmin}}
\newcommand{\argmax}{\mathrm{argmax}}
\newtheorem{theorem}{Theorem}
\newtheorem*{theorem*}{Theorem}
\newtheorem{lemma}[theorem]{Lemma}
\newtheorem{proposition}[theorem]{Proposition}
\newtheorem*{proposition*}{Proposition}
\newtheorem{claim}[theorem]{Claim}
\newtheorem{claim*}{Claim}
\newtheorem*{example*}{Example}
\newtheorem{example}[theorem]{Example}
\newtheorem{definition}[theorem]{Definition}
\newtheorem{notation}[theorem]{Notation}
\newtheorem{remark}[theorem]{Remark}
\newtheorem*{remark*}{Remark}
\newtheorem*{remarks*}{Remarks}
\newtheorem{question}[theorem]{Open Question}
\newtheorem{observation}[theorem]{Observation}
\newcommand{\cA}{\mathcal{A}}
\newcommand{\cR}{\mathcal{R}}
\newcommand{\cK}{\mathcal{K}}
\newcommand{\RR}{\mathbb{R}}
\newcommand{\NN}{\mathbb{N}}
  \author{Shay Moran\footnote{Departments of Mathematics, Computer Science, and Data and Decision Sciences, Technion and Google Research.} \and Elizaveta Nesterova\footnote{Departments of Mathematics, Technion.}}
\title{Reconstruction and Secrecy under Approximate Distance Queries}
\begin{document}

\maketitle

\begin{abstract}
  Consider the task of locating an unknown target point using approximate distance queries: in each round, a reconstructor selects a reference point and receives a noisy version of its distance to the target. This problem arises naturally in various contexts—ranging from localization in GPS and sensor networks to privacy-aware data access—and spans a wide variety of metric spaces. It is relevant from the perspective of both the reconstructor (seeking accurate recovery) and the responder (aiming to limit information disclosure, e.g., for privacy or security reasons). We study this reconstruction game through a learning-theoretic lens, focusing on the rate and limits of the best possible reconstruction error. Our first result provides a tight geometric characterization of the optimal error in terms of the Chebyshev radius, a classical concept from geometry. This characterization applies to all compact metric spaces (in fact, even to all totally bounded spaces) and yields explicit formulas for natural metric spaces. Our second result addresses the asymptotic behavior of reconstruction, distinguishing between pseudo-finite spaces—where the optimal error is attained after finitely many queries—and spaces where the approximation curve exhibits a nontrivial decay. We characterize pseudo-finiteness for convex Euclidean spaces.
\end{abstract}

\section{Introduction}
\label{sec:intro}

In the \emph{reconstruction game}, a reconstructor seeks to locate an unknown point \( x^\star \) in a metric space \( (X, \dist) \) using a sequence of approximate distance queries. In each round, the reconstructor selects a query point \( q_t \in X \) and receives a response \( \hat{d}_t \) that approximates the true distance \( \dist(q_t, x^\star) \). The approximation is controlled by two error parameters: \( \epsilon \ge 0 \), which bounds the multiplicative error, and \( \delta \ge 0 \), which bounds the additive error. Specifically, the response satisfies \( \hat{d}_t =_{\epsilon,\delta} \dist(q_t, x^\star) \), where
\[
x =_{\epsilon, \delta} y \quad \text{means that} \quad x \leq (1+\epsilon)y + \delta \quad \text{and} \quad y \leq (1+\epsilon)x + \delta.
\]
After a bounded number of such queries, the reconstructor outputs a guess \( \hat{x} \in X \) and aims to minimize the reconstruction error \( \dist(\hat{x}, x^\star) \).

This simple game arises in a wide range of natural scenarios. In \emph{privacy-preserving data analysis}, it models the trade-off between utility and privacy: a responder must answer queries while protecting sensitive data, as in the foundational work of \citep{DinurN03} that initiated the study of differential privacy\footnote{The reconstruction model studied by Dinur and Nissim uses counting (or linear) queries. Still, it is essentially equivalent to our model with distance queries. We elaborate on this connection at \cref{ex:DinurNissim_equivalence}.}. In \emph{computational geometry}, related questions arise when inferring geometric structures from noisy measurements~\citep{DCGHandbook-18}. In \emph{remote sensing}, satellites and sensors reconstruct physical information—such as terrain or atmospheric properties—from indirect and error-prone signals~\citep{Twomey1977}. Similar structures also appear in \emph{learning theory}: for instance, hypothesis selection and distribution learning via statistical queries can be framed as reconstruction problems over suitable metric spaces.

The reconstruction game captures a natural tension between two objectives: recovering hidden information and limiting what can be revealed. From the reconstructor’s perspective, the task is to approximate an unknown point from noisy distance measurements. This challenge arises in a variety of applications, including navigation, search-and-rescue, and remote sensing, where inference must be made under uncertainty. On the other side, the responder may wish to share useful information while restricting what can be inferred—whether for reasons of privacy, security, or resource constraints. This interplay between noisy access and limited disclosure makes the model relevant across several domains.

While this framework treats the reconstructor and the responder symmetrically, assigning equal roles to both, our technical results focus on understanding the limits of what the reconstructor can achieve. We present two main contributions:

\paragraph{Limit of optimal reconstruction (Theorem~\ref{thm:Jung}).}
We characterize the optimal approximation error that the reconstructor can guarantee in the limit, as the number of queries tends to infinity. This error depends on the metric space and the approximation parameters \( \epsilon \) and \( \delta \), and our characterization applies to all totally bounded metric spaces. The result is expressed in terms of a classical geometric quantity: the Chebyshev radius. This limiting error plays a role analogous to the Bayes optimal error in statistical learning: it captures the best achievable performance in the presence of noise, regardless of the specific strategy or number of queries. Our result provides a geometric characterization and interpretation of this optimum in the context of the reconstruction game.

\paragraph{Pseudo-finite Spaces (Theorem~\ref{thm:pseudo-finiteness}).}
Beyond the limiting error, a central question concerns the rate at which this optimum is approached as a function of the number of queries. This question is inherently rich and depends delicately on the geometry of the space. It is analogous to the study of \emph{learning curves} in statistical learning theory, which quantify how the performance of a learner improves with more data. We initiate the study of this question in our setting by identifying and analyzing a fundamental distinction between \emph{pseudo-finite} spaces—where finitely many queries suffice to reach the optimum—and spaces where convergence is gradual. We show that this notion is already subtle and nontrivial, and we provide a characterization of pseudo-finiteness for convex Euclidean spaces.

\subsection{Problem Setup and Main Results}

We now define the \emph{reconstruction game}, a formal interaction between two players: a \emph{reconstructor} (RC) and a \emph{responder} (RSP). The game takes place in a metric space \( (X, \dist) \), where \( X \) is the domain and \( \dist: X \times X \to \mathbb{R}_{\ge 0} \) is a distance function.
The interaction is governed by two error parameters: \( \epsilon \ge 0 \), which controls multiplicative distortion, and \( \delta \ge 0 \), which controls additive distortion. 

\begin{tcolorbox}[title=The Reconstruction Game]
The game is parameterized by \( \epsilon, \delta \ge 0 \), and is played over \( T \) rounds in a metric space \( (X, \dist) \).

Each round \( t = 1, \dots, T \) proceeds as follows:

\begin{enumerate}
    \item The reconstructor submits a query point \( q_t \in X \).
    \item The responder returns a value \( \hat{d}_t \), which approximates the true distance to some secret point. 
    \end{enumerate}
The responder must ensure that all answers given in the game remain jointly consistent with at least one point \( x^\star\). That is: 
\((\exists x^\star \in X)(\forall t\leq T):\hat{d}_t =_{\epsilon,\delta} \dist(q_t, x^\star),\) where $x=_{\epsilon,\delta} y$ means that
    \(\hat{d}_t \leq (1+\epsilon)\dist(q_t, x^\star) + \delta\) and \(\dist(q_t, x^\star) \leq (1+\epsilon)\hat{d}_t + \delta.\)

\medskip

At the end of the game, the reconstructor outputs a final guess \( \hat{x}_T \in X \).
The reconstruction error is defined as the worst-case distance to a consistent point:
\(
\sup_{x^\star}\dist(\hat{x}_T, x^\star),
\)
where $x^\star$ ranges over all consistent points. 
\end{tcolorbox}


The reconstruction game studied in this work generalizes the task of determining the \emph{sequential metric dimension} (SMD), which was originally introduced in the \emph{noiseless} setting for graphs by~\citet{seager2013sequential}. The SMD captures the minimum number of exact distance queries required to identify an unknown point exactly, and has been studied in finite metric spaces induced by graphs~\citep{bensmail2020sequential,SeqMetricDimRand21,MetricDimensionSurvey2023}, with particular emphasis on the gap between sequential and static metric dimension.\footnote{The static metric dimension is the minimum number of reference points needed to uniquely determine any point in the space based on its distances to those references. It corresponds to the non-adaptive variant of our setting, where all queries are fixed in advance.}

As another example, the counting-query model introduced by \citet{DinurN03} in their foundational work can also be naturally viewed as a special case of our reconstruction game on the Boolean cube endowed with the Hamming metric:

\begin{example}[From counting queries to distance queries]
\label{ex:DinurNissim_equivalence}
In the counting-query model \citep{DinurN03}, the dataset is a binary vector
\(D=(d_1,\ldots,d_n)\in\{0,1\}^n\).
At round \(t=1,\ldots,T\), the reconstructor chooses a subset \(q_t\subseteq[n]\) and receives
\[
a_t \;=\; \sum_{i\in q_t} d_i \;+\; \eta_t,
\qquad |\eta_t|\le \delta.
\]
This game is not syntactically a metric-distance game, yet it is \emph{equivalent} to our distance-query model on the Boolean cube with the Hamming metric, in the 
sense that counting queries and Hamming-distance queries simulate each other with at most a two-query overhead per round.
Full details of the simulation appear in Appendix~\ref{app:examples}, Example~\ref{ex:app_DinurNissim_equivalence}.
\end{example}

\paragraph{A Priori vs.\ A Posteriori Responder.}
There are two natural variants of the reconstruction game, which differ in when the responder commits to the secret point.

In the \emph{a priori} version, the responder selects a secret point \( x^\star \in X \) at the beginning of the game and must answer all queries consistently with that fixed point.
In contrast, the \emph{a posteriori} version (which we adopt) allows the responder to wait \textcolor{black}{until the end of the game when the reconstructor selects her guess $\hat{x}_T$,} before selecting the secret point \( x^\star \). 

Note that for deterministic reconstructors, the a priori and a posteriori models are equivalent: any a posteriori responder can be simulated by an a priori one, simply by anticipating all queries of the reconstructor and precomputing a worst-case consistent point in advance. “Deterministic” here refers only to the absence of internal randomness and does not restrict adaptivity; a deterministic reconstructor may choose each query based on the entire interaction so far, whereas “non-adaptive” denotes the special case in which all queries are fixed in advance.

\begin{remark*}
 We define the game using the a posteriori model because our results focus on the capabilities of the reconstructor in the \emph{worst-case} setting. From this viewpoint, the most meaningful formulation is one that allows the responder maximal flexibility, making the task of reconstruction as difficult as possible.
\end{remark*}

\subsection{Optimal Reconstruction Distance}
At each point in the game, the sequence of query--response pairs received so far determines a \emph{feasible region}—the set of points in \( X \) that are consistent with all previous answers under the error model. The size and geometry of this region reflect the remaining uncertainty about the secret point. From the reconstructor’s perspective, the goal is to make this region as small as possible, ideally identifying a point that is close to every element in it.
We measure the performance of a reconstruction strategy by the worst-case distance between the output \( \hat{x}_T \) and any point in the feasible region. The key quantity we study is the optimal worst-case guarantee achievable by the reconstructor after \( T \) queries, denoted by
\begin{equation}
\label{eq:optimal}
  \OPT_X(T,\epsilon,\delta) := \adjustlimits \inf_{\mathrm{RC}} \sup_{\mathrm{RSP}} \sup_{x \in \Phi_T} \dist(\hat{x}_T, x).
\end{equation}
Here, \( \Phi_T \subseteq X \) is the \emph{feasible region}—the set of points that remain consistent with the transcript (i.e., the sequence of queries and responses) \( \{(q_t, \hat{d}_t)\}_{t=1}^T \)
\begin{equation}
\label{eq:feasible_region}
\Phi(\{q_i, \hat{d}_i\}_{i=1}^T) := \left\{ x \in X \mid \text{ for all } 1 \le i \le T \colon \,\,  \dist(x, q_i) =_{\epsilon,\delta} \hat{d}_i
  \right\}.
\end{equation}
The infimum ranges over all strategies employed by the reconstructor, and each such strategy is evaluated in the worst case: against the most adversarial responder strategy (subject to consistency), and with respect to the most distant feasible point. 
\textcolor{black}{For randomized reconstructors, we interpret \Cref{eq:optimal} by replacing \( \dist(\hat{x}_T, x) \) with \( \mathbb{E}[\dist(\hat{x}_T, x)] \), where the expectation is over the internal randomness of the reconstructor. For simplicity of presentation, we assume the reconstructor is deterministic; however, all of our results and proofs extend to the randomized setting.}

Much of our focus will be on understanding how this function behaves as \( T \to \infty \), and how it depends on the geometry of the underlying metric space \( X \). 
Our first main result concerns the asymptotic quantity
\[
\OPT_X(\epsilon, \delta) := \lim_{T \to \infty} \OPT_X(T, \epsilon, \delta),
\]
which captures the best reconstruction error the reconstructor can guarantee in the limit, as the number of queries grows\footnote{As shown in \cref{app:feasible}, specifically in \cref{cor:monotone}, the function \( \OPT_X(T, \epsilon, \delta) \) is monotonically non-increasing in \( T \), so this limit always exists.}.

\paragraph{Chebyshev Radius.} 
To characterize \( \OPT_X(\epsilon, \delta) \), we rely on a classical geometric quantity called the \emph{Chebyshev radius}, which captures how well a set can be enclosed by a ball. Let \( (X, \dist) \) be a metric space, and let \( \alpha > 0 \) be a parameter. For any subset \( S \subseteq X \), we denote its diameter by \( \diam(S) := \sup_{x,y \in S} \dist(x, y) \). The \emph{Chebyshev radius} of \( S \), denoted \( r(S) \), is defined as
\[
r(S) := \adjustlimits \inf_{x \in X} \sup_{y \in S} \left[\dist(x, y) \right],
\]
that is, the smallest radius for which some ball centered in \( X \) contains all of \( S \).

We will also rely on the following quantity that captures the worst-case relationship between sets of diameter at most \( \alpha \) and the radius of their smallest enclosing ball:
\[
\mathtt{e}_X(\alpha) := \sup_{S:\,\diam(S) \leq \alpha} r(S).
\]
We call this function \emph{diameter-radius profile}. 
For example, in Euclidean space \( (\mathbb{R}^n, \ell_2) \), it is known that for all \( \alpha > 0 \),
\(
\mathtt{e}_X(\alpha) = \sqrt{\frac{n}{2(n+1)}} \cdot \alpha,
\)
as shown, for instance, in \cite{Blumenthal1953}'s monograph.


Before stating our first main result, we recall a standard notion from metric geometry. A metric space \( (X, \dist) \) is said to be \emph{totally bounded} if for every \( r > 0 \), there exists a finite cover of \( X \) by balls of radius \( r \). This condition is a common weakening of compactness that still ensures many desirable finiteness properties. As we will see in Section~\ref{sec:examples}, this assumption is necessary for the theorem’s conclusion; without it, the game can trivialize, allowing the responder to force an approximation error equal to the space’s diameter.


\begin{tcolorbox}[title = Tight Error via Chebyshev Radius]
\begin{theorem}\label{thm:Jung}
Let \( X \) be a totally bounded metric space. Then, for any \( \epsilon, \delta \ge 0 \),
\[
\OPT_X(\epsilon,\delta) = \mathtt{e}_X\bigl((2+\epsilon)\delta\bigr).
\]
Moreover, if the distance \( (2+\epsilon)\delta \) is realized in \( X \), i.e., there exists a pair of points at this distance, then
\[
\frac{1}{2}(2+\epsilon)\delta \leq \OPT_X(\epsilon,\delta) \leq (2+\epsilon)\delta.
\]
\end{theorem}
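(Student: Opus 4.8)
The plan is to prove the equality by matching a lower and an upper bound on $\OPT_X(\epsilon,\delta)$, and then to read off the ``moreover'' estimate from two elementary facts about the profile $\mathtt{e}_X$. For the \emph{lower bound}, fix any $S\subseteq X$ with $\diam(S)\le(2+\epsilon)\delta$; I claim the responder can keep $S\subseteq\Phi_T$ for all $T$. Given a query $q$, set $a=\inf_{y\in S}\dist(y,q)$ and $b=\sup_{y\in S}\dist(y,q)$; by the triangle inequality $b-a\le\diam(S)\le(2+\epsilon)\delta$, so $b\le a+(2+\epsilon)\delta\le(1+\epsilon)^2a+(2+\epsilon)\delta$ (using $(1+\epsilon)^2\ge1$), and a direct check shows the answer $\hat d:=(1+\epsilon)a+\delta$ satisfies $\dist(y,q)=_{\epsilon,\delta}\hat d$ for every $y\in S$ at once. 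Hence whatever guess $\hat x_T\in X$ the reconstructor makes, its error is at least $\sup_{y\in S}\dist(\hat x_T,y)\ge r(S)$; taking the supremum over all admissible $S$ gives $\OPT_X(T,\epsilon,\delta)\ge\mathtt{e}_X((2+\epsilon)\delta)$ for every $T$, hence also in the limit.

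For the \emph{upper bound}, total boundedness enters: if the reconstructor queries the points of a finite $\eta$-net of $X$, then the resulting feasible region $\Phi$ has small diameter. Indeed, for $x,y\in\Phi$ choose a net point $p$ with $\dist(x,p)<\eta$; the received value satisfies $\hat d_p\le(1+\epsilon)\dist(x,p)+\delta<(1+\epsilon)\eta+\delta$, which forces $\dist(y,p)\le(1+\epsilon)\hat d_p+\delta<(1+\epsilon)^2\eta+(2+\epsilon)\delta$, and so $\dist(x,y)<(2+\epsilon)\delta+O(\eta)$. Outputting an (approximate) Chebyshev center of $\Phi$ then costs at most $r(\Phi)+\eta\le\mathtt{e}_X(\diam(\Phi))+\eta\le\mathtt{e}_X\bigl((2+\epsilon)\delta+O(\eta)\bigr)+\eta$, and since $\OPT_X(T,\epsilon,\delta)$ is non-increasing in $T$, letting $\eta\to0$ yields $\OPT_X(\epsilon,\delta)\le\lim_{\beta\downarrow(2+\epsilon)\delta}\mathtt{e}_X(\beta)$.

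The step I expect to be the main obstacle is closing the gap between this right-hand limit and the value $\mathtt{e}_X((2+\epsilon)\delta)$ itself, since $\mathtt{e}_X$ need not be right-continuous (for instance on $X=[0,1)\cup\{2\}$ the profile jumps at $\alpha=1$, although there a single well-chosen query already pins $\Phi$ to diameter exactly $1$). The fix is to show that the feasible region produced by a rich enough set of queries is not merely \emph{of} diameter $(2+\epsilon)\delta+O(\eta)$ but is actually Hausdorff-$O(\eta)$-close to a subset of $X$ of diameter $\le(2+\epsilon)\delta$ — equivalently, that the reconstructor can drive $\diam(\Phi_T)$ down to exactly $(2+\epsilon)\delta$ in the limit — which uses the structure of a feasible region as an intersection of metric annuli of controlled width rather than the crude diameter bound above. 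Granting this refinement one gets $r(\Phi)\le\mathtt{e}_X((2+\epsilon)\delta)+o(1)$, and the two bounds meet.

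Finally, the ``moreover'' part is immediate once the identity is in hand: $r(S)\le\diam(S)$ for every $S$ gives $\mathtt{e}_X(\alpha)\le\alpha$ and hence $\OPT_X(\epsilon,\delta)\le(2+\epsilon)\delta$; and if $p,p'\in X$ realize the distance $\alpha=(2+\epsilon)\delta$, then $S=\{p,p'\}$ is admissible and $\max(\dist(c,p),\dist(c,p'))\ge\frac12(\dist(c,p)+\dist(c,p'))\ge\frac12\dist(p,p')$ for every $c\in X$, so $\mathtt{e}_X(\alpha)\ge\frac12(2+\epsilon)\delta$ and therefore $\OPT_X(\epsilon,\delta)\ge\frac12(2+\epsilon)\delta$.
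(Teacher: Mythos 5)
Your lower bound, your ``moreover'' estimates, and the first half of your upper bound (query an $\eta$-net, force $\diam(\Phi)\le(2+\epsilon)\delta+O(\eta)$, output an approximate Chebyshev center) coincide with the paper's argument. The gap is the last step of the upper bound: you correctly isolate the problem of passing from $\lim_{\beta\downarrow(2+\epsilon)\delta}\mathtt{e}_X(\beta)$ to $\mathtt{e}_X\bigl((2+\epsilon)\delta\bigr)$, but your proposed resolution (that $\Phi$ is Hausdorff-$O(\eta)$-close to a subset of $X$ of diameter at most $(2+\epsilon)\delta$) is only asserted, and it is precisely the hard part; as written the proposal does not establish $\OPT_X(\epsilon,\delta)\le\mathtt{e}_X\bigl((2+\epsilon)\delta\bigr)$. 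The paper closes this step by a different route: it proves in \cref{app:right_continuity_e} that $\mathtt{e}_X$ \emph{is} right-continuous for every totally bounded $X$, via compactness of the hyperspace $(\cK(\hat X),d_H)$, continuity of $\diam$ and $r$ there, and the reduction $\mathtt{e}_X=\mathtt{e}_{\hat X}$.

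That said, your skepticism about right-continuity is well founded, and your example is worth taking seriously. For $X=[0,1)\cup\{2\}$ (totally bounded), no set of diameter $\le 1$ contains both $2$ and a point of $[0,1)$, so $\mathtt{e}_X(1)=\tfrac12$; yet $S_\eta=\{2\}\cup[1-\eta,1)$ has diameter $1+\eta$ and Chebyshev radius $1$ in $X$ (taking centers $q\to 1^-$ gives $\sup_{y\in S_\eta}\dist(q,y)\to 1$, and every admissible center must cover the point $2$), so $\mathtt{e}_X(1+\eta)\ge 1$ for all $\eta>0$ and $\mathtt{e}_X$ jumps at $\alpha=1$. This contradicts \cref{lem:right_continious_totbounded}: the step $\mathtt{e}_X=\mathtt{e}_{\hat X}$ fails because approximating a subset of the completion (here $\{1,2\}\subseteq\hat X$) from inside $X$ strictly increases its diameter, so the approximants are not admissible at the same scale $\alpha$. (The compact case, \cref{lem:right_continious_compact}, is unaffected.) In your example the theorem itself survives--querying $q=2$ forces the feasible region to be either $\{2\}$ or an interval of length at most $1$ inside $[0,1)$, so the extremal sets $S_\eta$ never arise as feasible regions--which is evidence for the refinement you sketch, but neither you nor the paper as written supplies a complete proof of the upper bound for totally bounded, non-compact $X$; you would need to actually prove your structural claim about feasible regions (or restrict to compact spaces) to close the argument.
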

\end{tcolorbox}

This result expresses the limiting reconstruction error in terms of the function \( \mathtt{e}_X(\cdot) \), which captures the worst-case Chebyshev radius over sets of bounded diameter. While the definition of \( \mathtt{e}_X((2+\epsilon)\delta) \) may seem somewhat cryptic at first glance, it is often closely tied to the scale of noise introduced by the responder. Specifically, in many natural spaces, it holds that
\[
\OPT_X(\epsilon, \delta) = \Theta\bigl((2+\epsilon)\delta\bigr).
\]
This follows from the next general observation, which bounds the ratio between the Chebyshev radius and the diameter of a set in any metric space.

\begin{observation}
\label{obs:jung-bounds}
In any metric space \( (X, \dist) \) and every \( \alpha > 0 \) which is realized as a distance in the space,
\(
\frac{1}{2}\alpha \le \mathtt{e}_X(\alpha) \le \alpha.
\)
The upper bound follows because any set of diameter \( \alpha \) can be trivially enclosed in a ball of radius~\(\alpha \). The lower bound holds because no ball of radius \( r < \alpha/2 \) can contain two points at distance \( \alpha \).
\end{observation}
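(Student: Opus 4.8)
The plan is to unwind both inequalities directly from the definitions of $r(\cdot)$ and $\mathtt{e}_X(\cdot)$; no machinery beyond the triangle inequality is needed, so this is really just an expansion of the parenthetical remark already attached to the statement.

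For the \emph{upper bound} $\mathtt{e}_X(\alpha)\le\alpha$, I would fix an arbitrary nonempty $S\subseteq X$ with $\diam(S)\le\alpha$ and an arbitrary point $p\in S$. Since $\dist(p,y)\le\diam(S)\le\alpha$ for every $y\in S$, the point $p$ is a legitimate center witnessing $r(S)=\inf_{x\in X}\sup_{y\in S}\dist(x,y)\le\sup_{y\in S}\dist(p,y)\le\alpha$. Taking the supremum over all admissible $S$ yields $\mathtt{e}_X(\alpha)\le\alpha$. The only bookkeeping point is the empty set, for which $r(\emptyset)=0$ under the usual convention, so it does not affect the supremum.

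For the \emph{lower bound} $\mathtt{e}_X(\alpha)\ge\tfrac12\alpha$, I would invoke the hypothesis that $\alpha$ is realized as a distance: choose $a,b\in X$ with $\dist(a,b)=\alpha$ and set $S=\{a,b\}$, which has $\diam(S)=\alpha\le\alpha$ and is therefore admissible in the supremum defining $\mathtt{e}_X(\alpha)$. For any candidate center $x\in X$, the triangle inequality gives $\alpha=\dist(a,b)\le\dist(a,x)+\dist(x,b)\le 2\max\{\dist(a,x),\dist(x,b)\}=2\sup_{y\in S}\dist(x,y)$, hence $\sup_{y\in S}\dist(x,y)\ge\alpha/2$. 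Since $x$ was arbitrary, $r(S)\ge\alpha/2$, so $\mathtt{e}_X(\alpha)\ge r(S)\ge\alpha/2$.

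There is no genuine obstacle here; the claim is an immediate consequence of the definitions and the triangle inequality. The only two points worth stating explicitly are that the two-point set $\{a,b\}$ actually witnesses the lower bound — which is precisely where the hypothesis "$\alpha$ is realized as a distance in the space" is used (without it the infimum in $\mathtt{e}_X(\alpha)$ could range over sets of strictly smaller diameter, or over no admissible set at all) — and the harmless empty-set convention in the supremum defining $\mathtt{e}_X$.
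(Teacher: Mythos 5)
Your proof is correct and follows essentially the same route as the paper: the upper bound by centering a ball at any point of $S$, and the lower bound by taking a two-point set realizing the distance $\alpha$ and applying the triangle inequality to any candidate center. Nothing to add.
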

The bounds \( \alpha/2 \) and \( \alpha \) are tight: they are attained by natural totally bounded metric spaces, as we will demonstrate through examples in Section~\ref{sec:examples}.


\subsection{Excess Reconstruction Error}
From a learning-theoretic perspective, the limiting error \( \OPT_X(\epsilon,\delta) \) plays a role analogous to the \emph{Bayes optimal} error in statistical learning: it represents the best achievable performance under the constraints of the model. This motivates the study of the \emph{excess reconstruction error}—the difference between the error achieved after \( T \) queries and this asymptotic optimum:
\(
\OPT_X(T, \epsilon, \delta) - \OPT_X(\epsilon, \delta).
\)
Understanding the rate at which this quantity decays as \( T \to \infty \) is a natural next step.

This question presents significant challenges and is highly dependent on the geometry of the underlying space. As a first step in this direction, our second main result focuses on a basic dichotomy: between spaces where convergence to the optimal error is trivial—i.e., achieved after finitely many queries—and all others. We formalize this notion through the following definition:
\begin{definition}[Pseudo‑finite Spaces]\label{def:pseudofinite}
A metric space $(X,\dist)$ is said to be \emph{$(\epsilon, \delta)$‑pseudo‑finite} if there exists a finite constant \( T_{X,\epsilon,\delta} < \infty \) such that
\[
\OPT_X(T, \epsilon,\delta) = \OPT_X(\epsilon,\delta)
\quad\text{for all } T \ge T_{X,\epsilon,\delta}.
\]
\end{definition}
It is easy to see that any finite metric space is $(\epsilon,\delta)$‑pseudo‑finite for all values of $\epsilon, \delta \ge 0$: the reconstructor can query every point in the space, and no additional information can be obtained once all points have been queried.
Another example of pseudo-finiteness is provided by finite-dimensional Euclidean spaces. The space $\mathbb{R}^n$ is $(0,0)$‑pseudo‑finite\footnote{Note that noise plays an important role in this example: 
\(\RR^n\) is \emph{not} pseudo-finite whenever the noise parameters are nonzero and \(n \ge 2\).} since the reconstructor can determine the exact location of the secret by querying $n+1$ affinely independent points (see, e.g.\citep{MetricDimensionSurvey2023}). In contrast, we will see in the next section an example of a totally bounded metric space that is not $(0,0)$‑pseudo‑finite.

We now turn our attention to Euclidean spaces. Naturally, we begin with the simplest case: the real line. Despite its simplicity, the real line exhibits a nuanced pseudo-finiteness behavior that depends on the error parameters. In particular, pseudo-finiteness holds when there is no multiplicative noise, but breaks down as soon as any multiplicative distortion is allowed:

\begin{proposition}[Pseudo-finiteness of the real line]
\label{prop:real-line-pseudofinite}
Let \( X = [0,1] \subseteq \mathbb{R} \) equipped with the standard Euclidean metric. Then:
(i) For every \( \delta \ge 0 \), the space \( X \) is \( (0,\delta) \)-pseudo-finite. (ii) For every \( \epsilon > 0 \) and every \( \delta \ge 0 \), the space \( X \) is not \( (\epsilon,\delta) \)-pseudo-finite.
\end{proposition}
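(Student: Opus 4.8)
For part (i), I would first unwind Theorem~\ref{thm:Jung}: on $X=[0,1]$ with $\epsilon=0$ we have $\OPT_X(0,\delta)=\mathtt{e}_X(2\delta)$, which is $\delta$ for $\delta\le 1/2$ (a diameter-$2\delta$ interval is $[a,a+2\delta]$, enclosed by a ball of radius $\delta$) and $1/2$ for $\delta\ge 1/2$ (since $\diam(X)=1$). The plan is to exhibit a finite adaptive strategy matching this value. Query $q=0$ and $q=1$; from $\hat d_0$ and $\hat d_1$ the feasible region $\Phi$ is an interval $[\ell,r]\subseteq[0,1]$, being an intersection of at most two intervals of radius $\delta$ around the nominal preimages $\hat d_0$ and $1-\hat d_1$. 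The reconstructor outputs the midpoint of $\Phi$; the reconstruction error is then $(\diam\Phi)/2$, and the key step is showing $\diam\Phi\le 2\delta$ always (each constraint $\dist(x,q_i)=_{0,\delta}\hat d_i$ confines $x$ to an interval of length $2\delta$, clipped to $[0,1]$), and $\diam\Phi\le 1$ trivially. Hence error $\le\min(\delta,1/2)=\OPT_X(0,\delta)$ after just two queries, so $T_{X,0,\delta}=2$ works. (One should double-check the boundary clipping when $\hat d_0<\delta$, etc.; this only shrinks $\Phi$, so it is harmless.)

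For part (ii), the task is to show that for every $\epsilon>0$ and every $\delta\ge 0$, no finite number of queries ever reaches $\OPT_X(\epsilon,\delta)=\mathtt{e}_X((2+\epsilon)\delta)$; in fact I expect to show the stronger statement that $\OPT_X(T,\epsilon,\delta)$ is strictly decreasing — or at least strictly larger than the limit — for all $T$. The mechanism is that multiplicative noise scales with distance: a point far from the target returns a large nominal distance, and the additive-plus-multiplicative slack around it is an interval of length roughly $2(\epsilon\cdot\dist+\delta)/(1+\epsilon)\cdot$(something), which is bounded below by a positive constant independent of how cleverly the reconstructor localizes. The concrete plan: fix any finite transcript after $T$ queries $q_1,\dots,q_T\in[0,1]$. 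Show the feasible region $\Phi_T$ — which the adversary gets to make worst-case — can always be forced to have diameter bounded below by some $\gamma(\epsilon,\delta)>\mathtt{e}_X((2+\epsilon)\delta)\cdot 2$ is too strong; rather, show $r(\Phi_T)$ can be forced strictly above $\mathtt{e}_X((2+\epsilon)\delta)$.

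The cleanest route is probably a direct adversary/compactness argument. Suppose for contradiction $X$ is $(\epsilon,\delta)$-pseudo-finite with threshold $T$. Consider the adversary who answers each query $q_t$ with $\hat d_t=(1+\epsilon)\dist(q_t,x^\star)+\delta$ (the upper-consistent extreme) for a target $x^\star$ to be chosen, or more robustly, plays to keep the feasible interval as large as possible. The point is that the feasible constraint from query $q$ with answer $\hat d$ is $x\in[\,(\hat d-\delta)/(1+\epsilon)\,,\,(1+\epsilon)\hat d+\delta\,]$ reflected appropriately around $q$ — an interval whose length grows with $\hat d$, hence with $|q-x^\star|$. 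By placing the true target near an endpoint (say $x^\star$ near $1$) and answering so that every query $q_t\le 1/2$ yields a wide feasible band near $1$, the intersection $\Phi_T$ over finitely many such bands still contains a nondegenerate interval around $x^\star$ of length bounded below by a constant $c(\epsilon)>0$; its Chebyshev radius (here just half its length, since $X$ is an interval and the center stays interior or we take one-sided) exceeds $\mathtt{e}_X((2+\epsilon)\delta)$ because the latter equals $\min((2+\epsilon)\delta/1,\ 1/2)$ — wait, on $[0,1]$, $\mathtt{e}_X((2+\epsilon)\delta)=(2+\epsilon)\delta/2$ if $(2+\epsilon)\delta\le 1$ and $1/2$ otherwise. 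So I need the forced feasible interval to have half-length strictly exceeding $(2+\epsilon)\delta/2$, i.e.\ length strictly exceeding $(2+\epsilon)\delta$.

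That is exactly where multiplicative noise helps: an answer $\hat d=(1+\epsilon)D+\delta$ to a query at distance $D$ from $x^\star$ is consistent with all $x$ having $\dist(x,q)\ge(\hat d-\delta)/(1+\epsilon)=D$ and $\dist(x,q)\le(1+\epsilon)\hat d+\delta=(1+\epsilon)^2 D+(2+\epsilon)\delta$; on one side of $q$ this is an interval of length $(1+\epsilon)^2D+(2+\epsilon)\delta-D=((1+\epsilon)^2-1)D+(2+\epsilon)\delta=(\epsilon^2+2\epsilon)D+(2+\epsilon)\delta$, strictly more than $(2+\epsilon)\delta$ whenever $D>0$. So if the reconstructor only ever queries points at distance $\ge D_0>0$ from $x^\star$, every feasible band has length $\ge(\epsilon^2+2\epsilon)D_0+(2+\epsilon)\delta$, and — crucially — these bands can be made nested or at least their intersection retains length $>(2+\epsilon)\delta$ by choosing $x^\star$ appropriately after seeing all $T$ queries (the a posteriori model!). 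The main obstacle, and the part needing care, is handling queries $q_t$ that are close to $x^\star$: since there are finitely many queries, the adversary picks $x^\star$ at a point avoided by all of them by a margin — e.g.\ if $q_1<q_2<\dots<q_T$ partition $[0,1]$ into $T+1$ subintervals, one has length $\ge 1/(T+1)$, place $x^\star$ at its midpoint so $D_0\ge 1/(2(T+1))>0$. Then combine the per-query bands: the feasible region around $x^\star$ has length $\ge\min_t[(\epsilon^2+2\epsilon)|q_t-x^\star|+(2+\epsilon)\delta]\ge(\epsilon^2+2\epsilon)D_0+(2+\epsilon)\delta>(2+\epsilon)\delta$, giving $\OPT_X(T,\epsilon,\delta)\ge\frac12\big((\epsilon^2+2\epsilon)D_0+(2+\epsilon)\delta\big)>\mathtt{e}_X((2+\epsilon)\delta)=\OPT_X(\epsilon,\delta)$ (using $(2+\epsilon)\delta\le 1$; the case $(2+\epsilon)\delta>1$ is even easier since then the whole space has the right diameter and a similar slack argument applies, or one notes $\OPT_X(\epsilon,\delta)$ is already at its ceiling but the finite-$T$ value can still be shown to exceed it — actually if $(2+\epsilon)\delta\ge 1$ then $\mathtt{e}_X=1/2$ and one must instead argue the feasible region can be forced to be all of $[0,1]$ or close to it for any fixed $T$, which the same avoidance trick gives). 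This contradicts pseudo-finiteness, completing part (ii). The one genuinely delicate point is verifying that the finitely many one-sided bands indeed intersect in an interval of the claimed length around $x^\star$ — each band contains a full neighborhood of $x^\star$ on the side away from its query point, so their intersection contains a neighborhood of $x^\star$ of radius $\ge\min_t$ (half-contributions), which is the estimate used above.
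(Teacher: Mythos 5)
Part (i) of your proposal is correct and is essentially the paper's argument: the paper queries a single endpoint (one query already confines the secret to an interval of length \(2\delta\), since \(\dist(x,0)=x\) on \([0,1]\)); your two endpoint queries and the identification \(\OPT_X(0,\delta)=\mathtt{e}_X(2\delta)=\min(\delta,1/2)\) via \cref{thm:Jung} are fine.

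Part (ii) has two genuine gaps. The first is adaptivity. The a posteriori model lets the responder defer the choice of the \emph{secret point} \(x^\star\), not the choice of the \emph{answers}: \(\hat d_t\) must be issued before \(q_{t+1}\) is revealed, and \(q_{t+1}\) depends on \(\hat d_t\). So the plan ``see all \(T\) queries, place \(x^\star\) in the largest gap, then answer \(\hat d_t=(1+\epsilon)\dist(q_t,x^\star)+\delta\)'' is circular and is not a legal responder strategy; against a binary-search-style reconstructor the queries chase the answers, and the bound \(D_0\ge 1/(2(T+1))\) cannot be guaranteed. The second gap is geometric and would bite even against a non-adaptive reconstructor: answering with the upper-consistent extreme puts \(x^\star\) exactly on the \emph{inner} boundary of the annulus \(\{x:\dist(x,q_t)\in[D_t,\,(1+\epsilon)^2D_t+(2+\epsilon)\delta]\}\), so each query excludes the open ball of radius \(D_t\) around \(q_t\), whose closure touches \(x^\star\). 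A query to the left of \(x^\star\) thus leaves only the one-sided band \([x^\star,\,x^\star+L_t]\), a query to the right leaves only \([x^\star-L_s,\,x^\star]\), and if the reconstructor queries both endpoints \(0\) and \(1\) the intersection near \(x^\star\) collapses to \(\{x^\star\}\) — the claimed interval of length \(>(2+\epsilon)\delta\) does not survive. The paper's proof avoids both problems by maintaining, \emph{online}, a pair of points at distance exactly \((2+\epsilon)\delta\) together with \(\alpha_t\)-neighborhoods inside the feasible region: it answers with the balanced value \(r^\star_q\) of \cref{lem:biggest_neighborhood} rather than an extreme, and when a query lands too close to the pair it translates the pair away from the query by at most \(\alpha_t-\alpha_{t+1}\). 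The surviving radius shrinks by the fixed factor \(\tfrac{(1+\epsilon)^2-1}{2(1+\epsilon)^2}\) per round but never vanishes, so the feasible region always contains a set of Chebyshev radius strictly above \((2+\epsilon)\delta/2=\OPT_X(\epsilon,\delta)\). Your write-up would need to be restructured around some such invariant to go through.
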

\noindent
This proposition follows from our general result below (Theorem~\ref{thm:pseudo-finiteness}), but can also be derived more directly in this special case. When \( \epsilon = 0 \), the reconstructor can query one of the endpoints \( q_1 \in \{0,1\} \); the response confines the secret to an interval of length \( 2\delta \), and outputting its midpoint yields an error of at most \( \delta \), which is optimal\footnote{The optimality of the error \(\delta\) on the interval, for sufficiently small \(\delta\), follows from the fact that \(\mathtt e_{[0,1]}(\delta) = \delta\) together with Theorem~\ref{thm:Jung}.}. When \( \epsilon > 0 \), the responder can use a binary-search-like strategy to ensure that the feasible region always contains an interval of length strictly greater than \( (2+\epsilon)\delta \), thereby preventing the reconstructor from reaching the optimum in finitely many steps.

How about higher-dimensional Euclidean spaces—do they exhibit the same behavior as the real line with respect to pseudo-finiteness? Our second main result addresses this question for the class of convex subsets of Euclidean space. To state it, we recall that the \emph{dimension} of a convex set~\(X \subseteq \mathbb{R}^n \) refers to the dimension of its affine span, i.e., the smallest affine subspace containing~\(X \). In higher dimensions, this nuanced behavior disappears: convex subsets of \( \mathbb{R}^n \) with dimension at least two are never pseudo-finite, regardless of the values of \( \epsilon \) and \( \delta \), as long as they are sufficiently small compared to the diameter.



\begin{tcolorbox}[title={Pseudo‑Finiteness in Convex Euclidean Spaces}]
\begin{theorem}\label{thm:pseudo-finiteness}
Let \(X \subset \mathbb{R}^n\) be a bounded convex set equipped with the Euclidean metric such that \(\dim X > 0\) and let \(\epsilon \ge 0\). Then, for all sufficiently small \(\delta > 0\), the space \(X\) is \underline{\emph{not}} \((\epsilon, \delta)\)-pseudo‑finite, except in the case where \(\epsilon = 0\) and \(\dim X = 1\).
\end{theorem}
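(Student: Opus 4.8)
\medskip
\noindent\textbf{Proof plan.}
Write $\alpha:=(2+\epsilon)\delta$, let $d:=\dim X$, and let $c_d:=\sqrt{d/(2(d+1))}$ be Jung's constant, so $c_1=\tfrac12$ and $c_d>\tfrac12$ for $d\ge 2$. The plan is to combine \Cref{thm:Jung} with an explicit responder strategy. First I would pin down the target value: since $X$ is bounded, convex and $d$-dimensional it contains a relatively open $d$-ball, so for all $\delta$ below a threshold depending only on $X$ and $\epsilon$ a regular $d$-simplex of side $\alpha$ fits inside $X$; combining this with Jung's theorem applied inside the affine hull of $X$ --- together with the standard fact that the center of the smallest enclosing ball of any $S\subseteq X$ lies in $\operatorname{conv}(S)\subseteq X$, hence is an admissible Chebyshev center --- gives $\mathtt e_X(\alpha)=c_d\alpha$, i.e.\ $\OPT_X(\epsilon,\delta)=c_d\alpha$ by \Cref{thm:Jung}. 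Since the reconstructor's optimal final guess is the Chebyshev center of the feasible region, $\OPT_X(T,\epsilon,\delta)=\adjustlimits\inf_{\mathrm{RC}}\sup_{\mathrm{RSP}}r(\Phi_T)$, which is non-increasing in $T$ with limit $c_d\alpha$; hence it suffices to exhibit, for every $T$, a value $\beta_T>c_d\alpha$ and a responder strategy forcing $r(\Phi_T)\ge\beta_T$ against every $T$-round reconstructor. I will do this for the two ranges not covered by the exception: $d=1$ with $\epsilon>0$, and $d\ge2$ with any $\epsilon\ge0$ ($d=0$ is excluded, and $d=1,\epsilon=0$ is the stated exception).

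For $d=1$ and $\epsilon>0$ the target is $\alpha/2$ and $X$ is an interval. Here the responder maintains the invariant that $\Phi_t$ contains an interval of length $>\alpha$ (dropping, when necessary, to the weaker statement that $\Phi_t$ contains two points at distance $>\alpha$). The engine is the affine recursion that arises when the reconstructor queries an endpoint of the current interval $[a,b]$ and the responder returns the smallest $\hat d_t$ keeping the far endpoint feasible: the surviving interval has length $L_t=\lambda L_{t-1}+\alpha/(1+\epsilon)^2$ with $\lambda:=1-(1+\epsilon)^{-2}\in(0,1)$, whose attracting fixed point is exactly $\alpha$, so $L_t=\alpha+\lambda^{t}(L_0-\alpha)$ stays strictly above $\alpha$ for all finite $t$. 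An interior query is answered by keeping the side of whichever endpoint is farther (which, since the query cannot be close to both endpoints unless the interval is already short, reproduces the same recursion); if the interval is so short that the query lies within $\alpha$ of every point of it, the answer $\hat d_t=\delta$ leaves $\Phi_t$ unchanged; and if no one-sided truncation keeps length $>\alpha$, the answer is chosen so that both endpoints survive (one checks the two windows for $\hat d_t$ overlap precisely then), keeping the Chebyshev diameter from shrinking. This yields $r(\Phi_T)\ge\tfrac12(\alpha+\lambda^{T}c)$ for a constant $c>0$, so $\beta_T>\alpha/2$.

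For $d\ge2$ the target is $c_d\alpha$, which is \emph{strictly larger} than $\alpha/2$; hence no one-dimensional argument can work --- on a segment the responder can only guarantee diameter barely above $\alpha$ --- and the extra room must come from genuinely two-dimensional geometry. I would first reduce to $X$ a planar disk of radius $r_0\gg\delta$: for a fixed disk $D\subseteq X$ and a query $q$ off its plane $P$, the distances to points of $D$ are $\sqrt{h^2+\mathrm{dist}_P(q',x)^2}$, where $q'$ is the orthogonal projection of $q$ onto $P$ and $h$ its distance to $P$, and a short computation shows any legal in-plane response can be lifted to a legal response to $q$, so the responder may assume every query lies in $P$. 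The responder then maintains a small configuration $S_t\subseteq\Phi_t$ with $r(S_t)\ge\rho_t$, where $\rho_0\in(c_d\alpha,r_0)$ and $\rho_t$ decreases slowly and stays above $c_d\alpha$ for all $t\le T$, and $S_t$ carries ``angular slack'': it has a diametral pair one of whose points sits inside $\Phi_t$ in a small disk where a fresh near-diametral pair can be regrown. The crucial point is that one round removes very little: by choice of $\hat d_t$ the set $\{x:\dist(x,q_t)=_{\epsilon,\delta}\hat d_t\}$ is a shell whose outer radius exceeds its inner radius by at least $\alpha$ (and by much more for far queries when $\epsilon>0$), and in the plane such a shell can always be positioned to retain two points of $S_{t-1}$ realizing essentially the Chebyshev diameter $2\rho_{t-1}$: a far query becomes a slab of width $\ge\alpha$ that can be slid to keep an antipodal pair, and a near query is answered with a large inner radius, excising only a small ball around $q_t$ and keeping a near-diametral pair on the far side --- which is exactly what fails on the line, where a shell around a point on the line through a diametral pair must separate it. One then sets $\beta_T:=\rho_T$.

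\smallskip
\noindent\textbf{Main obstacle.} The heart of the matter, and where I expect to spend the most effort, is the $d\ge2$ argument: choosing an invariant that is simultaneously robust to an arbitrary next query and simple enough to propagate, and controlling the cumulative shrinkage of $\rho_t$. After a few well-chosen cuts the configuration can genuinely become ``thin'' --- two roughly $\alpha\times\alpha$ blobs near a diametral pair, as one already sees after three generic slab cuts --- and an adversarial query aimed along the current diametral direction forces the responder to abandon that pair. One must then show the replacement pair the responder falls back on loses only $o(1)$ separation per round (summably up to the horizon $T$); equivalently, that the worst-case Chebyshev radius of an intersection of $T$ shells of width $\ge\alpha$, with the shells positioned adversarially by the responder, stays above $c_d\alpha$ for every finite $T$, with an explicit rate. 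Making the right invariant precise and verifying this summability is the technical core; everything else is bookkeeping built on \Cref{thm:Jung}.
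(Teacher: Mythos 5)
There is a genuine gap, concentrated exactly where you flag your ``main obstacle,'' plus one structural error earlier in the plan. The structural error is the reduction to a planar disk for \(d\ge 2\): the target you must beat is \(c_d\alpha\) with \(c_d=\sqrt{d/(2(d+1))}\) strictly increasing in \(d\), while any configuration you preserve inside a two\,-dimensional disk has diameter at most \(\alpha+o(1)\) for large \(T\) (by the covering/upper-bound half of \Cref{thm:Jung}) and hence Chebyshev radius at most roughly \(c_2\alpha\) — the circumcenter of a planar set lies in its convex hull, so passing to the ambient space does not help. For \(d\ge 3\) this is strictly below \(c_d\alpha=\OPT_X(\epsilon,\delta)\), so a responder confined to planar configurations cannot witness non-pseudo-finiteness. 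The paper avoids this by preserving a full-dimensional object: an \(\alpha_t\)-neighborhood of the \(n+1\) vertices of a \emph{regular \(n\)-simplex} of diameter \((2+\epsilon)\delta\), whose neighborhood contains a regular simplex of diameter \((2+\epsilon)\delta+\sqrt{2(n+1)/n}\,\alpha_t\) and hence has Chebyshev radius \(\ge\OPT+\alpha_t\).

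The second, deeper gap is that you do not supply the inductive mechanism for surviving a ``bad'' query, and you acknowledge as much; but this mechanism is the entire content of the theorem. The paper's answer is: compute, for a fixed simplex \(\Delta\) and query \(q\), the exact largest surviving neighborhood radius \(\alpha^\star(\Delta,q)\) (Lemma~\ref{lem:biggest_neighborhood}); when \(\alpha^\star\) is too small, replace \(\Delta\) by a nearby congruent simplex \(\Delta'\) with \(\Delta'_{\alpha_{t+1}}\subset\Delta_{\alpha_t}\) for which \(q\) becomes good — a \emph{translation} away from \(q\) when \(\epsilon>0\) (exploiting the term in \(\alpha^\star\) proportional to \(\dist(q,\Delta)\)), and a \emph{rotation} about the farthest vertex when \(\epsilon=0\) (exploiting that bad queries make the angle \(\angle BAq\) small). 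The delicate point, which your ``slide the slab / regrow a diametral pair'' sketch does not address, is proved in Lemma~\ref{lem:rotate-ABQ}: the rotation must preserve the identities of the nearest and farthest vertices, which is what yields the uniform, query-independent recursion \(\alpha_{t+1}=f_{\epsilon,\delta}(\alpha_t)\) (linear for \(\epsilon>0\), quadratic for \(\epsilon=0\)) and hence a bound valid against every reconstructor. Finally, a smaller remark on your \(d=1,\ \epsilon>0\) case: the invariant ``\(\Phi_t\) contains an interval of length \(>\alpha\)'' is not maintainable (repeated midpoint queries drive the one-sided recursion to a fixed point below \(\alpha\)); your fallback to ``two points at distance \(>\alpha\)'' is the right invariant — it is precisely the paper's translation strategy for the degenerate \(1\)-simplex — but then your stated affine recursion and rate no longer apply as written and must be re-derived for the two-point configuration.
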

\end{tcolorbox}

The proof of this result is surprisingly delicate. At a high level, one might expect that a responder could simply inject random noise into the true distances, thereby ensuring that the reconstructor improves only gradually over time. However, such a strategy does not suffice to rule out pseudo-finiteness: to do so, one must ensure that for every reconstructor strategy, the reconstruction error remains strictly larger than the optimal limit for any finite number of queries. This requires carefully calibrated noise that not only misleads the reconstructor but also guarantees that the resulting feasible region strictly contains a set of points forming an extremal body—one that achieves the maximal Chebyshev radius under a bounded diameter constraint. 

In fact, the lower bound on \( \OPT_X(\epsilon, \delta) \) established in Theorem~\ref{thm:Jung} is implicitly used in proving Theorem~\ref{thm:pseudo-finiteness}, as it certifies the minimal size of the region that the responder must preserve.

\begin{remark*}
    The proof of \cref{thm:pseudo-finiteness} provides two lower bounds on the convergence rate of \(\OPT(T,\epsilon,\delta)\): exponential in \(T\) for \(\epsilon \neq 0\) and double-exponential for \(\epsilon = 0\). On the upper-bound side, obtaining a matching rate for \(\delta >0\) appears nontrivial, and the optimality of the known lower bounds remains unclear.
    
    In the purely multiplicative case \(\delta=0\), however, \(\OPT_X(\epsilon,0)=0\), and a matching exponential upper bound follows from a standard grid-refinement argument: the reconstructor queries a uniform grid of fixed size (depending only on the dimension), selects the grid point with the smallest reported distance, then recenters a new fixed-size grid at that point and rescales to a smaller neighborhood. Iterating this geometrically shrinks the feasible region, yielding an exponential upper bound on \(\OPT_X(T,\epsilon,0)\).
\end{remark*}


\paragraph{Organization.}
In the next section (Section~\ref{sec:examples}), we analyze and discuss basic examples of the reconstruction game. In Section~\ref{sec:proof-overview}, we provide a high-level overview of the main technical ideas used in our proofs. The related work is deferred to Section~\ref{sec:related-work}, which surveys relevant literature from learning theory, privacy, and geometry.  The complete formal proofs are presented in Sections~\ref{app:notation} through~\ref{app:pseudo}. Appendix~\ref{appendix} collects technical lemmas from geometry and topology and provides full proofs of the examples sketched in \cref{sec:examples}, along with additional examples that further clarify the game.


\section{Examples}\label{sec:examples}
This section presents illustrative examples of the reconstruction game in a variety of metric spaces. These examples shed light on different aspects of the problem, including the necessity of the assumptions in our main theorems and the range of geometric behaviors that can arise. They also help clarify the role of total boundedness in Theorem~\ref{thm:Jung}, and lead naturally to an open question about pseudo-finite totally bounded spaces. In contrast to the following sections—which focus more heavily on Euclidean metric spaces in the context of Theorem~\ref{thm:pseudo-finiteness}—this section is technically lighter and features some more ``exotic'' spaces. Full proofs of the examples discussed here appear in Appendix~\ref{app:examples}.

\subsection{Total Boundedness in Theorem~\ref{thm:Jung}}
\label{sec:not_tot_bound}
The first main result (Theorem~\ref{thm:Jung}) characterizes the limiting reconstruction error in terms of the diameter-radius profile \( \mathtt{e}_X(\cdot) \), assuming that the metric space \( X \) is totally bounded. The following examples illustrate that this assumption is essential: if total boundedness is lifted, even seemingly natural spaces allow the responder to prevent the reconstructor from obtaining any meaningful approximation—specifically, an error bounded away from zero, or even infinite.

\begin{example}[Unbounded Space: The Real Line]
\label{ex:unbounded-real}
We begin with a simple case: \(\mathbb{R}\) with its standard Euclidean metric. This space is not bounded (and hence not totally bounded), and the responder can exploit its unboundedness to maintain extremely large feasible regions throughout the game. 
A formal proof is given in Appendix~\ref{app:examples}.

\end{example}
The previous example showed that in some unbounded metric spaces, such as \( \mathbb{R} \), the responder can force the reconstruction error to be arbitrarily large. This naturally raises the question: could boundedness alone suffice for the conclusion of Theorem~\ref{thm:Jung}? That is, can we strengthen the theorem by replacing total boundedness with the weaker assumption of boundedness?
The answer is negative:
\begin{example}[Bounded but Not Totally Bounded: Discrete Countable Space]
\label{ex:bounded-discrete}
 Consider the space \( X = \mathbb{N} \), the set of natural numbers equipped with the discrete metric: \( \dist(i,j) = 0 \) if \( i = j \), and \( \dist(i,j) = 1 \) otherwise. This space is bounded (diameter 1) but not totally bounded. 
 
 Now, note that even if the responder must be fully honest (i.e., \( \epsilon = \delta = 0 \)), it can always answer \( \hat{d}_t= 1\). This ensures that the feasible region after every round remains an infinite subset of \( X \) in which all points are pairwise at distance 1. Consequently, the responder can choose a consistent point of distance $1$ from the point guessed by the reconstructor, yielding an approximation error equal to the diameter of the space.
\end{example}

\subsection{Pseudo-Finiteness}
Although our main result about pseudo-finiteness focuses on convex Euclidean spaces, the phenomenon is more subtle in general metric spaces. In this section, we present three infinite metric spaces. Two of these spaces are \((\epsilon, \delta)\)-pseudo-finite for all values of \(\epsilon, \delta \ge 0\), while the third is not even \((0,0)\)-pseudo-finite. These examples highlight the diversity of possible behaviors in general metric spaces and motivate an open question concerning the structural nature of pseudo-finiteness in totally bounded spaces.

\begin{example}[Sparse Subsets of the Real Line]
\label{ex:sparse-reals}
Let \( X = \{0\} \cup \{2^{2^n} : n \in \mathbb{N}\} \subset \mathbb{R} \) with the standard Euclidean metric.
Then \( X \) is \((\epsilon,\delta)\)-pseudo-finite for every \( \epsilon, \delta \ge 0 \).

To see this, let the reconstructor begin by querying the point \( q_1 = 0 \). The response \( \hat{d}_1 \) yields a feasible region consisting of a finite subset of \( X \), whose size is bounded by a constant \( N(\epsilon,\delta) \) that depends only on the noise parameters (and not on the specific value of \( \hat{d}_1 \)). This is because the set~\( X \), when viewed as a monotone sequence, grows asymptotically faster than any geometric progression. After this initial step, the reconstructor continues to query all points in the feasible region to identify an optimal approximation.
\end{example}

The above example is unbounded. This raises the question of whether there exist bounded infinite spaces that are \((\epsilon, \delta)\)-pseudo-finite for all \(\epsilon, \delta\). The next example shows that the answer is yes.

\begin{example}[Countable Discrete Metric Space Revisited]
\label{ex:discrete-pseudofinite}
Recall the space \( X = \mathbb{N} \) with the discrete metric: \( \dist(x,y) = 0 \) if \( x = y \), and 1 otherwise. This space is bounded, with diameter 1. As previously discussed (see Example~\ref{ex:bounded-discrete}), we have \( \OPT_X(\epsilon,\delta) = 1 \) for all \( \epsilon, \delta \ge 0 \). Therefore, the reconstructor can achieve optimal performance without submitting any queries, simply by outputting any fixed point in the space. Thus, \( X \) is \((\epsilon,\delta)\)-pseudo-finite for all \( \epsilon, \delta \ge 0 \).
\end{example}

These two examples motivate the following open question: can similar behavior occur in totally bounded spaces?:

\begin{question}
Let \( X \) be a totally bounded metric space. Are the following two statements equivalent?
(i) \( X \) is finite. (ii) \( X \) is \((\epsilon,\delta)\)-pseudo-finite for all \( \epsilon, \delta \ge 0 \).
\end{question}

We conclude this section by presenting a totally bounded metric space that is not \((0,0)\)-pseudo-finite:

\begin{example}[Infinite binary strings]\label{ex:binary-jung}
Let \( X = \{0,1\}^{\mathbb{N}} \) be the space of infinite binary sequences, equipped with the standard ultrametric,\footnote{
This metric satisfies the \emph{ultrametric inequality}:
\(
d(x,z) \le \max\{d(x,y), d(y,z)\},
\)
which is stronger than the standard triangle inequality. It implies, for instance, that all triangles are isosceles with the two longer sides equal.
}
defined by \( d\left((\alpha_i)_{i\in \NN}, (\beta_i)_{i\in \NN}\right) = 2^{-j} \), where \( j \) is the first index at which \( \alpha_j \neq \beta_j \). Then \( X \) is a compact metric space that is {\bf not} \( (0,0) \)-pseudo-finite. The proof appears in \cref{ex:binary_sequences}.

\end{example}

\section{Technical Overview}
\label{sec:proof-overview}
In this section, we outline the key ideas behind the proofs of Theorem~\ref{thm:Jung} and Theorem~\ref{thm:pseudo-finiteness}; complete proofs are deferred to \cref{app:Jung,app:pseudo}. To keep the exposition focused on the central arguments, we omit technical complications arising from cases where suprema or infima are not attained. These can be handled with standard limiting arguments but would introduce additional notation and obscure the main ideas.





\subsection{Proof of Theorem~\ref{thm:Jung}}
We begin by recalling the core assertion of Theorem~\ref{thm:Jung}. It characterizes the optimal reconstruction error \(\OPT_X(\epsilon,\delta)\) in terms of the geometry of the metric space and the noise parameters \(\epsilon\) and \(\delta\). Specifically, it asserts that \(\OPT_X(\epsilon,\delta)\) equals the maximum Chebyshev radius among all subsets of~\(X\) with diameter at most \((2+\epsilon)\delta\):
\[
\OPT_X(\epsilon,\delta) = \mathtt{e}_X\big((2+\epsilon)\delta\big).
\]

To prove this, we begin by analyzing an idealized setting in which the reconstructor is allowed to query all points in the space. Of course, this is unrealistic in infinite spaces—but it serves as a useful thought experiment for understanding the limits of reconstruction.

Each query-answer pair \((q, r)\) determines a \emph{feasible region} \(\Phi(\{q, r\})\), which consists of all points whose noisy distances to \( q \) are \((\epsilon, \delta)\)-indistinguishable from \( r \). The intersection of all these regions gives the overall feasible region of the interaction, denoted by \(\Phi := \Phi(\{q, r_q\}_{q \in X})\).

\paragraph{Upper Bound.}
In the idealized case where \emph{all} points in the space are queried, a simple yet insightful argument shows that the diameter of the feasible region \( \Phi \) is at most \( (2+\epsilon)\delta \). Indeed, for any two points \( A, B \in \Phi \), since \( B \) was queried and \( A \) remained feasible, the reported noisy distance must not exceed \(\delta\), and therefore the true distance \(\dist(A,B)\) cannot exceed \((2+\epsilon)\delta\).
By letting the reconstructor output the Chebyshev center of \( \Phi \), the reconstruction error is at most \( \mathtt{e}_X\left((2+\epsilon)\delta\right) \).

When only finitely many queries are allowed, however, the reconstruction error can be significantly larger than in the idealized case; as shown in Section~\ref{sec:not_tot_bound}, there exist spaces in which this discrepancy is arbitrarily large.

Nevertheless, if the metric space \( X \) is \emph{totally bounded}, the reconstructor can approximate the idealized strategy arbitrarily well: by querying all points in a sufficiently dense finite cover, one ensures that the feasible region has diameter arbitrarily close to \( (2+\epsilon)\delta \). 
Such a finite cover exists by definition: a metric space is totally bounded if, for every \( \alpha > 0 \), it admits a finite \( \alpha \)-cover—that is, a finite subset such that every point in the space lies within distance \( \alpha \) of some point in the cover. Denote by \( N_\alpha \) the number of points in an \( \alpha \)-cover of the metric space \( X \). As illustrated in Figure~\ref{fig:alpha-net}, after \( N_\alpha \) queries the reconstructor can guarantee that the diameter of the feasible region is less than \( (2+\epsilon)\delta + \alpha' \), where \(\alpha' = ((1+\epsilon)^2 + 1)\alpha\). Hence, by outputting the Chebyshev center of the feasible region, the reconstructor ensures that the worst-case error after \( N_\alpha \) queries does not exceed \( \mathtt e_X((2+\epsilon)\delta + \alpha') \). This follows from the definition of the diameter-radius profile \(\mathtt e_X(\beta)\), which is the maximum Chebyshev radius over all subsets of \( X \) with diameter at most \(\beta\).

One might be inclined to conclude that the analysis is complete, as the function \(\mathtt e_X\) appears to be continuous. However, this inference is generally incorrect. For arbitrary metric spaces, the diameter-radius profile \(\mathtt e_X\) is not necessarily continuous. For example, in finite metric spaces, the function \(\mathtt e_X\) is not continuous.

However, for totally bounded metric spaces, the function \(\mathtt e_X\) can be shown to be right-continuous, which is sufficient for establishing the desired upper bounds. The assumption of total boundedness is essential, as there exists a non–totally bounded metric space for which the corresponding function \(\mathtt e_X\) fails to be right-continuous (see \cref{ex:not_right_eX} in \cref{app:examples}). This observation indicates that proving the right-continuity of the diameter–radius profile is more delicate than it might initially appear.

We prove that the function \(\mathtt e_X\) is right-continuous for totally bounded metric spaces using the theory of \emph{hyperspaces}.
Namely, given a metric space \(X\), one considers the space of (nonempty) \emph{compact} subsets of \(X\), denoted \(\cK(X)\), equipped with metrics induced by the metric on \(X\). The most standard choice is the \emph{Hausdorff metric}: for subsets \(S_1,S_2\subseteq X\),
\[
  d_H(S_1,S_2)
  = \max\!\left\{
      \adjustlimits \sup_{x\in S_1}\inf_{y\in S_2} \dist(x,y),\;
      \adjustlimits  \sup_{y\in S_2}\inf_{x\in S_1} \dist(x,y)
    \right\}.
\]
A variety of classical results are known for \((\cK(X),d_H)\); for instance, when \(X\) is compact, the hyperspace \(\cK(X)\) is compact as well. This is a classical fact in metric topology; see, e.g.,~\cite{IllanesNadler1999}[Theorem 3.5].

Both the diameter and the Chebyshev radius of a set are continuous functions on \((\cK(X), d_H)\); this follows by bounding their variation in terms of the Hausdorff distance (see the detailed argument in \cref{app:right_continuity_e}). Together with the compactness of \((\cK(X), d_H)\), this yields, via a compactness argument, that \(\mathtt e_X\) is right-continuous for compact metric spaces. For a totally bounded metric space~\(X\), in turn, one can show that \(\mathtt e_X = \mathtt e_{\hat X}\), where \(\hat X\) denotes the completion of \(X\). Since the completion of a totally bounded metric space is compact by the classical Heine–Borel characterization for metric spaces (compact \(\Leftrightarrow\) complete and totally bounded), it follows that \(\mathtt e_X\) is right-continuous for totally bounded metric spaces as well. All techniques and formal proofs for the right-continuity of the diameter-radius profile~\(\mathtt e_X\) are presented in \cref{app:right_continuity_e}. The full proof of the upper bound appears in \cref{app:Jung-upper}.





\paragraph{Lower Bound.}
The crucial observation is that at the beginning of the game, the responder may select \emph{any} subset \( S \subset X \) of diameter at most \( (2+\epsilon)\delta \), and maintain the invariant \( S \subseteq \Phi \) throughout the interaction: In response to each query \( q \), the responder identifies a point \( S_{\min} \in S \) that minimizes the distance to \( q \), and returns the perturbed value  
\[
r := (1+\epsilon) \cdot \dist(q, S_{\min}) + \delta.
\]  
A simple calculation, which relies only on the triangle inequality, shows that
every \( s \in S \) satisfies \( \dist(q,s) =_{\epsilon,\delta} r \), and hence \( S \) remains feasible.

After the interaction concludes, given the reconstructor's final guess, the responder can choose a secret point at distance no less than \( r(S) \) inside \( S\subseteq \Phi \). This ensures that no reconstructor can guarantee an error smaller than \(\mathtt{e}_X\left((2+\epsilon)\delta\right)\). 
A precise description of the responder strategy that preserves an extremal set is presented in \cref{app:Jung-lower}.

\begin{figure}[ht]
  \centering
  \begin{minipage}[t]{0.48\textwidth}
    \centering
    \begin{tikzpicture}[scale=1.5]
      \fill[blue!20,opacity=.5] (0,0) ellipse (1 and 0.6);
      \draw[blue!60,thick] (0,0) ellipse (1 and 0.6)
            node[below right,font=\small] {$\Phi$};

      \draw[dashed,black] (0,0) circle (1);

      \filldraw[red] (0,0) circle (0.045)
            node[ right,font=\scriptsize] {Chebyshev center};

      \coordinate (q) at (55:1 and 0.6);   
      \coordinate (x) at (170:1 and 0.6);  

      \fill (q) circle (0.03) node[above,font=\scriptsize] {$q$  - query};
      \fill (x) circle (0.03) node[left,font=\scriptsize] {$x$};

      \draw[<->,gray] (q) -- (x)
            node[midway,above=0pt,font=\scriptsize] {$\displaystyle \le (2+\epsilon)\delta$};
    \end{tikzpicture}
    \caption{Feasible region \(\Phi\) (blue) of the idealized case}
    \label{fig:feasible-region-idealized}
  \end{minipage}\hfill
  \begin{minipage}[t]{0.48\textwidth}
    \centering
    \begin{tikzpicture}[scale=1.5, every node/.style={font=\scriptsize}]
      \def\beta{0.35}

      \fill[blue!20,opacity=.5] (0,0) ellipse (1 and 0.6);
      \draw[blue!60,thick] (0,0) ellipse (1 and 0.6)
            node[below right,font=\small] {$\Phi$};

      \coordinate (C) at (0,0);
      \fill[green!70!black] (C) circle (0.02);
      \fill[green!25,opacity=.3] (C) circle (\beta);
      \draw[green!60] (C) circle (\beta);

      \foreach \ang in {0,45,90,135,180,225,270,315}{
        \coordinate (P\ang) at (\ang:{0.7} and {0.42});
        \fill[green!70!black] (P\ang) circle (0.02);
        \fill[green!25,opacity=.3] (P\ang) circle (\beta);
        \draw[green!60] (P\ang) circle (\beta);
      }

      \coordinate (y) at (60:1 and 0.6);
      \coordinate (x) at (170:1 and 0.6);
      \fill (y) circle (0.03) node[right] {$y$};
      \fill (x) circle (0.03) node[left]  {$x$};

      \coordinate (q) at (P45);
      \fill[red] (q) circle (0.03) node[below right] {$q$ - query};

      \draw[dashed,gray] (q) -- (y) node[midway,right=1pt] {$<\alpha$};
      \draw[gray,thick] (y) -- (x) node[midway,above=1pt] {$\le (2+\epsilon)\delta+\alpha'$};
      \draw[gray] (q) -- (x);

      \begin{scope}[shift={(1.45,0.7)}]
        \fill[green!25,opacity=.3] (0,0) circle (0.07);
        \draw[green!60] (0,0) circle (0.07);
        \node[right=0.1] at (0,0) {$\alpha$-ball};
        \fill[green!70!black] (0,-0.25) circle (0.02);
        \node[right=0.1] at (0,-0.25) {net point};
        \node[right=0.1] at (0,-0.5) {$\alpha' = ((1+\epsilon)^2+1)\alpha$};
      \end{scope}
    \end{tikzpicture}
    \caption{Feasible region \(\Phi\) (blue) of the finite interaction}
    \label{fig:alpha-net}
  \end{minipage}
\end{figure}

\subsection{Proof of Theorem~\ref{thm:pseudo-finiteness}}

We now turn to the proof of Theorem~\ref{thm:pseudo-finiteness}, which establishes a dichotomy for pseudo-finiteness in bounded convex subsets of Euclidean space. Specifically, the theorem states that a bounded convex set \( X \subset \mathbb{R}^n \) is \( (\epsilon, \delta) \)-pseudo-finite if and only if \( \dim(X) = 1 \) and \( \epsilon = 0 \). In all other cases—namely, when \( \dim(X) > 1 \) or \( \epsilon > 0 \)—the reconstruction error cannot reach its optimal value in finitely many steps.

A natural approach for proving non-pseudo-finiteness is to design a responder strategy that gradually shrinks the feasible region, for example, by adding uniform random noise to the true distance in each response. However, this approach does not guarantee the desired outcome: specifically, it fails to ensure that the reconstruction error remains strictly greater than the optimum \( \OPT(\epsilon, \delta) \) at all finite $T$. In fact, such strategies may result in convergence to a strictly smaller value, making them suboptimal for the responder.

To overcome this, our proof explicitly constructs a responder strategy that, at each round, ensures the feasible region contains a subset guaranteeing that the reconstruction error remains strictly larger than the optimum \( \OPT(\epsilon, \delta) + \alpha_T \), where \(\alpha_T > 0\) depends only on the number of rounds.




This approach parallels the lower-bound strategy employed in Theorem~\ref{thm:Jung}. There, the responder preserved an extremal set to ensure the Chebyshev radius did not fall below \( \OPT(\epsilon, \delta) \). However, to establish non-pseudo-finiteness, it is insufficient to preserve a region whose radius merely equals the optimum. Instead, it is necessary to ensure that the Chebyshev radius of the feasible region remains strictly greater than the limiting value for all finite \( T \).

Our strategy preserves an \( \alpha \)-neighborhood of the vertices \(\{x_i\}_{i=0}^n\) of a regular simplex \( \Delta \)\footnote{Throughout this work, by a “simplex” we usually mean the set of its vertices, that is, \( n+1 \) affinely independent points in the Euclidean space \(X\) of dimension~\(n\).}. We denote this neighborhood as~\( \Delta_{\alpha} \), where \(\alpha > 0\) depends only on the number of rounds~\(T\). Formally, the \(\alpha\)-neighborhood of the simplex \(\Delta\) with vertices \(\{x_i\}_{i=0}^n\) is defined as
\begin{equation}\label{eq:neighborhood_simplex}
  \Delta_{\alpha} := \cup_{i=0}^n B(x_i, \alpha),
  \qquad
  B(x_i, \alpha) \text{ denotes the Euclidean ball of radius } \alpha \text{ centered at } x_i.
\end{equation}

This strategy ensures the feasible region contains a regular simplex of diameter \( (2+\epsilon)\delta + \sqrt{\tfrac{2(n+1)}{n}} \alpha \). This implies its Chebyshev radius is at least \( \OPT(\epsilon, \delta) + \alpha \).

Our strategy proceeds as follows.  Assume that at round~\(t\) the feasible
region already contains an \(\alpha_t\)-neighborhood of a regular simplex
\(\Delta\).  Upon receiving the next query \(q_t\), we pick a radius
\(\alpha_{t+1}\) determined solely by \(t+1,\epsilon,\delta\); then we reply
with an appropriate noisy distance \(r_t\) and, if necessary, replace
\(\Delta\) by a new extremal simplex \(\Delta'\) so that
\[
  (\Delta')_{\alpha_{t+1}} \subset (\Delta)_{\alpha_t}
  \quad\text{and}\quad
  (\Delta')_{\alpha_{t+1}} \subset \text{(updated feasible region)}.
\]
This step is then repeated indefinitely, maintaining the
Chebyshev radius strictly above \(\OPT\). As a result, \(X\) is not
pseudo-finite.

The primary challenge is to establish a uniform lower bound on \( \alpha_t \) that depends only on the round~\( t \), and not on the specific query \( q_t \).
We note in passing that it is relatively easy to give a bound on \( \alpha_{t+1} \) that depends on both \( t \) and the query \( q_t \); however, such a bound is insufficient for our purposes, as it does not yield a general lower bound on \( \OPT_T \) valid for all reconstructor strategies, which is essential for ruling out pseudo-finiteness.

Establishing a uniform bound requires careful analysis of each query type. For some queries, it is straightforward to obtain a sufficiently large neighborhood of a simplex \(\Delta'\) contained within the previous neighborhood, while others require a more detailed geometric argument.



\paragraph{Determining the Maximal Surviving Neighborhood.}

To address these challenges, we consider the following question: under what conditions does there exist an answer that the responder can give to the query \(q\) such that the \(\alpha\)-neighborhood (see \cref{eq:neighborhood_simplex}) of \(\Delta\) remains entirely within the feasible region?


Such a response exists if and only if \( r^{\min}_q(\Delta_\alpha)\le r^{\max}_q(\Delta_\alpha) \), where \( r^{\min}_q\) places the farthest point of \( \Delta_\alpha \) on the outer boundary of the feasible region, and \( r^{\max}_q\) places the nearest point on the inner boundary (see \cref{fig:feasible-sets-minimal}, \cref{fig:feasible-sets-maximal}).

\begin{figure}[ht]
  \centering
  \begin{minipage}[t]{0.48\textwidth}
    \centering
    \begin{tikzpicture}[scale=0.7]

      \def\rBlueInner{0.40}
      \def\rMax      {1.90}

      \fill[blue!45, opacity=.30] (0,0) circle (\rMax);
      \fill[white]                (0,0) circle (\rBlueInner);

      \filldraw[black] (0,0) circle (0.045) node[below right=2pt] {$q$};

      \coordinate (P1) at  ( 0.6894 , -0.1216 );   
      \coordinate (P2) at  (-0.7448 ,  0.9674 );   
      \coordinate (P3) at  ( 0.9153 ,  1.6650 );   

      \draw[dashed, thick] (P1)--(P2)--(P3)--cycle;

      \foreach \pt/\lbl in {P1/$S_{\min}$, P2/, P3/$S_{\max}$}
        \filldraw[red] (\pt) circle (0.045) node[above right=2pt] {\lbl};

      \node at (0.2866,0.8369) {$\Delta$};
    \end{tikzpicture}
    \caption{\(\Phi(q,r^{\min})\) (blue)}
        \label{fig:feasible-sets-minimal}
  \end{minipage}
  \hfill
  \begin{minipage}[t]{0.48\textwidth}
    \centering
    \begin{tikzpicture}[scale=0.7]

      \def\rMin        {0.70}
      \def\rOrangeOuter{2.30}

      \fill[orange!45, opacity=.30] (0,0) circle (\rOrangeOuter);
      \fill[white]                  (0,0) circle (\rMin);

      \filldraw[black] (0,0) circle (0.045) node[below right=2pt] {$q$};

      \coordinate (P1) at  ( 0.6894 , -0.1216 );
      \coordinate (P2) at  (-0.7448 ,  0.9674 );
      \coordinate (P3) at  ( 0.9153 ,  1.6650 );

      \draw[dashed, thick] (P1)--(P2)--(P3)--cycle;

      \foreach \pt/\lbl in {P1/$S_{\min}$, P2/, P3/$S_{\max}$}
        \filldraw[red] (\pt) circle (0.045) node[above right=2pt] {\lbl};

      \node at (0.2866,0.8369) {$\Delta$};
    \end{tikzpicture}
    \caption{\(\Phi(q,r^{\max})\) (orange)}
    \label{fig:feasible-sets-maximal}
  \end{minipage}
\end{figure}

The larger the radius \( \alpha \) of the neighborhood \( \Delta_\alpha \), the smaller the gap \( r^{\max} - r^{\min} \) becomes. 
Solving the equation \( r^{\max}(\Delta_\alpha) - r^{\min}(\Delta_\alpha) = 0 \) for \( \alpha \) yields the exact value \(\alpha^\star(\Delta, q)\) of the largest surviving neighborhood after querying \( q \). The quantity \( \alpha^\star(*, q) \) can be viewed as a function on the space of regular simplexes.
The derivation of the exact formula for \( \alpha^\star \) is presented in \cref{app:feasible}.





\paragraph{Additive–Only vs.\ Multiplicative Noise.}
Both responder strategies, additive-only and mixed-noise, are based on the same principle: for a fixed simplex \( \Delta \) and a target neighborhood radius \( \alpha_{t+1} \), along with the previous round's neighborhood radius \( \alpha_t > \alpha_{t+1} \), we partition the space into \((\Delta, \alpha_{t+1})\)-\emph{good} and \emph{bad} regions. A query point \( q \) is considered \emph{good} if there exists a response that preserves a neighborhood of radius at least \( \alpha_{t+1} \) of \(\Delta\) within the feasible region; equivalently, if the maximal surviving neighborhood satisfies \( \alpha^\star(\Delta, q) \ge \alpha_{t+1} \). Otherwise, \( q \) is \emph{bad}.

When \( q \) is good, the responder can maintain the \(\alpha_{t+1}\)-neighborhood of the current simplex \( \Delta \) within the feasible region. The critical distinction between regimes occurs when \( q \) is bad. In this case, it is necessary to find another regular simplex \( \Delta' \) in the \(\alpha_t\)-neighborhood of \(\Delta\) such that the point \( q \) is now \((\Delta', \alpha_{t+1})\)-good. In the multiplicative case (\( \epsilon > 0 \)), the responder can \emph{translate} the simplex \( \Delta \) slightly away from the query point \( q \) to ensure that \( \alpha^\star(\Delta', q) \ge \alpha_{t+1} \) and that \( \Delta'_{\alpha_{t+1}} \subset \Delta_{\alpha_t} \).

In contrast, when \( \epsilon = 0 \), translations of \( \Delta \) within its \(\alpha_t\)-neighborhood do not substantially change~\( \alpha^\star(*, q) \). 
To achieve a significant increase in \( \alpha^\star(*, q) \) in this case,  we \emph{rotate} the simplex~\( \Delta \). 
To achieve this successfully, the rotated simplex must preserve the identities of the closest and farthest points to the query, which requires a careful geometric analysis. 
In \cref{app:geom}, we develop the tools necessary to carry out this strategy. 
In dimensions \( n \ge 2 \), rotations allow us to maintain a surviving neighborhood indefinitely. 
In one dimension, where nontrivial rotations are not possible, this strategy fails for a good reason: one-dimensional intervals are pseudo-finite.



\section*{Acknowledgments and Disclosure of Funding}
We thank Nikita Gladkov for insightful discussions related to the problems studied in this work.

Shay Moran is a Robert J.\ Shillman Fellow; he acknowledges support by ISF grant 1225/20, by BSF grant 2018385, by Israel PBC-VATAT, by the Technion Center for Machine Learning and Intelligent Systems (MLIS), and by the the European Union (ERC, GENERALIZATION, 101039692). Views and opinions expressed are however those of the author(s) only and do not necessarily reflect those of the European Union or the European Research Council Executive Agency. Neither the European Union nor the granting authority can be held responsible for them.

\section{Related Work}
\label{sec:related-work}

We organize the discussion into two parts: research that focuses on the \emph{responder's perspective}, and research that centers on the \emph{reconstructor's perspective}. In both cases, the relevant literature is vast, so we focus on works most closely related to the questions studied in this paper.

\paragraph{The Responder's Perspective.}
The reconstruction game is closely related to problems studied in \emph{privacy-preserving data analysis}, where the goal is to answer queries on a sensitive dataset while limiting what an adversary can infer~\citep{DMNS06}.  
The foundational work of \citet{DinurN03} initiated this line of research by showing that approximate answers to too many counting queries enable the reconstruction of a large fraction of the database. Their model uses counting queries on binary datasets, which are essentially equivalent to Hamming distance queries on the Boolean cube~\( \{\pm 1\}^n \). This connection is illustrated in \cref{ex:DinurNissim_equivalence}.

Subsequent works have sharpened and generalized this reconstruction viewpoint. Notably, \citet{DMT07}, \citet{DY08}, and \citet{HaitnerMST22} provided refined attacks and bounds under weaker assumptions. More recently, \citet{balle2022reconstructing} and \citet{cummings2024attaxonomy} proposed formal definitions of \emph{reconstruction robustness} that relate privacy guarantees to the attacker's ability to reconstruct sensitive data.  
Recent work by \citet{CohenKMMNST25} further explores the foundations of reconstruction attacks, proposing a new definitional framework—Narcissus Resiliency—and uncovering connections to Kolmogorov complexity and classical notions such as differential privacy.

Surveys such as \citet{Survey2017} provide a comprehensive overview of privacy attacks and defenses, including reconstruction.
We also note the classical work of \citet{Erdos1963}, which (in disguise) studies a version of the reconstruction problem on the Hamming cube in the noiseless setting.

\paragraph{The Reconstructor's Perspective.}
Our work primarily studies the problem from the perspective of the \emph{reconstructor}, who seeks to locate a hidden point using approximate distance queries. Related problems have been studied under several guises. A classic formulation is the \emph{metric dimension} of a graph~\citep{harary1975metric,Slater75,MetricDimensionSurvey2023}, which asks for the smallest set of vertices such that all other vertices are uniquely identified by their distances to this set. This corresponds to an \emph{oblivious} version of the reconstruction game, where the reconstructor must submit all queries in advance.

A more sequential variant, closer to our setting, is the \emph{sequential metric dimension}~\citep{seager2013sequential,bensmail2020sequential,SeqMetricDimRand21}, which measures the number of adaptive queries needed to identify an unknown point. These works mostly consider noiseless settings on finite graphs. In contrast, our work allows noisy responses, considers general metric spaces, and studies the rate of convergence as a function of the number of queries.

The general formulation of locating a hidden point via distance queries has also appeared in applied contexts. For instance, the problem of reconstructing a physical quantity from noisy measurements arises in \emph{remote sensing}, including terrain mapping and atmospheric profiling. Classic references include \citet{Twomey1977} and \citet{Rodgers2000}, which formulate and analyze such problems as inverse problems under uncertainty. While much of this literature is algorithmic or statistical, our work offers a geometric and learning-theoretic view that complements these perspectives.

\section{General notation and basic facts}
\label{app:notation}
Let us remind the setup of the game and important concepts used throughout the proofs.

We work in a metric space \((X, \dist)\). The interaction lasts for a fixed number of rounds \(T\), labeled \(t = 1, 2, \dots, T\). In round \(t\), the reconstructor selects a query point~\(q_t \in X\). The responder then returns a real number \(r_t\) that represents a noisy distance from \(q_t\) to an as-yet-unspecified target, with multiplicative parameter \(\epsilon \ge 0\) and additive parameter \(\delta \ge 0\).

Formally, the reply must satisfy
\begin{align*}
    &\dist(x, q_i) \le (1+\epsilon)r_i + \delta,\\
    &r_i \le (1+\epsilon)\dist(x, q_i) + \delta
  \end{align*}
to at least one point $x\in X$. In other words, after each answer the set
\begin{equation*}
\Phi(\{q_i, r_i\}_{i=1}^{t}) := \left\{ x \in X \mid \text{ for all } 1 \le i \le t \colon \,\,  \begin{aligned}
    &\dist(x, q_i) \le (1+\epsilon)r_i + \delta,\\
    &r_i \le (1+\epsilon)\dist(x, q_i) + \delta
  \end{aligned}\right\}
\end{equation*}
is guaranteed to be non-empty. This set for the round $T $ is called the \emph{feasible region}.
In the end of the game the reconstructor outputs a guess point $\hat x_T$, and then the responder commits to a target point, choosing any \(x^\star \in \Phi_T\). 

The aim of the reconstructor is to minimize the distance $\dist(x^\star, \hat x_T)$, and of the responder is to maximize it. 

Let us denote the set of all reconstructors that play game for $T$ rounds by $\mathrm{RC}_T$, and the set of responders by $\mathrm{RSP}_T$. The final guess of the reconstructor $\cR \in \mathrm{RC}_T$ we will denote by $\hat x^\cR$ and the output secret point of the responder $\cA \in \mathrm{RSP}_T$ by $x^\star_\cA$. As recalled in Equation \ref{eq:optimal} the optimal error must be
\[
  \OPT_X(T,\epsilon,\delta) := \adjustlimits \inf_{\mathrm{RC}} \sup_{\mathrm{RSP}} \sup_{x \in \Phi_T} \dist(\hat{x}_T, x).
\]

\begin{claim}[Monotone error in $T$]\label{cor:monotone}
The function $T\mapsto\OPT_X(T,\epsilon,\delta)$ is non-increasing.    
\end{claim}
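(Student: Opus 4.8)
The plan is to show that a reconstructor playing for $T$ rounds can do at least as well as one playing for $T+1$ rounds, by having the $T$-round player simulate the first $T$ rounds of an optimal $(T+1)$-round strategy and then simply ignore the final query. The intuition is that an extra query can only provide more information about the feasible region, never less, so more rounds cannot hurt.

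Concretely, I would argue as follows. Fix $\epsilon,\delta\ge 0$ and let $\cR$ be any reconstructor for $T+1$ rounds. Build a reconstructor $\cR'$ for $T$ rounds that plays exactly the first $T$ queries of $\cR$ (these depend only on the transcript so far, so this is well-defined), and then, instead of issuing the $(T+1)$-st query, internally fixes an arbitrary query $q_{T+1}$ and an arbitrary consistent response $r_{T+1}$ — e.g.\ takes $q_{T+1}$ equal to some already-queried point $q_1$ and answers $r_{T+1}=r_1$, which keeps the transcript consistent since $\Phi$ only shrinks and remains nonempty. Then $\cR'$ outputs whatever $\cR$ would output on this completed $(T+1)$-round transcript. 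The key observation is that appending the pair $(q_{T+1},r_{T+1})$ can only intersect another constraint set with the current feasible region, so $\Phi_{T+1}\subseteq\Phi_T$, where $\Phi_T$ is the feasible region of $\cR'$ after its $T$ real rounds. Hence for the guess $\hat x$ produced,
\[
\sup_{x\in\Phi_T}\dist(\hat x,x)\ \ge\ \sup_{x\in\Phi_{T+1}}\dist(\hat x,x),
\]
and taking suprema over responders and infima over $\cR$ gives $\OPT_X(T,\epsilon,\delta)\ge\OPT_X(T+1,\epsilon,\delta)$. Iterating (or rather, since $T$ was arbitrary, this already yields the claim) shows $T\mapsto\OPT_X(T,\epsilon,\delta)$ is non-increasing.

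The one point requiring a little care — and the closest thing to an obstacle — is the bookkeeping around adversarial quantifiers: one must check that the simulated-round construction does not accidentally give the responder extra power. Since $\cR'$ generates the fake pair $(q_{T+1},r_{T+1})$ itself (it is not chosen by the responder), the responder in the $T$-round game controls strictly fewer choices than in the $(T+1)$-round game, so the $\sup_{\mathrm{RSP}}$ on the left is over a larger effective feasible region, which is exactly the direction we want. A second minor point is handling the case where the infimum over reconstructors is not attained; this is dispatched by the standard $\eta$-argument: for any $\eta>0$ pick $\cR$ that is $\eta$-optimal for $T+1$ rounds, run the simulation, and conclude $\OPT_X(T,\epsilon,\delta)\le \OPT_X(T+1,\epsilon,\delta)+\eta$ — wait, this is the wrong direction, so instead one simply notes the inequality $\sup_{x\in\Phi_T}\dist(\hat x,x)\ge\sup_{x\in\Phi_{T+1}}\dist(\hat x,x)$ holds pointwise for every responder and every $\cR$, and monotonicity of $\inf$ and $\sup$ under pointwise domination finishes the argument with no $\eta$'s needed.
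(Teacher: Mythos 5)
Your simulation goes in the wrong direction, and this is not merely a bookkeeping issue. To prove $\OPT_X(T+1,\epsilon,\delta)\le\OPT_X(T,\epsilon,\delta)$ one must exhibit, for every (near-optimal) $T$-round reconstructor, a $(T+1)$-round reconstructor that is at least as good; the paper does exactly this by taking an $\alpha$-optimal $T$-round strategy, appending a dummy query whose \emph{answer is ignored}, and outputting the unchanged guess. You instead start from an arbitrary $(T+1)$-round $\cR$ and truncate it to a $T$-round $\cR'$ by fabricating the last round. The inequality $\sup_{x\in\Phi_T}\dist(\hat x,x)\ge\sup_{x\in\Phi_{T+1}}\dist(\hat x,x)$ is true for a \emph{fixed} transcript and a \emph{fixed} guess, but it does not survive ``taking suprema over responders'': in the genuine $(T+1)$-round game the responder also chooses the answer to $q_{T+1}$, and $\cR$'s final guess depends on that answer, so the worst-case responder against $\cR$ need not produce the transcript your $\cR'$ fabricates. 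Concretely, on $X=[0,1]$ with $\epsilon=0$, $\delta=0.1$, let $\cR$ query $0$ twice, output the midpoint of the feasible interval if $r_2=r_1$, and output $0$ otherwise. Then $\cR'$ (which fabricates $r_2=r_1$) has worst-case error $0.1$, while $\cR$ against a responder answering $r_1=0.5$, $r_2=0.55$ has error at least $0.6$. Thus $\sup_{\mathrm{RSP}}$ of your left-hand side can be strictly \emph{smaller} than $\sup_{\mathrm{RSP}}$ of your right-hand side, and the claimed domination of the two $\inf\sup$ quantities fails.

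Even granting the domination, the quantifiers still point the wrong way: your infimum on the left ranges only over reconstructors of the form $\cR'$, a particular subset of $T$-round strategies, so a priori it only bounds something that is $\ge\OPT_X(T,\epsilon,\delta)$, which is useless for concluding $\OPT_X(T,\epsilon,\delta)\ge\OPT_X(T+1,\epsilon,\delta)$ (one would additionally need that every $T$-round strategy arises as some $\cR'$, which you do not argue). The repair is the paper's one-line construction: extend rather than truncate, and make the extra query's answer irrelevant to the output, so that $\Phi_{T+1}\subseteq\Phi_T$ together with an unchanged guess immediately yields $\OPT_X(T+1,\epsilon,\delta)\le\OPT_X(T,\epsilon,\delta)+\alpha$ for every $\alpha>0$.
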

\begin{proof}[Proof of \Cref{cor:monotone}]
{We show that for every \( T \ge 0 \), the function \( T \mapsto \OPT_X(T,\epsilon,\delta) \) is non-increasing; that is,
\[
\OPT_X(T+1,\epsilon,\delta) \le \OPT_X(T,\epsilon,\delta).
\]

Fix any \( \alpha > 0 \). By the definition of \( \OPT_X(T,\epsilon,\delta) \), there exists a reconstructor \( \mathcal{R} \) that, after \( T \) queries, guarantees an error at most \( \OPT_X(T,\epsilon,\delta) + \alpha \) against any responder.

Now consider a new reconstructor \( \mathcal{R}' \) for \( T+1 \) rounds, which simulates \( \mathcal{R} \) for the first \( T \) queries, and then issues an arbitrary “dummy” query at round \( T+1 \), ignores the response, and simply outputs the same guess \( \hat{x}_{T+1} := \hat{x}_T \) that \( \mathcal{R} \) would have produced after \( T \) rounds.

Since the feasible region after \( T+1 \) queries is always contained in the feasible region after \( T \) queries, and since the final guess remains the same, the reconstruction error of \( \mathcal{R}' \) is at most \( \OPT_X(T,\epsilon,\delta) + \alpha \) for any responder.

As this holds for every \( \alpha > 0 \), it follows that
\[
\OPT_X(T+1,\epsilon,\delta) \le \OPT_X(T,\epsilon,\delta),
\]
as required.}
\end{proof}

\section{Proof of Theorem~\ref{thm:Jung}}\label{app:Jung}

This section presents the complete proof of Theorem~\ref{thm:Jung}. The argument relies on geometric concepts, including the Chebyshev radius, the diameter of a set, and an additional metric space invariant referred to as the diameter-radius profile.

We begin by recalling the relevant definitions. For a subset \(S \subseteq X\), the \emph{Chebyshev radius} of \(S\), denoted~\(r(S)\), and the \emph{diameter} of~\(S\), denoted~\(\diam \; S\), are defined by
\[
  r(S) = \adjustlimits\inf_{q \in X} \sup_{x \in S} \dist(x, q), 
  \qquad 
  \diam \; S = \sup_{x, y \in S} \dist(x, y).
\]

The supremum and infimum of a set of real numbers \(\cA \subset \RR\) serve the same purpose as the maximum and minimum. The key difference is that the supremum or infimum may not be attained by any element of \(\cA\). In such cases, one can approximate it by a sequence \(\{a_i\}_{i \in \mathbb{N}} \subseteq \cA\) satisfying
\[
\lim_{i \to \infty} a_i = \sup \cA
\quad \text{or} \quad
\lim_{i \to \infty} a_i = \inf \cA.
\]

Another important invariant, the diameter-radius profile, represents the maximal radius of an enclosing ball over all subsets of \(X\) with diameter at most \(\alpha\). This invariant is formally defined as
\[
  \mathtt{e}_X(\alpha) := \sup_{\substack{S \subseteq X \\ \diam(S) \le \alpha}} r(S).
\]

In some cases, the supremum in the definition of the diameter-radius profile is attained; when that happens, we refer to the corresponding subset \(S \subset X\) as \emph{extremal}. In general, even when the supremum is not attained, we may consider a sequence of subsets \(\{S_m\}_{m \in \NN}\) of bounded diameter, \(\diam \; S_m \le \alpha\), whose Chebyshev radii converge to the supremum:
\[
  \lim_{m \to \infty} r(S_m) = \mathtt{e}_X(\alpha).
\]

The statement of Theorem~\ref{thm:Jung} consists of two parts: an exact (tight) expression for \(\OPT\) in terms of the function \(\mathtt{e}_X\), and upper and lower bounds on \(\mathtt{e}_X\).

We begin with the proof of the first part.

\begingroup
\def\thetheorem{\ref{thm:Jung}}
\begin{theorem}[First part]
Let \( X \) be a totally bounded metric space. Then, for any \( \epsilon, \delta \ge 0 \),
\[
\OPT_X(\epsilon,\delta) = \mathtt{e}_X\bigl((2+\epsilon)\delta\bigr).
\]
\end{theorem}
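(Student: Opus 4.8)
\emph{Overview of the plan.} I would prove the two inequalities $\OPT_X(\epsilon,\delta)\le \mathtt{e}_X\big((2+\epsilon)\delta\big)$ and $\OPT_X(\epsilon,\delta)\ge \mathtt{e}_X\big((2+\epsilon)\delta\big)$ separately. The lower bound holds for arbitrary metric spaces; total boundedness is used only for the upper bound, and there it enters in one delicate place --- proving that the diameter--radius profile $\mathtt{e}_X$ is right-continuous --- which I expect to be the main obstacle.

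\emph{Lower bound.} Fix any $S\subseteq X$ with $\diam(S)\le (2+\epsilon)\delta$. The responder I would use answers each query $q$ with $r:=(1+\epsilon)\,\dist(q,S)+\delta$, where $\dist(q,S):=\inf_{s\in S}\dist(q,s)$, and this keeps all of $S$ feasible: for every $s\in S$, the bound $\dist(q,S)\le\dist(q,s)$ gives $r\le(1+\epsilon)\dist(q,s)+\delta$, while the triangle inequality and $\diam(S)\le(2+\epsilon)\delta$ give $\dist(q,s)\le\dist(q,S)+(2+\epsilon)\delta\le(1+\epsilon)^2\dist(q,S)+(2+\epsilon)\delta=(1+\epsilon)r+\delta$; hence $r=_{\epsilon,\delta}\dist(q,s)$, so $S\subseteq\Phi_t$ for every round $t$ by induction. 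Then for any final guess $\hat x_T\in X$ the definition of the Chebyshev radius gives $\sup_{x\in\Phi_T}\dist(\hat x_T,x)\ge\sup_{s\in S}\dist(\hat x_T,s)\ge r(S)$, so $\OPT_X(T,\epsilon,\delta)\ge r(S)$. Taking the supremum over admissible $S$ yields $\OPT_X(T,\epsilon,\delta)\ge \mathtt{e}_X\big((2+\epsilon)\delta\big)$ for every $T$, hence also for the limit.

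\emph{Upper bound.} Using total boundedness, fix $\alpha>0$ and a finite $\alpha$-cover of size $N_\alpha$, and let the reconstructor query all of its points. A short triangle-inequality argument shows the resulting feasible region $\Phi$ satisfies $\diam(\Phi)\le(2+\epsilon)\delta+\alpha'$ with $\alpha':=((1+\epsilon)^2+1)\alpha$: for $A,B\in\Phi$, pick a cover point $q$ within $\alpha$ of $B$; feasibility of $B$ forces $r_q\le(1+\epsilon)\alpha+\delta$, feasibility of $A$ then forces $\dist(A,q)\le(1+\epsilon)r_q+\delta$, and routing through $q$ gives the claim. Having the reconstructor output an approximate Chebyshev center of $\Phi$ yields $\OPT_X(N_\alpha,\epsilon,\delta)\le \mathtt{e}_X\big((2+\epsilon)\delta+\alpha'\big)$; combining with monotonicity in $T$ (\Cref{cor:monotone}) and letting $\alpha\downarrow 0$,
\[
\OPT_X(\epsilon,\delta)\ \le\ \inf_{\beta>(2+\epsilon)\delta}\mathtt{e}_X(\beta).
\]
Together with the lower bound, the theorem reduces to the identity $\inf_{\beta>(2+\epsilon)\delta}\mathtt{e}_X(\beta)=\mathtt{e}_X\big((2+\epsilon)\delta\big)$, i.e.\ right-continuity of $\mathtt{e}_X$.

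\emph{Main obstacle: right-continuity of $\mathtt{e}_X$.} This is false for general metric spaces (already for finite ones) and even for some bounded, non--totally-bounded spaces, so the argument must genuinely exploit total boundedness. The plan is: (i) pass to the completion $\hat X$ --- compact by the metric Heine--Borel characterization --- and show $\mathtt{e}_X=\mathtt{e}_{\hat X}$ by a point-approximation argument (density of $X$ in $\hat X$, plus the fact that passing to closures changes neither diameter nor Chebyshev radius); (ii) recall that the hyperspace $(\cK(\hat X),d_H)$ of nonempty compact subsets, with the Hausdorff metric, is compact when $\hat X$ is; (iii) check that $S\mapsto\diam(S)$ and $S\mapsto r(S)$ are continuous on $(\cK(\hat X),d_H)$ by bounding their oscillation by $d_H$; (iv) pick compact sets $S_m$ with $\diam(S_m)\le(2+\epsilon)\delta+1/m$ and $r(S_m)\to\inf_{\beta>(2+\epsilon)\delta}\mathtt{e}_{\hat X}(\beta)$, extract a $d_H$-convergent subsequence $S_{m_k}\to S$, and conclude from (iii) that $\diam(S)\le(2+\epsilon)\delta$ and $r(S)=\inf_{\beta>(2+\epsilon)\delta}\mathtt{e}_{\hat X}(\beta)$, so that $\mathtt{e}_{\hat X}\big((2+\epsilon)\delta\big)\ge\inf_{\beta>(2+\epsilon)\delta}\mathtt{e}_{\hat X}(\beta)$; the reverse inequality is monotonicity, and (i) transfers the conclusion back to $\mathtt{e}_X$. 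The routine complications where a supremum or infimum is not attained are handled by standard $\eta$-approximations and suppressed here.
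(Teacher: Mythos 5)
Your proposal is correct and follows essentially the same route as the paper: the same responder reply \(r=(1+\epsilon)\dist(q,S)+\delta\) preserving a (near-)extremal set for the lower bound, the same \(\alpha\)-cover argument giving \(\diam(\Phi)\le(2+\epsilon)\delta+((1+\epsilon)^2+1)\alpha\) for the upper bound, and the same reduction to right-continuity of \(\mathtt{e}_X\) proved via the completion, compactness of the hyperspace \((\cK(\hat X),d_H)\), and Lipschitz continuity of the diameter and Chebyshev radius. You also correctly identify right-continuity as the one genuinely delicate step where total boundedness is indispensable.
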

\addtocounter{theorem}{-2}
\endgroup
\begin{proof}

To prove the equality, we establish both inequalities:
\[
\OPT_X(\epsilon,\delta) \ge \mathtt{e}_X\bigl((2+\epsilon)\delta\bigr)
\quad \text{and} \quad 
\OPT_X(\epsilon,\delta) \le \mathtt{e}_X\bigl((2+\epsilon)\delta\bigr).
\]

\emph{Lower bound}. To show that the optimal error is \emph{at least} \(\mathtt{e}_X((2+\epsilon)\delta)\), it suffices to construct responder strategies that guarantee a reconstruction error arbitrarily close to this value. 

Although the supremum in the definition of \(\mathtt{e}_X\) may not be attained by any single set, we can approximate it by a sequence of sets \(\{S_m\}\) with \(\diam \; S_m \le (2+\epsilon)\delta\) and \(r(S_m) \longrightarrow \mathtt{e}_X((2+\epsilon)\delta)\). For each such set, we define a responder strategy that preserves \(S_m\) inside the feasible region, thereby ensuring that the reconstructor cannot achieve error smaller than \(r(S_m)\). Taking the limit yields the desired lower bound.

\emph{Upper bound}. 
To establish that the optimal error does not exceed \(\mathtt{e}_X((2+\epsilon)\delta)\), we construct a sequence of reconstruction strategies, each using a query set of size \(T_n\), such that the corresponding error remains within \(\mathtt{e}_X((2+\epsilon)\delta) + \alpha_n\), where \(\alpha_n \to 0\).

Since \(X\) is totally bounded, for any precision level \(\alpha > 0\), there exists a finite set \(T_\alpha \subset X\) that forms an \(\alpha\)-cover of the space. After querying every point in such a cover, a feasible region has diameter smaller than \((2+\epsilon)\delta+\left( (1+\epsilon)^2 +1 \right) \alpha\), and hence \[\OPT(T_\alpha, \epsilon, \delta) \le \mathtt e_X((2+\epsilon)\delta+\left( (1+\epsilon)^2 +1 \right) \alpha).\] The remaining step is to show that diameter-radius-profile \(\mathtt{e}_X\) is right-continuous, i.e.,
\[
  \mathtt{e}_X((2+\epsilon)\delta+\alpha') \xrightarrow[\alpha' \to 0]{} \mathtt{e}_X((2+\epsilon)\delta),
\]
which requires general machinery from topology—specifically, endowing the collection of compact subsets of \(X\) with a natural metric that measures how far these subsets are from each other within~\(X\). This part of the proof is deferred to \cref{app:right_continuity_e}.
 


\subsection{Lower bound via extremal sets}\label{app:Jung-lower}

As mentioned earlier, the supremum 
\[
\mathtt{e}_X\left((2+\epsilon)\delta\right) := \sup_{\substack{S \subseteq X \\ \diam(S) \le (2+\epsilon)\delta}} r(S)
\]
plays the role of a maximum, although it may not actually be attained. In such cases, we simulate extremal sets—that is, sets that would attain this maximum—by considering approximately extremal sets: a sequence \(\{S_m\}_{m \in \NN}\) satisfying
\[
r(S_m) \underset{m \to \infty}{\longrightarrow} \mathtt e_X(\alpha), \qquad \diam \, S_m \le (2+\epsilon)\delta.
\]

For any \(m \in \NN\), define a responder strategy that, given any query \(q \in X\), replies with
\[
r_q := (1+\epsilon) \inf_{s \in S_m} \dist(q, s) + \delta.
\]
Here, the infimum plays the role of a minimum; so if the minimum is attained at some point \(B \in S_m\), this strategy effectively places \(B\) on the boundary of the feasible region (see Figure~\ref{fig:feasible-sets-maximal}).

Let us elaborate. We will show that for any point \(s \in S_m\), the response satisfies
\[
  r_q \le (1+\epsilon)\dist(q, s) + \delta,
\]
and then, using the triangle inequality, we will obtain the reverse bound,
\[
  r_q \ge (1+\epsilon)\dist(q, s) - \delta.
\]

The inequality \(r_q \le (1+\epsilon)\dist(q, s) + \delta\) follows directly from the definition of the infimum, which represents the minimal possible distance:
\[
  (1+\epsilon)\dist(q, s) + \delta
  \ge (1+\epsilon)\inf_{x \in S_m} \dist(q, x) + \delta
  = r_q.
\]

For the reverse direction, express \(\inf_{y \in S_m} \dist(q, y)\) in terms of \(r_q\):
\[
\inf_{y \in S_m} \dist(q, y) = \frac{r_q - \delta}{1+\epsilon}.
\]

By the triangle inequality, for any two points \(y, s \in S_m\), we have
\[
\dist(s, q) \le \dist(y, q) + \dist(s, y).
\]
Since \(\dist(s, y) \le \diam \; S_m \le (2+\epsilon)\delta\), it follows that
\[
\dist(s, q) \le \dist(y, q) + (2+\epsilon)\delta.
\]

Combining this with the inequality \((1+\epsilon)^2 \dist(y, q) \ge \dist(y, q)\), and taking the infimum over \(y \in S_m\), we obtain that for any point \(s \in S_m\),
\[
(1+\epsilon) r_q + \delta = (1+\epsilon)^2 \inf_{y \in S_m} \dist(q, y) + (2+\epsilon)\delta \ge \dist(s, q).
\]

Therefore, \(r_q =_{\epsilon,\delta} \dist(s, q)\) for every point \(s \in S_m\), and hence the entire set \(S_m\) lies within the feasible region. Once the reconstructor selects a guess point \(\hat{x}\), the responder may choose any point from the feasible region, and in particular any \(s \in S_m\).

By the definition of the Chebyshev radius,
\[
  r(S_m) \le \sup_{x^* \in S_m} \dist(x^*, \hat{x}),
\]
so for any \(\alpha > 0\), the responder can choose a point \(x^* \in S_m\) such that \(\dist(\hat{x}, x^*) > r(S_m) - \alpha.\)

It follows that
\[
\OPT(T, \epsilon, \delta) \ge r(S_m), \quad \text{and} \quad r(S_m) \underset{m \to \infty}{\longrightarrow} \mathtt{e}_X((2+\epsilon)\delta).
\]
Therefore,
\[
\OPT(T, \epsilon, \delta) \ge \mathtt{e}_X((2+\epsilon)\delta).
\]

\subsection{Upper bound via $\alpha$-covers}\label{app:Jung-upper}

To show that the optimal error is \emph{at most} \(\mathtt{e}_X((2+\epsilon)\delta)\), it suffices to construct a sequence of reconstructor strategies, each using \(T_n\) queries, that guarantee a reconstruction error of at most \(\mathtt{e}_X((2+\epsilon)\delta) + \alpha_n\), where \(\alpha_n \to 0\).

Since the space \(X\) is totally bounded, for any \(\alpha > 0\) there exists a finite \(\alpha\)-cover \(T_\alpha \subset X\), consisting of \(T_\alpha\) points. 

Take a sequence of \(\alpha_n\)-nets with \(\alpha_n \to 0\), and denote the number of queries in the corresponding nets by \(T_n := |T_{\alpha_n}|\). Since $X$ is totally bounded, these finite nets exist.
    
    Denote the points of the \(\alpha_n\)-net by $\{ q_t\}_{t \in [T_n]}$, and the responses of the responder by \(\{r_t\}_{t \in [T_n]}\).
    We claim that 
    \[
    \diam(\Phi(\{ q_t, r_t\}_{t \in [T_n]})) \le (2+\epsilon)\delta + ((1+\epsilon)^2+1)\alpha_n.
    \]
    To see this, take \emph{any} two points $A, B$ in the feasible region after the interaction. 
    
    There exists a query \(q \in \{q_t\}_{t \in [T_n]}\) such that \(\dist(A, q) \le \alpha_n\). 
    Let \(r\) be the responder’s answer to this query. 
    Since \(A \in \Phi(q, r)\) and \(\dist(q, A) \le \alpha_n\), we have
    \[
  r \le (1+\epsilon)\alpha_n + \delta.
    \]
    On the other hand, since \(B \in \Phi(q, r)\), we have
    \[
  \dist(q, B) \le (1+\epsilon)r + \delta 
  \le (1+\epsilon)^2\alpha_n + (2+\epsilon)\delta. 
    \]
    By the triangle inequality,
    \[
  \dist(A, B) 
  \le \dist(q, B) + \dist(A, q) 
  \le \bigl((1+\epsilon)^2 + 1\bigr)\alpha_n + (2+\epsilon)\delta.
    \]
    Hence \(\diam \, \Phi(\{q_t, r_t\}_{t \in [T_n]}) \le (2+\epsilon) \delta + ((1+\epsilon)^2+1) \alpha_n\).

    Denote by \(\alpha'_n\) the quantity \(((1+\epsilon)^2 + 1)\alpha_n\). 
    The Chebyshev radius of \(\Phi\bigl(\{(q_t, r_t)\}_{t \in [T_n]}\bigr)\) is therefore bounded by \(\mathtt{e}_X\bigl((2+\epsilon)\delta + \alpha'_n\bigr)\), and hence
    \[
  \OPT_X(T_n, \epsilon, \delta) \le \mathtt{e}_X\bigl((2+\epsilon)\delta + \alpha'_n\bigr).
    \]
    By the right-continuity of \(\mathtt{e}_X\) (see \cref{app:right_continuity_e}), for every sequence of nonnegative numbers \(\alpha'_n \to 0\) we have
\[
  \mathtt{e}_X\bigl((2+\epsilon)\delta + \alpha'_n\bigr) 
  \;\longrightarrow\; 
  \mathtt{e}_X\bigl((2+\epsilon)\delta\bigr).
\]
Therefore, we conclude the desired bound
\[
  \OPT_X(\epsilon, \delta) \le \mathtt{e}_X\bigl((2+\epsilon)\delta\bigr).
\]

\end{proof}

The proof of the second part of the theorem relies on general properties of the function \(\mathtt{e}_X\) that hold for arbitrary metric spaces.

\begingroup
\def\thetheorem{\ref{thm:Jung}}
\begin{theorem}[Second part]
If the distance \( (2+\epsilon)\delta \) is realized in a totally bounded metric space \( X \), i.e., there exist a pair of points at this distance, then
\[
\frac{1}{2}(2+\epsilon)\delta \leq \OPT_X(\epsilon,\delta) \leq (2+\epsilon)\delta.
\]
\end{theorem}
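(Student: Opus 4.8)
The plan is to obtain this immediately from the first part of Theorem~\ref{thm:Jung} together with Observation~\ref{obs:jung-bounds}. The first part already gives $\OPT_X(\epsilon,\delta) = \mathtt{e}_X\bigl((2+\epsilon)\delta\bigr)$, so, writing $\alpha := (2+\epsilon)\delta$, it remains only to show that $\tfrac{1}{2}\alpha \le \mathtt{e}_X(\alpha) \le \alpha$ whenever $\alpha$ is realized as a distance in $X$ --- which is exactly the content of Observation~\ref{obs:jung-bounds}. For completeness I would reproduce its two short arguments.

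For the upper bound, I would take an arbitrary nonempty $S \subseteq X$ with $\diam(S) \le \alpha$, fix any $x_0 \in S$, and note that $\sup_{y \in S}\dist(x_0,y) \le \diam(S) \le \alpha$; hence $r(S) \le \alpha$, and taking the supremum over all such $S$ yields $\mathtt{e}_X(\alpha) \le \alpha$. (This direction does not use the hypothesis that $\alpha$ is realized.) For the lower bound, I would invoke the hypothesis to pick $a,b \in X$ with $\dist(a,b) = \alpha$ and set $S := \{a,b\}$, so $\diam(S) = \alpha$. For every candidate center $q \in X$ the triangle inequality gives $\dist(q,a) + \dist(q,b) \ge \dist(a,b) = \alpha$, so $\max\{\dist(q,a),\dist(q,b)\} \ge \alpha/2$; taking the infimum over $q$ gives $r(S) \ge \alpha/2$, and therefore $\mathtt{e}_X(\alpha) \ge r(S) \ge \alpha/2$. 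Combining these two bounds with $\OPT_X(\epsilon,\delta) = \mathtt{e}_X(\alpha)$ completes the proof.

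I do not expect any real obstacle here: the entire argument is the triangle inequality plus the already-established first part. The only subtle point --- and the reason the hypothesis ``$(2+\epsilon)\delta$ is realized'' appears in the statement --- is that the lower bound $\mathtt{e}_X(\alpha) \ge \alpha/2$ genuinely fails without it: for instance, in a discrete space in which all nontrivial distances equal $1$, every set of diameter at most $\alpha < 1$ is a singleton, so $\mathtt{e}_X(\alpha) = 0$. Thus the only care needed is to invoke the hypothesis in exactly the one place where the realized pair $\{a,b\}$ is used.
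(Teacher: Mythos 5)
Your proposal is correct and follows essentially the same route as the paper: invoke the first part of Theorem~\ref{thm:Jung} to reduce to bounding \(\mathtt{e}_X\bigl((2+\epsilon)\delta\bigr)\), prove the upper bound by enclosing any set of diameter \(\alpha\) in a ball of radius \(\alpha\) centered at one of its points, and prove the lower bound via the triangle inequality applied to a realized pair at distance \(\alpha\). Your additional remark on why the realization hypothesis is needed is accurate but not part of the paper's argument.
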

\addtocounter{theorem}{-2}
\endgroup
\begin{proof}
By the first part of Theorem~\ref{thm:Jung}, which we proved earlier,
\[
\OPT_X(\epsilon, \delta) = \mathtt{e}_X((2+\epsilon)\delta).
\]
So it suffices to prove that
\[
\frac{1}{2}(2+\epsilon)\delta \le \mathtt{e}_X((2+\epsilon)\delta) \le (2+\epsilon)\delta.
\]
To show the lower bound \(\mathtt{e}_X((2+\epsilon)\delta) \ge \frac{1}{2}(2+\epsilon)\delta\), it suffices to construct a set \(S \subseteq X\) of diameter at most \((2+\epsilon)\delta\) such that \emph{every} enclosing ball of \(S\) must have radius at least \(\frac{1}{2}(2+\epsilon)\delta\).
Indeed, by assumption, there exist two points \(y_1, y_2 \in X\) such that
\[
\dist(y_1, y_2) = (2+\epsilon)\delta.
\]
Then for any point \(x \in X\), the triangle inequality implies
\[
\dist(y_1, x) + \dist(x, y_2) \ge (2+\epsilon)\delta,
\]
so one of the two distances must be at least \(\frac{1}{2}(2+\epsilon)\delta\).
Therefore, no point in \(X\) lies at a distance less than \(\tfrac{1}{2}(2+\epsilon)\delta\) from both \(y_1\) and \(y_2\), and thus any ball containing both points must have radius at least this value.
To show the upper bound \(\mathtt{e}_X((2+\epsilon)\delta) \le (2+\epsilon)\delta\), it suffices to find an enclosing ball of radius \((2+\epsilon)\delta\) for \emph{any} set \(S \subseteq X\) of diameter at most \((2+\epsilon)\delta\).
Indeed, let \(x \in S\) be any point of the set, and consider the ball of radius \((2+\epsilon)\delta\) centered at \(x\). Since the diameter of \(S\) is at most \((2+\epsilon)\delta\), every point \(y \in S\) satisfies \(\dist(x, y) \le \diam(S) \le (2+\epsilon)\delta\), so \(S\) is entirely contained in this ball. This proves the claim.
\end{proof}

\section{Feasible-region calculus}\label{app:feasible}

The goal of this section is to determine when there exists an answer \(r \in \RR_+\) such that a given set \(S \subset X\) is contained in the feasible region \(\Phi(q, r)\) (see Equation \ref{eq:feasible_region}) resulting from a query at point \(q\). 

We will answer this question and provide a criterion for such an answer in Lemma \ref{lem:rminmax}.

For any set \(S \subset X\), define its \(\alpha\)-neighborhood by
\[
S_\alpha := \bigcup_{x \in S} B(x, \alpha),
\]
where \(B(x, \alpha)\) denotes the ball of radius \(\alpha\) centered at \(x\).

We will also be interested in the following optimization problem: what is the largest value of \(\alpha\) such that the \(\alpha\)-neighborhood of a fixed set \(S \subset X\) can be entirely contained in the feasible region for some answer to the query \(q\)? We will describe this quantity for convex Euclidean subspaces and specify the answer that the responder must give in order to preserve this neighborhood within the feasible region.

To answer the first question, it is useful to consider two natural candidates for the answer:
\[
r^{\min}_q(S) := \frac{\sup_{s \in S} \dist(s, q) - \delta}{1+\epsilon}, 
\qquad
r^{\max}_q(S) := (1+\epsilon) \inf_{s \in S} \dist(s, q) + \delta.
\]
Intuitively, \(r^{\min}_q(S)\) places the farthest point of \(S\) on the outer boundary of the feasible region while keeping all of \(S\) inside it, and \(r^{\max}_q(S)\) places the nearest point of \(S\) on the inner boundary while still preserving inclusion (see \cref{fig:feasible-sets-minimal}, \cref{fig:feasible-sets-maximal}).

Supremum and infimum of the set of numbers \(\{\dist(y, s)\}_{s \in S}\) play the same role as \(\min_{s \in S} \dist(s, q)\) and \(\max_{s \in S} \dist(s, q)\). The only difference is that sometimes the minimum or maximum is not attained by any point in the set \(S\). In such cases, one must take a sequence of points \(\{x_i\}_{i \in \mathbb{N}} \subseteq S\) that plays the role of the minimum or maximum, in the sense that
\[
\lim_{i \to \infty} \dist(y, x_i) = \sup_{s \in S} \dist(y, s)
\quad \text{or} \quad
\lim_{i \to \infty} \dist(y, x_i) = \inf_{s \in S} \dist(y, s).
\]

\begin{lemma}[Consistency window]\label{lem:rminmax}
Fix a set \(S \subset X\). For a given query \(q\), there exists an answer \(r\) such that \(S\) is contained in the feasible region \(\Phi(q, r)\) if and only if \(r^{\min}_q(S) \le r^{\max}_q(S)\).  
Moreover, this inclusion holds if and only if \(r \in [\,r^{\min}_q(S),\, r^{\max}_q(S)]\)\footnote{Note that we do not require the answer to be positive; it may be negative, yet the feasible region can still be non-empty.}.
\end{lemma}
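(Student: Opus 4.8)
The plan is to unfold the definition of the single-query feasible region and observe that the inclusion $S\subseteq\Phi(q,r)$ decomposes into two families of affine inequalities in $r$: those coming from the outer constraint, which together give a lower bound on $r$, and those coming from the inner constraint, which together give an upper bound. First I would recall that $x\in\Phi(q,r)$ means precisely that $\dist(x,q)\le(1+\epsilon)r+\delta$ \emph{and} $r\le(1+\epsilon)\dist(x,q)+\delta$; hence $S\subseteq\Phi(q,r)$ is equivalent to the conjunction, over all $s\in S$, of $\dist(s,q)\le(1+\epsilon)r+\delta$ and $r\le(1+\epsilon)\dist(s,q)+\delta$.

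Next I would treat the two families separately. The family $\bigl\{\dist(s,q)\le(1+\epsilon)r+\delta:s\in S\bigr\}$ holds simultaneously iff $\sup_{s\in S}\dist(s,q)\le(1+\epsilon)r+\delta$ — a set of ``$\le$'' constraints with a common right-hand side holds for all $s$ exactly when the supremum of the left-hand sides satisfies it, whether or not that supremum is attained. Since $1+\epsilon>0$, this rearranges to $r\ge\bigl(\sup_{s\in S}\dist(s,q)-\delta\bigr)/(1+\epsilon)=r^{\min}_q(S)$. Symmetrically, the family $\bigl\{r\le(1+\epsilon)\dist(s,q)+\delta:s\in S\bigr\}$ holds for all $s$ iff $r\le\inf_{s\in S}\bigl[(1+\epsilon)\dist(s,q)+\delta\bigr]$, and since $t\mapsto(1+\epsilon)t+\delta$ is affine and increasing this infimum equals $(1+\epsilon)\inf_{s\in S}\dist(s,q)+\delta=r^{\max}_q(S)$. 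Combining the two, $S\subseteq\Phi(q,r)$ holds iff $r^{\min}_q(S)\le r\le r^{\max}_q(S)$, which is the ``moreover'' statement; and a valid $r$ exists iff this interval is nonempty, i.e.\ iff $r^{\min}_q(S)\le r^{\max}_q(S)$, which is the main claim.

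There is no real obstacle here — the argument is a routine ``quantifier swap'' — so the only things worth flagging explicitly are two mild edge cases. One is that $\sup_{s\in S}\dist(s,q)$ and $\inf_{s\in S}\dist(s,q)$ need not be attained; this is harmless, since ``for all $s\in S$'' constraints pass to the supremum and infimum as used above, and one can, if desired, phrase the verification through approximating sequences $\{x_i\}\subseteq S$ with $\dist(x_i,q)\to\sup$ (resp.\ $\to\inf$), as set up before the lemma. The other is the degenerate possibility that $S$ is unbounded and $\sup_{s\in S}\dist(s,q)=+\infty$, hence $r^{\min}_q(S)=+\infty$: then no (finite) $r$ satisfies the outer constraints for all of $S$, the interval $[r^{\min}_q(S),r^{\max}_q(S)]$ is empty under the usual conventions, and the stated equivalence holds vacuously. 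I would also note, as in the footnote, that $r$ is permitted to be negative, so no positivity requirement interacts with the interval criterion.
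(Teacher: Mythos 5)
Your proposal is correct and follows essentially the same route as the paper: both reduce $S\subseteq\Phi(q,r)$ to the two families of inequalities and pass the universal quantifier over $s\in S$ to the supremum (for the outer constraint) and the infimum (for the inner constraint), yielding exactly the window $[r^{\min}_q(S),r^{\max}_q(S)]$. The paper merely phrases the necessity direction contrapositively by extracting a witness point violating feasibility, which is the same content as your quantifier swap.
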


\begin{proof}
Assume the responder gives an answer \(r\) such that \(S \subset \Phi(q, r)\).  
By the definition of the supremum, if
\[
r < \frac{\sup_{x \in S} \dist(x, q) - \delta}{1 + \epsilon},
\]
then there exists a point \(A \in S\) such that
\[
r < \frac{\dist(A, q) - \delta}{1 + \epsilon},
\]
and hence \(A \notin \Phi(q, r)\), contradicting the assumption.

Similarly, if \(r > r^{\max}_q(S) = (1 + \epsilon) \inf_{s \in S} \dist(s, q) + \delta\), then there exists a point \(B \in S\) such that
\[
r > (1 + \epsilon) \dist(B, q) + \delta,
\]
and therefore \(B \notin \Phi(q, r)\).

Hence, for any \(r\) outside the interval \([r^{\min}_q(S), r^{\max}_q(S)]\), there exists a point in \(S\) that lies outside \(\Phi(q, r)\). This shows that if \(S \subset \Phi(q, r)\), then necessarily \(r \in [r^{\min}_q(S), r^{\max}_q(S)]\), and in particular \(r^{\min}_q(S) \le r^{\max}_q(S)\).

To prove the converse, suppose \(r \in [r^{\min}_q(S), r^{\max}_q(S)]\). Take any point \(s \in S\). We must verify the two inequalities:
\[
\dist(s, q) \le (1 + \epsilon)r + \delta \quad \text{and} \quad r \le (1 + \epsilon) \dist(s, q) + \delta.
\]

Indeed, since
\[
r \le r^{\max}_q(S) = (1 + \epsilon) \inf_{x \in S} \dist(x, q) + \delta \le (1 + \epsilon) \dist(s, q) + \delta,
\]
and
\[
r \ge r^{\min}_q(S) = \frac{\sup_{x \in S} \dist(x, q) - \delta}{1 + \epsilon} \ge \frac{\dist(s, q) - \delta}{1 + \epsilon},
\]
we have \(s \in \Phi(q, r)\). Since \(s \in S\) was arbitrary, it follows that \(S \subset \Phi(q, r)\), completing the proof.
\end{proof}

The observation above does not rely on any structural properties of the metric space; in particular, it holds even if the triangle inequality is not satisfied.
However, to determine the largest neighborhood of a set that may remain feasible after a query, we need to use some form of continuity in the space. That’s why, from this point on, we assume that the metric space \(X\) is a convex subset of $\RR^n$.

\begin{observation}
\label{ob:r_neighborhood}
Let \( S \subseteq X \), and let \( \alpha > 0 \). Suppose that for every \( x \in S \), the Euclidean ball \( B(x, \alpha) \subseteq \mathbb{R}^n \) is entirely contained in \( X \). Then:
\[
r^{\min}_q(S_\alpha) = r^{\min}_q(S) + \frac{\alpha}{1+\epsilon},
\qquad
r^{\max}_q(S_\alpha) = \max\left\{\delta, \, r^{\max}_q(S) - \alpha(1+\epsilon)\right\}.
\]
\end{observation}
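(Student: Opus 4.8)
The plan is to reduce the whole statement to one elementary fact about Euclidean balls and then interchange suprema and infima. Concretely, I would first record the sub-claim: for any query point $q\in X$ and any $x\in S$ with $B(x,\alpha)\subseteq X$,
\[
\sup_{y\in B(x,\alpha)}\dist(y,q)=\dist(x,q)+\alpha,
\qquad
\inf_{y\in B(x,\alpha)}\dist(y,q)=\max\{0,\dist(x,q)-\alpha\}.
\]
The upper bound $\dist(y,q)\le\dist(x,q)+\alpha$ is the triangle inequality, and it is attained (for $x\ne q$) at the point obtained by moving a distance $\alpha$ from $x$ along the ray pointing away from $q$; if $x=q$ any point of the sphere of radius $\alpha$ works, and $\dist(x,q)+\alpha=\alpha$ is still correct. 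Symmetrically, $\dist(y,q)\ge\dist(x,q)-\alpha$ always, with equality at the point of the segment $[x,q]$ at distance $\alpha$ from $x$ when $\dist(x,q)\ge\alpha$, and the infimum is $0$ when $\dist(x,q)\le\alpha$ since then $q$ lies in the closed ball. This is exactly where I use that $X$ is a convex subset of $\mathbb{R}^n$ with the Euclidean metric: the relevant rays and segments stay inside $X$ once the ball does, so these are genuine distances within $X$.

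Next I would pass from a single ball to $S_\alpha=\bigcup_{x\in S}B(x,\alpha)$. For the farthest-point quantity,
\[
\sup_{y\in S_\alpha}\dist(y,q)
=\sup_{x\in S}\ \sup_{y\in B(x,\alpha)}\dist(y,q)
=\sup_{x\in S}\bigl(\dist(x,q)+\alpha\bigr)
=\alpha+\sup_{x\in S}\dist(x,q),
\]
and substituting into the definition of $r^{\min}_q$ gives $r^{\min}_q(S_\alpha)=\dfrac{\alpha+\sup_{x\in S}\dist(x,q)-\delta}{1+\epsilon}=r^{\min}_q(S)+\dfrac{\alpha}{1+\epsilon}$. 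For the nearest-point quantity, since $t\mapsto\max\{0,t\}$ is continuous and nondecreasing one may pull it through the infimum,
\[
\inf_{y\in S_\alpha}\dist(y,q)
=\inf_{x\in S}\max\{0,\dist(x,q)-\alpha\}
=\max\Bigl\{0,\ \inf_{x\in S}\dist(x,q)-\alpha\Bigr\};
\]
multiplying by $1+\epsilon>0$ and adding $\delta$ converts $\max\{0,\cdot\}$ into $\max\{\delta,\cdot\}$, yielding $r^{\max}_q(S_\alpha)=\max\{\delta,\ (1+\epsilon)\inf_{x\in S}\dist(x,q)+\delta-(1+\epsilon)\alpha\}=\max\{\delta,\ r^{\max}_q(S)-(1+\epsilon)\alpha\}$, as required.

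The only delicate points — and the nearest thing to an obstacle — are bookkeeping. When the supremum or infimum over $S$ is not attained, I would argue through approximating sequences $x_i\in S$, exactly as the discussion preceding Lemma~\ref{lem:rminmax} sets up; continuity of $y\mapsto\dist(y,q)$ and of $\max\{0,\cdot\}$ then justify the interchange of extrema. One should also note that whether $B(x,\alpha)$ is read as an open or a closed ball is irrelevant here: in the borderline case $\dist(x,q)=\alpha$ the infimum is $0$ either way, and the supremum $\dist(x,q)+\alpha$ does not change. I would therefore present the single-ball identity as a two-line sub-claim and leave the rest as the routine interchange-and-substitute computation above.
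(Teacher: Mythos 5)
Your proposal is correct and follows essentially the same route as the paper's proof: both establish $\sup_{y\in S_\alpha}\dist(y,q)=\sup_{s\in S}\dist(s,q)+\alpha$ and $\inf_{y\in S_\alpha}\dist(y,q)=\max\{0,\inf_{s\in S}\dist(s,q)-\alpha\}$ via the triangle inequality in one direction and the ray/segment construction inside the Euclidean balls in the other, handling non-attained extrema with approximating sequences. Your reorganization into a single-ball sub-claim followed by an interchange of extrema is a clean repackaging of the same argument, not a different one.
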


\begin{proof}
We start by analyzing the supremum. We want to show:
\[
\sup_{y \in S_\alpha} \dist(q, y) = \sup_{s \in S} \dist(s, q) + \alpha.
\]

The inequality
\[
\sup_{y \in S_\alpha} \dist(q, y) \le \sup_{s \in S} \dist(s, q) + \alpha
\]
is immediate from the triangle inequality. Indeed, for any point \( y \in S_\alpha \), there exists some \( s \in S \) such that \( y \in B(s, \alpha) \). Then:
\[
\dist(q, y) \le \dist(q, s) + \dist(s, y) \le \sup_{s \in S} \dist(s, q) + \alpha,
\]
and so the inequality holds for all \( y \in S_\alpha \), yielding the upper bound on the supremum.

For the reverse inequality, take a sequence \( \{x_i\}_{i \in \mathbb{N}} \subset S \) such that:
\[
\lim_{i \to \infty} \dist(x_i, q) = \sup_{s \in S} \dist(s, q).
\]
For each \( x_i \), choose a point \( y_i \in B(x_i, \alpha) \) lying along the ray from \( q \) through \( x_i \) such that:
\[
\dist(q, y_i) = \dist(q, x_i) + \alpha.
\]
This is possible because the balls are Euclidean. Then:
\[
\sup_{y \in S_\alpha} \dist(q, y) \ge \lim_{i \to \infty} \dist(q, y_i) = \sup_{s \in S} \dist(s, q) + \alpha,
\]
proving the desired equality.

Now we turn to the infimum:
\[
\inf_{y \in S_\alpha} \dist(q, y) = \max\left\{0, \inf_{s \in S} \dist(s, q) - \alpha \right\}.
\]

First, by the triangle inequality again, we have:
\[
\dist(q, y) \ge \dist(q, s) - \dist(y, s) \ge \inf_{s \in S} \dist(s, q) - \alpha
\]
for all \( y \in S_\alpha \). Also, clearly \( \dist(q, y) \ge 0 \). Therefore,
\[
\inf_{y \in S_\alpha} \dist(q, y) \ge \max\left\{0, \inf_{s \in S} \dist(s, q) - \alpha \right\}.
\]

To show the reverse inequality, consider a sequence \( \{x_i\}_{i \in \mathbb{N}} \subset S \) such that:
\[
\lim_{i \to \infty} \dist(x_i, q) = \inf_{s \in S} \dist(s, q).
\]

If \( \inf_{s \in S} \dist(s, q) < \alpha \), then for some \( x_j \), we have \( \dist(x_j, q) < \alpha \), so \( q \in B(x_j, \alpha) \subset S_\alpha \), and thus:
\[
\inf_{y \in S_\alpha} \dist(q, y) = 0.
\]

Otherwise, all \( x_i \) satisfy \( \dist(x_i, q) \ge \alpha \). In that case, for each \( x_i \), there exists a point \( y_i \in [q, x_i] \) such that \( \dist(x_i, y_i) = \alpha \), i.e., \( y_i \) lies along the segment from \( q \) to \( x_i \), at distance \( \alpha \) from \( x_i \). Then:
\[
\dist(y_i, q) = \dist(x_i, q) - \alpha,
\]
and so:
\[
\inf_{y \in S_\alpha} \dist(q, y) \le \lim_{i \to \infty} \dist(y_i, q) = \inf_{s \in S} \dist(s, q) - \alpha.
\]

This completes the proof.
\end{proof}


Let us denote
\[
  \rho^q_{\min}(S) := \inf_{s \in S} \dist(q, s),
  \qquad
  \rho^q_{\max}(S) := \sup_{s \in S} \dist(q, s).
\]
These quantities represent the minimal and maximal distances from the query point \(q\) to the set \(S\), and will be used to simplify the expressions that follow.

\begin{lemma}
\label{lem:biggest_neighborhood}
Fix a set \(S \subset X\) and a query point \(q \in X\). Define
\[
\alpha^\star = \frac{r^{\max}_q(S) - r^{\min}_q(S)}{(1+\epsilon) + \frac{1}{1+\epsilon}}.
\]
Assume that \(\alpha^\star > 0\), and that for every point \(s \in S\), the Euclidean ball \(B(s, \alpha^\star)\), viewed as a subset of \(\mathbb{R}^n\), is contained in \(X\); that is, \(B(s, \alpha^\star) \subseteq X\).

Then there exists an answer \(r\) such that \(S_{\alpha^\star} \subset \Phi(q, r)\). In particular, for the specific choice
\[
r^\star_q(S) := \frac{(1+\epsilon)\big(\rho^q_{\min} + \rho^q_{\max}\big) - \epsilon\delta}{1 + (1+\epsilon)^2},
\]
we have \(S_{\alpha^\star} \subset \Phi(q, r^\star_q(S))\).
\end{lemma}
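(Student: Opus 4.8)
The plan is to reduce everything to the \emph{consistency window} criterion of \Cref{lem:rminmax}, applied not to $S$ but to its inflated version $S_{\alpha^\star}$, and then to recognize $r^\star_q(S)$ as the left endpoint of that window. The only genuine computation is an algebraic simplification at the very end; the rest is a mechanical combination of \Cref{ob:r_neighborhood} and \Cref{lem:rminmax}.

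First I would invoke \Cref{ob:r_neighborhood} with $\alpha = \alpha^\star$ — which is legitimate precisely because of the standing hypothesis that $B(s,\alpha^\star)\subseteq X$ for every $s\in S$ — to obtain
\[
r^{\min}_q(S_{\alpha^\star}) = r^{\min}_q(S) + \tfrac{\alpha^\star}{1+\epsilon},
\qquad
r^{\max}_q(S_{\alpha^\star}) = \max\bigl\{\delta,\; r^{\max}_q(S) - (1+\epsilon)\alpha^\star\bigr\}.
\]
Next I would check that the window for $S_{\alpha^\star}$ is nonempty, i.e.\ $r^{\min}_q(S_{\alpha^\star}) \le r^{\max}_q(S_{\alpha^\star})$. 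Since the second argument of the $\max$ is a lower bound for $r^{\max}_q(S_{\alpha^\star})$, it suffices to show $r^{\min}_q(S) + \tfrac{\alpha^\star}{1+\epsilon} \le r^{\max}_q(S) - (1+\epsilon)\alpha^\star$, which rearranges to $\alpha^\star\bigl((1+\epsilon) + \tfrac{1}{1+\epsilon}\bigr) \le r^{\max}_q(S) - r^{\min}_q(S)$ — and this holds with \emph{equality} by the definition of $\alpha^\star$. (The hypothesis $\alpha^\star>0$ enters here, guaranteeing that $S_{\alpha^\star}$ is a genuine, nondegenerate neighborhood.) Thus \Cref{lem:rminmax} gives that every $r$ in the window $[\,r^{\min}_q(S_{\alpha^\star}),\, r^{\max}_q(S_{\alpha^\star})\,]$ is a valid answer with $S_{\alpha^\star}\subseteq\Phi(q,r)$, and in particular such an $r$ exists.

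It remains to show that the explicit value $r^\star_q(S)$ lies in this window; in fact I claim it equals the left endpoint, $r^{\min}_q(S_{\alpha^\star}) = r^{\min}_q(S) + \tfrac{\alpha^\star}{1+\epsilon}$. This is a direct substitution: plugging $r^{\min}_q(S) = \tfrac{\rho^q_{\max}-\delta}{1+\epsilon}$ and $r^{\max}_q(S) = (1+\epsilon)\rho^q_{\min}+\delta$ into the formula for $\alpha^\star$, putting everything over the common denominator $(1+\epsilon)\bigl((1+\epsilon)^2+1\bigr)$, and simplifying, the numerator collapses to $(1+\epsilon)\bigl[(1+\epsilon)(\rho^q_{\min}+\rho^q_{\max}) - \epsilon\delta\bigr]$, so that $r^{\min}_q(S)+\tfrac{\alpha^\star}{1+\epsilon} = \tfrac{(1+\epsilon)(\rho^q_{\min}+\rho^q_{\max}) - \epsilon\delta}{1+(1+\epsilon)^2} = r^\star_q(S)$. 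Since we showed $r^{\min}_q(S_{\alpha^\star}) \le r^{\max}_q(S_{\alpha^\star})$, the point $r^\star_q(S)$ lies in the window, and \Cref{lem:rminmax} yields $S_{\alpha^\star}\subseteq\Phi(q,r^\star_q(S))$.

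The main (minor) obstacle is bookkeeping around the $\max\{\delta,\cdot\}$ term: when $r^\star_q(S) = r^{\min}_q(S_{\alpha^\star}) \ge \delta$ the window degenerates to the single point $\{r^\star_q(S)\}$, whereas when $r^{\min}_q(S_{\alpha^\star}) < \delta$ it is the interval $[\,r^{\min}_q(S_{\alpha^\star}),\delta\,]$ of positive width; in either case $r^\star_q(S)$ is its left endpoint and hence a legitimate answer, so the stated conclusion holds uniformly. The algebraic simplification identifying $r^{\min}_q(S)+\tfrac{\alpha^\star}{1+\epsilon}$ with $r^\star_q(S)$ is the only step that requires care, but it is routine.
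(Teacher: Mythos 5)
Your proposal is correct and follows essentially the same route as the paper: both reduce to the consistency window of \Cref{lem:rminmax} applied to $S_{\alpha^\star}$, compute the endpoints via \Cref{ob:r_neighborhood}, and identify $r^\star_q(S)$ with the left endpoint $r^{\min}_q(S_{\alpha^\star})$. The only (cosmetic) difference is how the $\max\{\delta,\cdot\}$ is handled — you bound it below by its second argument, whereas the paper splits into the cases $\rho^q_{\min}\ge\alpha^\star$ and $\rho^q_{\min}<\alpha^\star$ — and both yield the same conclusion.
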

\begin{proof}

By Lemma~\ref{lem:rminmax}, it suffices to verify that
\[
r^{\min}_q(S_{\alpha^\star}) \le r^\star_q(S) \le r^{\max}_q(S_{\alpha^\star}).
\]

First, observe that
\[
\rho^q_{\max} = (1+\epsilon) r_q^{\min}(S) + \delta, \qquad \rho^q_{\min} = \frac{r^{\max}_q(S)-\delta}{1 + \epsilon}
\]
and therefore
\[
r^\star_q(S) = \frac{r_q^{\max} + (1+\epsilon)^2r^{\min}_q}{(1+\epsilon)^2+1}.
\]

To verify the lower bound, note that by Observation~\ref{ob:r_neighborhood},
\[
\begin{aligned}
r^{\min}_q(S_{\alpha^\star}) &= r^{\min}_q(S)+ \frac{\alpha^\star}{1 + \epsilon} \\
&= r^{\min}_q(S)+ \frac{r^{\max}_q(S) - r^{\min}_q(S)}{(1 + \epsilon)^2+1} \\
&= \frac{(1+\epsilon)^2 \cdot r_q^{\min}(S) + r_q^{\max}(S)}{(1+\epsilon)^2 +1} \\
&= r^\star_q(S).
\end{aligned}
\]

For the upper bound, we distinguish between two cases depending on whether \(\rho^q_{\min} \ge \alpha^\star\) or \(\rho^q_{\min} < \alpha^\star\). Indeed, the form of \(r^{\max}_q(S_{\alpha^\star})\) is case-dependent, with two distinct formulas:
if \(\rho^q_{\min} \ge \alpha^\star\), then \(r^{\max}_q(S_{\alpha^\star}) = r^{\max}_q(S) - (1+\epsilon) \alpha^\star\); otherwise, \(r^{\max}_q(S_{\alpha^\star}) = \delta\).

\textbf{Case} \(\rho^q_{\min} \ge \alpha^\star\). Then by Observation~\ref{ob:r_neighborhood},
\[
\begin{aligned}
r^{\max}_q(S_{\alpha^\star}) &= r^{\max}_q(S) - (1 + \epsilon)\alpha^\star \\
&= r^{\max}_q(S) - \frac{(1+\epsilon)^2 \left(r^{\max}_q(S) - r^{\min}_q(S)\right)}{(1 + \epsilon)^2+1} \\
&= \frac{(1+\epsilon)^2 \cdot r_q^{\min}(S) + r_q^{\max}(S)}{(1+\epsilon)^2 +1} \\
&= r^\star_q(S).
\end{aligned}
\]

\textbf{Case} \(\rho^q_{\min} < \alpha^\star\). In this case, we observe that
\[
r^{\max}_q(S) = (1+\epsilon) \cdot \rho^q_{\min} + \delta < (1+\epsilon)\alpha^\star + \delta,
\]
and therefore
\[
\delta > r^{\max}_q(S) - (1+\epsilon)\alpha^\star = r^\star_q(S),
\]
so again \(r^\star_q(S) < \delta = r^{\max}_q(S_{\alpha^\star})\), as required.

This completes the proof.
\end{proof}

For later use, we express the quantity \(\alpha^\star\) in terms of \(\rho^q_{\min}\) and \(\rho^q_{\max}\), since this representation will be useful below:
\[
  \alpha^\star 
  = \frac{(1+\epsilon)^2 \rho^q_{\min} - \rho^q_{\max} + (2+\epsilon)\delta}
         {(1+\epsilon)^2 + 1}.
\]
This leads to the following observation.

\begin{remark}
\label{rem:neighb_decomposition}
Assume that \(\diam \,S \le (2+\epsilon)\delta\). Then the radius \(\alpha^\star\) of the neighborhood \(S_{\alpha^\star}\), as defined in Lemma~\ref{lem:biggest_neighborhood}, can be decomposed into two nonnegative terms, \(\alpha^\star = \alpha_1 + \alpha_2\), where
\[
\alpha_1 = \frac{(2+\epsilon)\delta - \left( \sup_{x \in S} \dist(x, q) - \inf_{x \in S} \dist(x, q) \right)}{(1+\epsilon)^2 + 1}
\]
and
\[
\alpha_2 = \frac{(1+\epsilon)^2 - 1}{(1+\epsilon)^2 + 1} \cdot \inf_{x \in S} \dist(x, q).
\]
Note that \(\alpha_1 \ge 0\), since 
\(\sup_{x \in S}\dist(x, q) - \inf_{x \in S}\dist(x, q) 
   \le \diam S \le (2+\epsilon)\delta.\)
\end{remark}

\begin{observation}
Assume \(\epsilon = 0\) and fix the regular simplex \( \Delta \subset X\) with edges of length \(2 \delta\), and the query \(q \in \RR^n\). 
Denote 
\[
A = \argmax_{A_i \in \Delta} \dist(A, q), \qquad B = \argmin_{A_i \in \Delta} \dist(A, q).
\]

Then Lemma~\ref{lem:biggest_neighborhood} can be simplified; 
the \(\alpha^\star\)-neighborhood lies in the feasible region: \(\Delta_{\alpha^\star} \subset \Phi(q, r^\star_q(\Delta))\) for 
\begin{align*}
r^\star_q(\Delta)=\frac{\dist(q, A) +\dist(q,B)}{2}, \\
\alpha^\star = \frac{\dist(q, A) - \dist(q,B)}{2}.
\end{align*}
\end{observation}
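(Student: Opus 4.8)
The plan is to specialize Lemma~\ref{lem:biggest_neighborhood} to the case $\epsilon = 0$ and $S = \Delta$, the regular simplex with edge length $2\delta$, and simply verify that the general formulas collapse to the claimed ones. Setting $\epsilon = 0$ turns the denominators $(1+\epsilon)^2 + 1$ into $2$ and the factors $(1+\epsilon)$, $(1+\epsilon)^2$ into $1$. With the notation $\rho^q_{\min}(\Delta) = \dist(q,B)$ and $\rho^q_{\max}(\Delta) = \dist(q,A)$ (which is legitimate since a finite simplex attains its min and max distances to $q$), the expression for $r^\star_q(\Delta)$ given in Lemma~\ref{lem:biggest_neighborhood},
\[
r^\star_q(\Delta) = \frac{(1+\epsilon)\big(\rho^q_{\min} + \rho^q_{\max}\big) - \epsilon\delta}{1 + (1+\epsilon)^2},
\]
becomes $\frac{\rho^q_{\min} + \rho^q_{\max}}{2} = \frac{\dist(q,A) + \dist(q,B)}{2}$, and the reformulated $\alpha^\star$ from the displayed identity after Lemma~\ref{lem:biggest_neighborhood},
\[
\alpha^\star = \frac{(1+\epsilon)^2 \rho^q_{\min} - \rho^q_{\max} + (2+\epsilon)\delta}{(1+\epsilon)^2 + 1},
\]
becomes $\frac{\rho^q_{\min} - \rho^q_{\max} + 2\delta}{2}$.

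The only nonroutine point is to argue that the extra $2\delta$ term disappears, i.e.\ that $\rho^q_{\max}(\Delta) - \rho^q_{\min}(\Delta) = \dist(q,A) - \dist(q,B) = 2\delta$ is \emph{not} what happens in general — rather, the claim is that $\alpha^\star$ simplifies precisely when we also use $\alpha_1 = 0$. Here is the subtlety: for a regular simplex of edge length $2\delta$ we have $\diam(\Delta) = 2\delta = (2+\epsilon)\delta$ (since $\epsilon = 0$), so Remark~\ref{rem:neighb_decomposition} applies, giving $\alpha^\star = \alpha_1 + \alpha_2$ with
\[
\alpha_1 = \frac{2\delta - \big(\dist(q,A) - \dist(q,B)\big)}{2}, \qquad \alpha_2 = 0
\]
(the latter because $\alpha_2$ carries the factor $(1+\epsilon)^2 - 1 = 0$). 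Hence $\alpha^\star = \alpha_1 = \frac{2\delta - (\dist(q,A) - \dist(q,B))}{2}$ — but this is \emph{not} the claimed $\frac{\dist(q,A) - \dist(q,B)}{2}$ unless one re-reads the observation's intent. I would re-examine: the claim $\alpha^\star = \frac{\dist(q,A) - \dist(q,B)}{2}$ together with $r^\star = \frac{\dist(q,A) + \dist(q,B)}{2}$ must satisfy $r^{\min}_q(\Delta_{\alpha^\star}) = r^{\max}_q(\Delta_{\alpha^\star}) = r^\star$; using $\epsilon = 0$, Observation~\ref{ob:r_neighborhood} gives $r^{\min}_q(\Delta_{\alpha}) = r^{\min}_q(\Delta) + \alpha = \dist(q,A) - \delta + \alpha$ and $r^{\max}_q(\Delta_{\alpha}) = \max\{\delta, r^{\max}_q(\Delta) - \alpha\} = \max\{\delta, \dist(q,B) + \delta - \alpha\}$. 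Setting these equal (ignoring the $\delta$ floor, valid when $q$ is not too close to $\Delta$) yields $\dist(q,A) - \delta + \alpha = \dist(q,B) + \delta - \alpha$, i.e.\ $2\alpha = 2\delta - (\dist(q,A) - \dist(q,B))$, consistent with $\alpha^\star = \alpha_1$ above. So the statement as literally written should be read against the convention established in that part of the paper — most likely the intended simplex there has a different normalization or the query $q$ is placed so that $\dist(q,A) - \dist(q,B) = 2\delta - (\dist(q,A)-\dist(q,B))$ collapses; barring a typo, I would present the proof as the direct substitution $\epsilon = 0$, $\rho^q_{\min} = \dist(q,B)$, $\rho^q_{\max} = \dist(q,A)$ into Lemma~\ref{lem:biggest_neighborhood} and its reformulation, and note that $\diam(\Delta) = 2\delta$ makes $\alpha_1$ the sole surviving term.

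Concretely, the steps I would carry out are: (1) record that $\Delta$ being a finite point set means the sup/inf over $\Delta$ in $\rho^q_{\min}, \rho^q_{\max}$ are attained at $B, A$ respectively, so all the limiting-sequence caveats in the surrounding text are vacuous here; (2) substitute $\epsilon = 0$ into the closed form $r^\star_q(S) = \frac{(1+\epsilon)(\rho^q_{\min} + \rho^q_{\max}) - \epsilon\delta}{1 + (1+\epsilon)^2}$ from Lemma~\ref{lem:biggest_neighborhood} to get $r^\star_q(\Delta) = \frac{\dist(q,A) + \dist(q,B)}{2}$; (3) substitute $\epsilon = 0$ into the reformulated expression for $\alpha^\star$, and simplify using $\diam(\Delta) = 2\delta$ (via Remark~\ref{rem:neighb_decomposition}, whose hypothesis $\diam \Delta \le (2+\epsilon)\delta = 2\delta$ holds with equality) so that only $\alpha_1$ remains; (4) invoke Lemma~\ref{lem:biggest_neighborhood}'s conclusion (whose ball-containment hypothesis $B(A_i, \alpha^\star) \subseteq X$ must be assumed or inherited from the ambient setup) to conclude $\Delta_{\alpha^\star} \subset \Phi(q, r^\star_q(\Delta))$. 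The main obstacle is purely bookkeeping: reconciling the algebraic simplification of $\alpha^\star$ with the form asserted in the observation, and being careful about the $\max\{\delta, \cdot\}$ case split in $r^{\max}_q$ — which is harmless here because when $\dist(q,B)$ is large relative to $\delta$ the floor is not active, and the degenerate small-$\dist(q,B)$ regime can be handled separately exactly as in the proof of Lemma~\ref{lem:biggest_neighborhood}.
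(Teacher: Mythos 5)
Your computation is correct, and it follows exactly the route the paper intends (the Observation carries no separate proof; it is meant as a direct specialization of Lemma~\ref{lem:biggest_neighborhood} with $\epsilon=0$, $S=\Delta$, and the sup/inf attained at $A$ and $B$). Your value $r^\star_q(\Delta)=\tfrac{\dist(q,A)+\dist(q,B)}{2}$ matches the statement, and your value
\[
\alpha^\star \;=\; \frac{r^{\max}_q(\Delta)-r^{\min}_q(\Delta)}{2}
\;=\;\frac{\bigl(\dist(q,B)+\delta\bigr)-\bigl(\dist(q,A)-\delta\bigr)}{2}
\;=\;\delta-\frac{\dist(q,A)-\dist(q,B)}{2}
\]
is the one that actually follows from the lemma. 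The discrepancy you flagged is a genuine typo in the Observation as printed, not a gap in your argument: the claimed $\alpha^\star=\tfrac{\dist(q,A)-\dist(q,B)}{2}$ is inconsistent with Lemma~\ref{lem:biggest_neighborhood}, with Remark~\ref{rem:neighb_decomposition} (where, as you note, $\alpha_2=0$ and only $\alpha_1=\tfrac{2\delta-(\dist(q,A)-\dist(q,B))}{2}$ survives), and with the way the quantity is used downstream in Lemma~\ref{lem:criteria_good}, whose proof requires precisely $\alpha\le\delta(1-\cos\angle BAq)\le\delta-\tfrac{\dist(q,A)-\dist(q,B)}{2}$. Your independent sanity check via Observation~\ref{ob:r_neighborhood} (equating $r^{\min}_q(\Delta_\alpha)$ and $r^{\max}_q(\Delta_\alpha)$) confirms the same corrected value. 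The only caution I would add is that you need not search for an alternative reading under which the printed formula becomes true; state the corrected $\alpha^\star$ and move on, keeping your step (4) caveat about the ball-containment hypothesis $B(A_i,\alpha^\star)\subseteq X$ and the $\max\{\delta,\cdot\}$ case split, both of which you handle correctly.
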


\begin{lemma}\label{lem:criteria_good}
Assume we are in the \(\epsilon=0\) game scenario. Fix \(\delta > \alpha > 0\), and let \(\Delta = A_0A_1 \ldots A_n\) be a regular simplex with edges of length \(2\delta\). 
For a given point \(q\), let \(A \in \Delta\) be the farthest vertex from \(q\), and let \(B \in \Delta\) be the nearest vertex 
(in the case of ties, choose any). 

If
\[
\cos \angle BAq \,\le\, 1 - \frac{\alpha}{\delta},
\]
then
\[
\Delta_\alpha \,\subset\, \Phi\big(\{q, r^\star_q(\Delta)\}\big), \qquad r^\star_q(\Delta) = \frac{qA + qB}{2}.
\]
\end{lemma}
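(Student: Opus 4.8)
The plan is to reduce the claimed inclusion to a single scalar inequality about the triangle $AqB$, and then verify that inequality using the law of cosines together with the triangle inequality on the simplex edge $AB$.

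First I would invoke Lemma~\ref{lem:rminmax}: the inclusion $\Delta_\alpha \subset \Phi(\{q,r\})$ with $r = r^\star_q(\Delta) = \tfrac{qA+qB}{2}$ holds precisely when $r^{\min}_q(\Delta_\alpha) \le \tfrac{qA+qB}{2} \le r^{\max}_q(\Delta_\alpha)$. I would then compute these endpoints from Observation~\ref{ob:r_neighborhood} specialized to $\epsilon = 0$ with $S = \Delta$ (the vertex set): since $A$ and $B$ are the farthest and nearest vertices to $q$, one has $r^{\min}_q(\Delta) = qA - \delta$ and $r^{\max}_q(\Delta) = qB + \delta$, hence
\[
  r^{\min}_q(\Delta_\alpha) = qA - \delta + \alpha,
  \qquad
  r^{\max}_q(\Delta_\alpha) = \max\{\delta,\; qB + \delta - \alpha\}.
\]
(This uses, as is implicit, that $B(A_i,\alpha)\subseteq X$ for every vertex $A_i$, which is automatic once $\Delta_\alpha$ is required to lie inside the feasible region, itself a subset of $X$.) The left inequality then reads exactly $qA - qB \le 2(\delta-\alpha)$, and on the branch $qB \ge \alpha$ — where the maximum equals $qB + \delta - \alpha$ — the right inequality collapses to the same condition. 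So, apart from the easy branch $qB < \alpha$, everything reduces to proving
\begin{equation}
  qA - qB \;\le\; 2(\delta - \alpha). \tag{$\ast$}
\end{equation}

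To prove $(\ast)$ I would work in the plane through $A,B,q$, writing $a = qA$, $b = qB$, $\theta = \angle BAq$ (note $a \ge b \ge 0$, and $a>0$ since otherwise $\delta = 0$). As $|AB| = 2\delta$, the law of cosines gives $b^2 = a^2 + 4\delta^2 - 4\delta a\cos\theta$, i.e.\ $a^2 - b^2 = 4\delta a\cos\theta - 4\delta^2$; and because $a+b>0$, $(\ast)$ is equivalent to $a^2 - b^2 \le 2(\delta-\alpha)(a+b)$. Inserting the hypothesis $\cos\theta \le 1 - \tfrac{\alpha}{\delta} = \tfrac{\delta-\alpha}{\delta}$ (and multiplying by $4\delta a > 0$) gives $a^2 - b^2 \le 4a(\delta-\alpha) - 4\delta^2$, so it suffices to check $4a(\delta-\alpha) - 4\delta^2 \le 2(\delta-\alpha)(a+b)$, which rearranges to $2(\delta-\alpha)(a-b) \le 4\delta^2$. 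This is where the triangle inequality enters: $0 \le a - b \le |AB| = 2\delta$, while $0 < 2(\delta-\alpha) \le 2\delta$ because $0<\alpha<\delta$; multiplying these two nonnegative bounds yields $2(\delta-\alpha)(a-b) \le (2\delta)(2\delta) = 4\delta^2$, which is what was needed.

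Finally I would handle the branch $qB < \alpha$: there $qB + \delta - \alpha < \delta$, so $r^{\max}_q(\Delta_\alpha) = \delta$ and the right inequality of the reduction only demands $\tfrac{qA+qB}{2} \le \delta$; but $(\ast)$ already gives $qA \le qB + 2(\delta-\alpha)$, whence $qA+qB \le 2qB + 2\delta - 2\alpha < 2\delta$, while the left inequality is $(\ast)$ itself. Hence $r^\star_q(\Delta) \in [\,r^{\min}_q(\Delta_\alpha),\, r^{\max}_q(\Delta_\alpha)\,]$ in every case, and Lemma~\ref{lem:rminmax} yields $\Delta_\alpha \subset \Phi(\{q,r^\star_q(\Delta)\})$. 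I expect the main obstacle to be the bookkeeping in the reduction step — confirming that both window inequalities collapse to the single condition $(\ast)$ and tracking which branch of the $\max$ in $r^{\max}_q(\Delta_\alpha)$ is active — rather than $(\ast)$ itself, which follows from nothing more than the law of cosines plus the triangle inequality for $AB$.
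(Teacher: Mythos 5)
Your proof is correct, but it takes a genuinely different route from the paper's. The paper argues directly: it checks that the balls $B(A,\alpha)$ and $B(B,\alpha)$ lie in $\Phi(q,r^\star_q(\Delta))$ by first establishing the sharper estimate $qA-qB\le 2\delta\cos\angle BAq$ via a synthetic construction (dropping a perpendicular from $q$ onto the line $AB$ and splitting into the cases where the foot lies inside or outside the segment), and then bounding $qM_A$ and $qM_B$ pointwise. You instead route everything through the consistency-window machinery of Lemma~\ref{lem:rminmax} and Observation~\ref{ob:r_neighborhood}, observe that because $r^\star_q(\Delta)$ is the midpoint of $[qB,qA]$ both endpoint inequalities collapse to the single condition $qA-qB\le 2(\delta-\alpha)$, and prove that condition by the law of cosines together with $qA-qB\le |AB|=2\delta$. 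Your algebraic derivation avoids the paper's two-case perpendicular-foot analysis entirely, and your reduction is arguably cleaner in that it handles all vertices of $\Delta$ at once (the paper only explicitly verifies the balls around $A$ and $B$, leaving the intermediate vertices to the reader). The paper's approach buys a slightly stronger intermediate inequality ($qA - qB \le 2\delta\cos\angle BAq$, valid without the hypothesis on $\cos\angle BAq$), which is not needed elsewhere. Two minor points you should make explicit if writing this up: the application of Observation~\ref{ob:r_neighborhood} needs the balls $B(A_i,\alpha)$ to lie in $X$ — you correctly note this is implicit in the lemma's conclusion (and the paper's own proof makes the same tacit assumption) — and in the degenerate case $q=A=B$ one has $a=b=0$, which cannot occur since $|AB|=2\delta>0$, so the division by $a+b$ in your equivalence is safe.
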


\begin{proof}
It suffices to show that \(B(A, \alpha)\) and \(B(B, \alpha)\) are both contained in \(\Phi(q, r^\star_q(\Delta))\).

Let us estimate \(Aq - Bq\). We will show that 
\[ Aq - Bq \le \cos \angle BAq \cdot 2\delta.\] 
Drop a perpendicular from \(q\) onto the line \(AB\), and let the foot of this perpendicular be \(H\). Denote the angle \(\angle qAH\) by \(\phi\) and the angle \(\angle qBH\) by \(\psi\). 

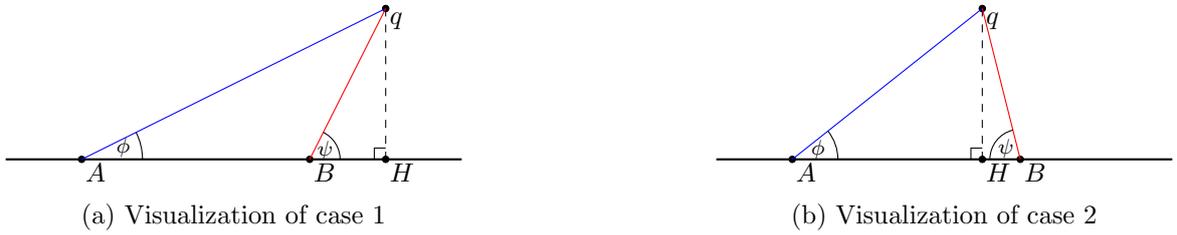
\begin{figure}[ht]
  \centering
  \begin{subfigure}[b]{0.45\textwidth}
    \centering
    \begin{tikzpicture}[scale=1,>=stealth]
      \coordinate (A) at (0,0);
      \coordinate (B) at (3,0);
      \coordinate (q) at (4,2);
      \coordinate (H) at (4,0);
      \draw[thick] (-1,0) -- (5,0);
      \draw[dashed] (q) -- (H);
      \draw (H) ++(-0.15,0) -- ++(0,0.15) -- ++(0.15,0);
      \foreach \pt/\lbl in {A/A, B/B, H/H, q/q}{
        \draw[fill] (\pt) circle(1.2pt)
          node[below right, inner sep=1.3pt, font=\small] {$\lbl$};
      }
      \draw[color = blue] (A) -- (q);
      \draw[color = red] (B) -- (q);
      \pic[draw,angle radius=8mm,
       "$\phi$"{font=\scriptsize, xshift=2.2pt,yshift=1.25pt} ] {angle = H--A--q};
       \pic[draw,angle radius=4mm,
       "$\psi$"{font=\scriptsize}] {angle = H--B--q};
    \end{tikzpicture}
    \caption{Visualization of case 1}
  \end{subfigure}%
  \hfill
  \begin{subfigure}[b]{0.45\textwidth}
    \centering
    \begin{tikzpicture}[scale=1,>=stealth]
      \coordinate (A) at (0,0);
      \coordinate (B) at (3,0);
      \coordinate (q) at (2.5,2);
      \coordinate (H) at (2.5,0);
      \draw[thick] (-1,0) -- (5,0);
      \draw[dashed] (q) -- (H);
      \draw (H) ++(-0.15,0) -- ++(0,0.15) -- ++(0.15,0);
      \foreach \pt/\lbl in {A/A, B/B, H/H, q/q}{
        \draw[fill] (\pt) circle(1.2pt)
          node[below right, inner sep=1.3pt, font=\small] {$\lbl$};
      }
      \draw[color = blue] (A) -- (q);
      \draw[color = red] (B) -- (q);
      \pic[draw,angle radius=6mm,
       "$\phi$"{font=\scriptsize, xshift=0pt,yshift=0.5pt}] {angle = H--A--q};
       \pic[draw,angle radius=4mm,
       "$\psi$"{font=\scriptsize}] {angle = q--B--H};
    \end{tikzpicture}
    \caption{Visualization of case 2}
  \end{subfigure}
  \caption{Visualization of two cases}
  \label{fig:two_cases_criteria}
\end{figure}

We distinguish two cases: \(H \notin AB\) and \(H \in AB\) (see Figure~\ref{fig:two_cases_criteria}). Note that in both cases, \[0 < \phi \le \psi < \frac{\pi}{2},\] and since the cosine function decreases on this interval, we have \(\cos \phi \ge \cos \psi \ge 0\).

\textbf{Case 1:} \(H \notin AB\). Since \(Aq^2 = qH^2+HA^2\) and \(Bq^2 = BH^2+HB^2\), and \(AH -BH = 2 \delta \) we compute:
\[
Aq - Bq \;=\; \frac{Aq^2 - Bq^2}{Aq + Bq}
\;=\; \frac{AH^2 - BH^2}{Aq + Bq}
\;=\; 2\delta \,\frac{AH + BH}{Aq + Bq}.
\]
On the other hand since \(\cos \phi \ge \cos \psi\): 
\[
\frac{AH + BH}{Aq + Bq} = \frac{Aq \cdot \cos \phi + Bq \cdot \cos \psi}{Aq+Bq} \le 1. 
\]

Hence
\[
Aq - Bq \;\le\; 2\delta\,\cos \phi = 2\delta\,\cos \angle qAB.
\]

\textbf{Case 2:} \(H \in AB\). Similarly since \(AH + BH = 2 \delta \): 
\[
\begin{aligned}
Aq - Bq \;=\; \frac{Aq^2 - Bq^2}{Aq + Bq}
\;=\; \frac{AH^2 - BH^2}{Aq + Bq}
\;=\; 
2\delta \,\frac{AH - BH}{Aq + Bq} = \\2\delta \,\frac{Aq \cdot \cos \phi - Bq \cdot \cos \psi}{Aq + Bq},
\end{aligned}
\]
which again implies
\[
Aq - Bq \;\le\; 2\delta\,\cos \phi = 2\delta\,\cos \angle qAB.
\]

Now take a point \(M_A \in B(A, \alpha)\). By Lemma assumption \(\alpha \le \delta(1 - \cos \angle BAq)\), hence 
\[
qM_A \;\le\; qA + \alpha 
\;\le\; qA + \delta(1 - \cos \angle BAq)
\;\le\; qA - \frac{Aq - Bq}{2} + \delta
\;=\; r^\star_q(\Delta) + \delta.
\]
Also, because \(\alpha < \delta\), it follows that \(M_AA \ge r^\star_q(\Delta)\). Hence \(M_A \in \Phi(q, r^\star_q(\Delta))\). 

A similar argument applies to any \(M_B \in B(B, \alpha)\). Indeed,
\[
qM_B \;\ge\; qB - \alpha
\;\ge\; qB - \delta (1 - \cos \angle BAq)
\;\ge\; r^\star_q(\Delta) - \delta,
\]
and again \(\alpha < \delta\) implies \(M_B \in \Phi(q, r^\star_q(\Delta))\). Thus \(B(A, \alpha)\) and \(B(B, \alpha)\) lie in \(\Phi(q, r^\star_q(\Delta))\), completing the proof.
\end{proof}

\section{Proof of Theorem~\ref{thm:pseudo-finiteness}}\label{app:pseudo}

\begin{theorem*}[Theorem~\ref{thm:pseudo-finiteness} Restatement]
Let \(X \subset \mathbb{R}^n\) be a bounded convex set equipped with the Euclidean metric such that \(\dim X > 0\), and let \(\epsilon \ge 0\). Then, for all sufficiently small \(\delta > 0\), the space \(X\) is \emph{not} \((\epsilon, \delta)\)-pseudo‑finite, except in the case where \(\epsilon = 0\) and \(\dim X = 1\).
\end{theorem*}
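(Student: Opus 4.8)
Working inside the affine span of $X$, I would assume $X$ is full-dimensional in $\mathbb{R}^n$ with $n=\dim X\ge 1$; being bounded, convex, and full-dimensional, $X$ then contains a Euclidean ball $B(p,\rho_0)$. By Theorem~\ref{thm:Jung}, $\OPT_X(\epsilon,\delta)=\mathtt{e}_X\bigl((2+\epsilon)\delta\bigr)$, and since for convex $X$ the Chebyshev center of any $S\subseteq X$ lies in $\operatorname{conv}(S)\subseteq X$, one gets $\mathtt{e}_X(\alpha)=\mathtt{e}_{\mathbb{R}^n}(\alpha)=\sqrt{\tfrac{n}{2(n+1)}}\,\alpha$ as soon as $X$ contains a regular $n$-simplex of edge $\alpha$ — which holds for all sufficiently small $\delta$ with $\alpha=(2+\epsilon)\delta$. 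The exceptional case is immediate: if $n=1$ and $\epsilon=0$, querying an endpoint of the interval $X$ confines the target to an interval of length $\le 2\delta$, whose midpoint gives error $\le\delta=\OPT_X(0,\delta)$, so $X$ is $(0,\delta)$-pseudo-finite with $T_{X,0,\delta}=1$. For the main direction it suffices to show: for every $T$ there is a responder strategy that, against \emph{every} reconstructor, keeps the feasible region $\Phi_T$ containing the $\alpha_T$-neighborhood $\Delta_{\alpha_T}=\bigcup_i B(x_i,\alpha_T)$ of a regular $n$-simplex $\Delta$ of edge $(2+\epsilon)\delta$, for some $\alpha_T>0$ depending only on $T,\epsilon,\delta,n$. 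Indeed, pushing each vertex radially outward by $\alpha_T$ yields a regular $n$-simplex inside $\Delta_{\alpha_T}$ with circumradius $\OPT_X(\epsilon,\delta)+\alpha_T$; since the minimal enclosing ball of a regular simplex is its circumball and its center lies in $X$, $r(\Delta_{\alpha_T})\ge\OPT_X(\epsilon,\delta)+\alpha_T$. Hence once $\hat x_T$ is committed the responder can choose a consistent target at distance arbitrarily close to $r(\Delta_{\alpha_T})$, forcing $\OPT_X(T,\epsilon,\delta)\ge\OPT_X(\epsilon,\delta)+\alpha_T>\OPT_X(\epsilon,\delta)$, which rules out pseudo-finiteness. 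This is precisely where the lower bound of Theorem~\ref{thm:Jung} is used: it certifies $\OPT_X(\epsilon,\delta)$ as the circumradius of the extremal simplex, hence the minimal size of the body the responder must preserve.

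\textbf{The responder's invariant.} Fix $T$ and choose $\alpha_0>\alpha_1>\dots>\alpha_T>0$ (recursion below) with $\alpha_0$ small enough that $\Delta_{\alpha_0}\subset B(p,\rho_0)\subset X$. The responder maintains the invariant $\Delta^{(t)}_{\alpha_t}\subseteq\Phi_t$ for a regular $n$-simplex $\Delta^{(t)}$ of edge $(2+\epsilon)\delta$ lying near $p$. On query $q_t$, let $\alpha^\star(\Delta,q)$ be the maximal surviving neighborhood radius from the feasible-region calculus (Lemma~\ref{lem:rminmax}, Observation~\ref{ob:r_neighborhood}, Lemma~\ref{lem:biggest_neighborhood}), and call $q_t$ \emph{good} if $\alpha^\star(\Delta^{(t-1)},q_t)\ge\alpha_t$. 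If $q_t$ is good, keep $\Delta^{(t)}=\Delta^{(t-1)}$ and reply with any value in the consistency window of $\Delta^{(t-1)}_{\alpha_t}$ (e.g.\ $r^\star_{q_t}(\Delta^{(t-1)})$); then $\Delta^{(t)}_{\alpha_t}\subseteq\Phi(q_t,r_t)$ and $\Delta^{(t)}_{\alpha_t}\subseteq\Delta^{(t-1)}_{\alpha_{t-1}}\subseteq\Phi_{t-1}$, so the invariant survives. If $q_t$ is bad, the responder must exhibit a new regular $n$-simplex $\Delta^{(t)}$ of edge $(2+\epsilon)\delta$ with $\Delta^{(t)}_{\alpha_t}\subseteq\Delta^{(t-1)}_{\alpha_{t-1}}$ and $\alpha^\star(\Delta^{(t)},q_t)\ge\alpha_t$, and then answer as in the good case. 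Iterating $t=1,\dots,T$ keeps the invariant, which is all that was needed.

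\textbf{The escape move and the recursion (the crux).} By Remark~\ref{rem:neighb_decomposition}, $q_t$ bad means the distance spread $\rho^{q_t}_{\max}(\Delta^{(t-1)})-\rho^{q_t}_{\min}(\Delta^{(t-1)})$ lies within $O(\alpha_t)$ of its maximum possible value $(2+\epsilon)\delta$ (and, for $\epsilon>0$, also that $\rho^{q_t}_{\min}(\Delta^{(t-1)})=O(\alpha_t)$). The near-maximal spread forces the nearest vertex $B$ and farthest vertex $A$ to be nearly antipodal seen from $q_t$, i.e.\ $\cos\angle BAq_t$ close to $1$. For $\epsilon>0$ I would \emph{translate} $\Delta^{(t-1)}$ by a vector of length $\le\alpha_{t-1}-\alpha_t$ directly away from $q_t$: this guarantees $\Delta^{(t)}_{\alpha_t}\subseteq\Delta^{(t-1)}_{\alpha_{t-1}}$, and via the term $\tfrac{(1+\epsilon)^2-1}{(1+\epsilon)^2+1}\rho^{q_t}_{\min}$ of Remark~\ref{rem:neighb_decomposition} it raises $\alpha^\star$ by a positive ($\epsilon$-dependent) multiple of the translation length, so taking $\alpha_t=\tfrac{(1+\epsilon)^2-1}{2(1+\epsilon)^2}\,\alpha_{t-1}$ makes $q_t$ good for $\Delta^{(t)}$, and $\alpha_T$ decays geometrically. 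For $\epsilon=0$ translations leave $\alpha^\star$ essentially unchanged, so (when $n\ge2$) I would instead \emph{rotate} $\Delta^{(t-1)}$ by a small angle $\theta\lesssim(\alpha_{t-1}-\alpha_t)/\delta$ about the farthest vertex $A$ in the plane spanned by $A,B,q_t$, increasing $\angle BAq_t$ until the hypothesis $\cos\angle BAq_t\le 1-\alpha_t/\delta$ of Lemma~\ref{lem:criteria_good} holds; since $1-\cos$ is quadratic this requires $\theta\gtrsim\sqrt{\alpha_t/\delta}$, which forces $\alpha_t\lesssim\alpha_{t-1}^2/\delta$ and hence the doubly-exponential decay — and when $n=1$ no such rotation exists, consistently with the interval being $(0,\delta)$-pseudo-finite. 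I expect \textbf{this escape step to be the main obstacle}: the choice of $\Delta^{(t)}$ must be made \emph{uniformly in $q_t$} (so that $\alpha_t$ depends on $t$ alone); one must verify that after the move the images of $A$ and $B$ are still the farthest and nearest vertices — needed for Lemmas~\ref{lem:criteria_good} and~\ref{lem:biggest_neighborhood} to apply — which is genuinely delicate in the sub-case where $q_t$ sits near a single vertex and two vertices are nearly co-farthest; and one must confine boundary effects by always reasoning with the small radius $\alpha_t$ rather than the possibly large $\alpha^\star$, so that $B(x_i,\alpha_t)\subseteq X$ suffices and $\partial X$ only interferes for queries near the simplex, where a short direct estimate handles it. Carrying this through gives $\OPT_X(T,\epsilon,\delta)\ge\OPT_X(\epsilon,\delta)+\alpha_T$ for all $T$, which completes the proof.
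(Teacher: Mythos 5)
Your proposal follows the same architecture as the paper's proof: reduce to the full-dimensional case, dispose of the interval with $\epsilon=0$ directly, and then maintain the invariant that the feasible region contains an $\alpha_t$-neighborhood of a regular simplex of edge $(2+\epsilon)\delta$, with a good/bad query dichotomy, a translation escape for $\epsilon>0$ (with the same factor $\tfrac{(1+\epsilon)^2-1}{2(1+\epsilon)^2}$), and a rotation escape for $\epsilon=0$ (with the same quadratic recursion $\alpha_t\lesssim\alpha_{t-1}^2/\delta$). The use of Theorem~\ref{thm:Jung}, Lemma~\ref{lem:biggest_neighborhood}, Remark~\ref{rem:neighb_decomposition}, and Lemma~\ref{lem:criteria_good} is exactly as in the paper.

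However, the step you yourself flag as ``the main obstacle'' is precisely the content of the paper's hardest lemma, and you leave it unproven. Two concrete issues. First, you propose rotating in the plane spanned by $A,B,q_t$; the paper instead rotates in the plane $\Pi=\mathrm{span}(A,B,Q)$, where $Q$ is the centroid of the remaining vertices. This is not a cosmetic choice: the plane $ABq_t$ degenerates exactly in the worst bad case (when $q_t$ is collinear with $A$ and $B$), and more importantly the paper's choice gives all vertices other than $A,B$ a symmetric position relative to the rotation plane, which is what makes the explicit coordinate computation in Lemma~\ref{lem:rotate-ABQ} (showing that $B$ stays nearest and $A$ stays farthest, and that $\angle q'AB$ genuinely exceeds $\theta$) go through; it also yields Lemma~\ref{lem:neighborhoods_prereved_rotation}, which shows $B$ is the vertex displaced the most, so that checking $B(B',\alpha')\subseteq B(B,\alpha)$ suffices for the containment $\Delta'_{\alpha'}\subseteq\Delta_\alpha$. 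With your plane these verifications would have to be redone from scratch and are not obviously easier. Second, even granting the plane, the claim that the near/far order survives the rotation is asserted, not argued; in the paper this occupies the bulk of Appendix~\ref{app:geom} (a coordinate computation in the orthonormal frame adapted to the simplex, split into the regimes $\theta\le\pi/8$ and $\theta\le\pi/18$, plus the projection Lemma~\ref{lem:projection_cosine}). Until that verification is supplied, the inductive step for $\epsilon=0$ is incomplete, so the proposal as written has a genuine gap at its crux even though the surrounding scaffolding is correct.
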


\paragraph{Proof.}

Since the reconstruction game is played entirely within the space \(X\), in the sense that all
queries, the final guess, and the secret point lie in \(X\), we may assume without loss of generality that
\(X \subset \mathbb{R}^n\), where \(n = \dim(X)\) is the affine dimension of \(X\).

We begin with the special and simple case where \(\epsilon = 0\) and \(\dim X = 1\). In this case, \(X\) is an interval with endpoints \(a, b\); without loss of generality, we assume that both \(a, b \in X\). (The cases where~\(X\) is half-open or open can be handled similarly.) If \(\delta \ge (b - a)/2\), then the optimal error equals the diameter of the space, and the reconstructor can trivially achieve it by outputting any point in \(X\) without submitting any queries. Otherwise, if \(\delta < (b - a)/2\), the reconstructor submits a single query at one of the endpoints, say \(q_1 = a\), and receives a \(r_1)\). The feasible region then becomes an interval of length at most \(2\delta\), and by guessing its midpoint, the reconstructor achieves the optimal approximation error of \(\delta\).

The remaining cases—when either \(\epsilon > 0\) or \(\dim X \ge 2\)—are more challenging and constitute the core of the proof.
We divide the proof into two cases: one where \(\epsilon = 0\), and one where \(\epsilon > 0\). In both cases, the proof follows a similar strategy. We show that for all sufficiently small \(\delta > 0\), there exists a responder strategy that guarantees, for every number of rounds \(T\), that the feasible region contains an extremal simplex \(\Delta_T\) whose Chebyshev radius is strictly greater than the optimal value \(\OPT(\epsilon, \delta)\). This suffices to prove that the optimal error cannot be attained in finite time.

More precisely, we show that for each \(t = 0,1,\dots,T\), the responder can ensure that the feasible region contains an \(\alpha_t\)-neighborhood of a regular simplex \(\Delta_t\) of diameter exactly \((2+\epsilon)\delta\), where the neighborhood is defined as the union of all balls of radius \(\alpha_t\) centered at the vertices of \(\Delta_t\). Since such a neighborhood contains a regular simplex of diameter \((2+\epsilon)\delta + \sqrt{\tfrac{2(n+1)}{n}} \alpha_t\), it follows that the Chebyshev radius of the feasible region is strictly greater than \(\OPT(\epsilon,\delta)\), which corresponds to the Chebyshev radius of a regular simplex of diameter \((2+\epsilon)\delta\).

The proof proceeds inductively. We assume that after \(t\) rounds, the feasible region contains an \(\alpha_t\)-neighborhood of a regular simplex \(\Delta_t\) of diameter \((2+\epsilon)\delta\), and we show that for any query \(q_{t+1} \in X\), there exists a response such that the updated feasible region contains an \(\alpha_{t+1}\)-neighborhood of some (possibly different) regular simplex \(\Delta_{t+1}\) of the same diameter. Moreover, \(\alpha_{t+1} = f_{\epsilon,\delta}(\alpha_t)\), where the function \(f_{\epsilon,\delta}\) is defined by:
\[
f_{\epsilon,\delta}(\alpha) =
\begin{cases}
\frac{\alpha^2}{2 \cdot 81\,\delta}, & \epsilon = 0, \\
\frac{(1+\epsilon)^2 - 1}{2(1+\epsilon)^2} \cdot \alpha, & \epsilon > 0.
\end{cases}
\]

Thus, the responder can recursively maintain an \(\alpha_t\)-neighborhood of a regular simplex throughout the game, where \(\alpha_t = f_{\epsilon,\delta}^{(t)}(\alpha_0)\), and \(\alpha_0\) is the maximum value such that \(X\) contains an \(\alpha_0\)-neighborhood of a regular simplex of diameter \((2+\epsilon)\delta\). This completes the high-level argument. To complete the inductive proof, it remains to establish the base case and then develop the tools needed for the inductive step.

\medskip
\noindent\textbf{Base Case.}
Since \(X\) is a bounded convex subset of \(\mathbb{R}^n\) with nonempty interior, it follows that for any \(\epsilon > 0\), there exists a sufficiently small \(\delta > 0\) and some \(\alpha_0 > 0\) such that \(X\) contains an \(\alpha_0\)-neighborhood of a regular simplex \(\Delta^0\) with diameter \((2+\epsilon)\delta\). This establishes the base case for our induction: at round \(t = 0\), the feasible region contains a neighborhood \(\Delta^0_{\alpha_0} \subseteq X\) that satisfies the required conditions.

\medskip
\noindent\textbf{Inductive Step.} 
The remainder of the proof develops the geometric tools needed to carry out the inductive step, namely to show that such a neighborhood can be maintained (with decreasing radius) after each query.
We now introduce the some notation used throughout the argument.
\begin{notation}
\label{not:point_good}
Let \(0 < \alpha \le \delta\), and let \(\Delta = \{A_0, A_1, \ldots, A_n\}\) be a regular simplex in \(\mathbb{R}^n\) of diameter \((2+\epsilon)\delta\). We define the \(\alpha\)-neighborhood of \(\Delta\), denoted \(\Delta_\alpha\), as the union of closed Euclidean balls of radius \(\alpha\) centered at the vertices of \(\Delta\).

A query point \(q \in \mathbb{R}^n\) is called \emph{good} (with respect to \(\Delta\) and \(\alpha\)) if there exists a response \(r\) such that the entire neighborhood \(\Delta_\alpha\) is contained in the feasible region \(\Phi(q, r)\); otherwise, \(q\) is called \emph{bad}.

We will use the special response \( r = r^\star_q(\Delta) \) defined in Lemma~\ref{lem:biggest_neighborhood} to ensure that the neighborhood~\( \Delta{_{\alpha_{t+1}}} \) remains feasible; this choice will be sufficient for our inductive argument.
\end{notation}

\subsection{Case $\epsilon>0$ (translation strategy)}\label{app:pseudo-trans}

Assume the responder receives the query \(q\), and has so far managed to keep the \(\alpha\)-neighborhood of a regular simplex \(\Delta \subset \RR^n\), with edge length \((2+\epsilon)\delta\), inside the feasible region. {Without loss of generality, we may assume that \(4\alpha \le (2+\epsilon)\delta\).}

In this subsection, we will show that when \(\epsilon > 0\), there exists another regular simplex \(\Delta'\) with the same edge length such that
\[
\Delta'_{\alpha'} \subset \Delta_\alpha, \qquad \Delta'_{\alpha'} \subset \Phi(q, r^\star_q(\Delta')),
\quad \text{where} \quad \alpha' = \frac{(1+\epsilon)^2 - 1}{2(1+\epsilon)^2} \alpha.
\]

This will be sufficient to establish the induction step for the case \(\epsilon > 0\).

In the terminology of Notation~\ref{not:point_good}, the query point \(q\) may be either good or bad with respect to the simplex \(\Delta\) and neighborhood \(\alpha'\). If \(q\) is good, we are done by simply taking \(\Delta' := \Delta\).

If the point \(q\) is bad, we will use the decomposition of the neighborhood sustained by the answer~\(r^\star_q(\Delta)\), as described in Remark~\ref{rem:neighb_decomposition}, in order to construct a new simplex.

Let us remind the reader that, by Lemma~\ref{rem:neighb_decomposition}, there is a formula for the radius~\(\alpha^\star\) of the neighborhood~\(\Delta_{\alpha^\star}\), which corresponds to the feasible region after answering with \(r^\star_q(\Delta)\). Remark~\ref{rem:neighb_decomposition} states that this radius can be decomposed into two nonnegative terms.

In the case \(\epsilon > 0\), the second term will be of particular interest:
\[
\alpha^\star = \alpha_1 + \alpha_2, \qquad 
\alpha_2 = \frac{(1+\epsilon)^2 - 1}{(1+\epsilon)^2 + 1} \cdot \inf_{x \in \Delta} \dist(x, q).
\]

Since we assumed that the query \(q\) is bad with respect to the simplex \(\Delta\) and neighborhood \(\alpha'\), it follows that
\[
\alpha' > \frac{(1+\epsilon)^2 - 1}{(1+\epsilon)^2 + 1} \cdot \inf_{x \in \Delta} \dist(x, q).
\]
In particular, this implies
\[
\dist(B, q) < \alpha' \cdot \frac{(1+\epsilon)^2 + 1}{(1+\epsilon)^2 - 1}
= \frac{(1+\epsilon)^2 + 1}{2(1+\epsilon)^2} \cdot \alpha = \alpha - \alpha',
\]
where \(B := \arg\min_{x \in \Delta} \dist(x, q)\) is the closest vertex of \(\Delta\) to the query point \(q\).

Define the shifted simplex \(\Delta' := \Delta + \vec{v}\), where the vector \(\vec{v}\) is in the same direction as the vector~\(\vvv{qB}\), and its length is
\[
\|\vec{v}\| = \alpha - \alpha' 
= \frac{2(1+\epsilon)^2}{(1+\epsilon)^2 - 1} \alpha' - \alpha' 
= \frac{(1+\epsilon)^2+1}{(1+\epsilon)^2 - 1} \cdot \alpha'.
\]
In the degenerate case when \(q = B\), choose any vector \(\vec{v}\) of that length.

The shifted neighborhood satisfies \(\Delta'_{\alpha'} \subset \Delta_\alpha\). Assume \(x \in \Delta'_{\alpha'}\); then there exists a shifted vertex \(A' := A + \vec{v}\) (that is, \(A'\) is the image of \(A\) under translation by the vector \(\vec{v}\)) such that \(x \in B(A', \alpha')\).

Let us denote by \(B'\) the point obtained by shifting the vertex \(B\) of \(\Delta\) (the one closest to \(q\)) by the vector \(\vec{v}\).

We claim that \(B'\) is the nearest vertex of the translated simplex \(\Delta'\) to the query point \(q\). This follows from the triangle inequality and the bounds
\[
\dist(B, q) \le \alpha - \alpha', \qquad \alpha < \frac{(2+\epsilon)\delta}{4}
\]

Indeed, consider any other vertex \(C' = C + \vec{v}\) of \(\Delta'\), where \(C \ne B\). Then:
\[
\dist(q, C') \ge \dist(q, C) - \|\vec{v}\| = \dist(q, C) - (\alpha - \alpha').
\]
On the other hand, we have
\[
\dist(q, C) \ge \dist(C, B) - \dist(q, B) \ge (2+\epsilon)\delta - (\alpha - \alpha').
\]
Combining these inequalities gives
\[
\dist(q, C') \ge (2+\epsilon)\delta - 2(\alpha - \alpha').
\]
Meanwhile, the distance from \(q\) to \(B'\) satisfies
\[
\dist(q, B') \le \dist(q, B) + (\alpha-\alpha') \le 2(\alpha-\alpha') .
\]
Since we assumed \(\alpha < \frac{(2+\epsilon)\delta}{4}\), it follows that
\[
\dist(q, B') \le 2(\alpha - \alpha') < (2+\epsilon)\delta - 2(\alpha - \alpha') \le \dist(q, C'),
\]
which confirms that \(B'\) is indeed the closest vertex of \(\Delta'\) to \(q\).

Since \(B + \vec{v}\) is the vertex of the new simplex \(\Delta'\) closest to \(q\), we have
\[
\dist(q, B + \vec{v}) 
\;\ge\; 
\|\vec{v}\|
\;=\; 
\frac{(1+\epsilon)^2 + 1}{(1+\epsilon)^2 - 1} \cdot \alpha'.
\]
Therefore, by Remark~\ref{rem:neighb_decomposition} (which follows from Lemma~\ref{lem:biggest_neighborhood}), the entire neighborhood \(\Delta'_{\alpha'}\) is contained in the feasible region:
\[
\Delta'_{\alpha'} \subset \Phi(q, r^\star_q(\Delta')).
\]

This completes the argument.

\subsection{Case $\epsilon=0$ (rotation strategy)}\label{app:pseudo-rot}

The strategy for handling the case \(\epsilon = 0\) will be similar. Assume the responder receives a query \(q\), and that the \(\alpha\)-neighborhood of a regular simplex \(\Delta\), with edge length \(2\delta\), is contained in the feasible region. Without loss of generality we may assume that \(\alpha < \frac{\delta}{4}.\)

We will again show that there exists another regular simplex \(\Delta'\), with the same edge length, such that
\[
\Delta'_{\alpha'} \subset \Delta_\alpha, \qquad \Delta'_{\alpha'} \subset \Phi(q, r^\star_q(\Delta')),
\quad \text{where} \quad \alpha' = \frac{\alpha^2}{81 \cdot 2\delta}.
\]

In the case \(\epsilon > 0\), the \(\alpha'\)-good points were those whose distances to the simplex \(\Delta\) were sufficiently big.  
Obviously, when \(\epsilon = 0\), this method of locating good points no longer works: for example, the entire line passing through two vertices \(A\) and \(B\) of the simplex \(\Delta\) consists of \(\alpha\)-bad points for any~\(\alpha > 0\).

Note also that in our earlier argument—where we moved the simplex so that a previously bad point would become good with respect to the shifted simplex—we did not require a full characterization of bad points.
It was enough to identify a property shared by all bad points and then move the simplex so that the given point no longer satisfies that property.

The same strategy applies in the case \(\epsilon = 0\), using Lemma~\ref{lem:criteria_good}.  
Let \( B := \arg\min_{A_i \in \Delta} \dist(q, A_i) \) be the nearest vertex of \(\Delta\) to the query point \(q\), and let \( A := \arg\max_{A_i \in \Delta} \dist(q, A_i) \) be the farthest.

Lemma~\ref{lem:criteria_good} may be equivalently stated as follows: if the point \(q\) is classified as bad, then the angle between the vectors \(\vvv{AB}\) and \(\vvv{Aq}\), specifically the angle \(\angle qAB\), must be sufficiently small. The justification for this reformulation is provided below.

Since \(B\) is closer to \(q\) than \(A\), the angle \(\angle qAB\) necessarily lies within the interval \([0, \pi/2)\).
Within this interval, the cosine function decreases monotonically from \(1\) to \(0\). Consequently, if the point \(q\) is \(\alpha'\)-bad, Lemma~\ref{lem:criteria_good} implies that
\[
\cos(\vvv{AB}, \vvv{Aq}) > 1 - \frac{\alpha'}{\delta}
\quad \Longrightarrow \quad
\angle BAq < \arccos\left(1 - \frac{\alpha'}{\delta}\right).
\]

The regions consisting of points satisfying 
\(\angle BAq < \arccos\left( 1 - \frac{\alpha'}{\delta} \right)\) are illustrated in Figure~\ref{fig:criteria_visualization}.

\begin{figure}[ht]
  \centering
  \begin{subfigure}[b]{0.45\textwidth}
    \includegraphics[width=\linewidth]{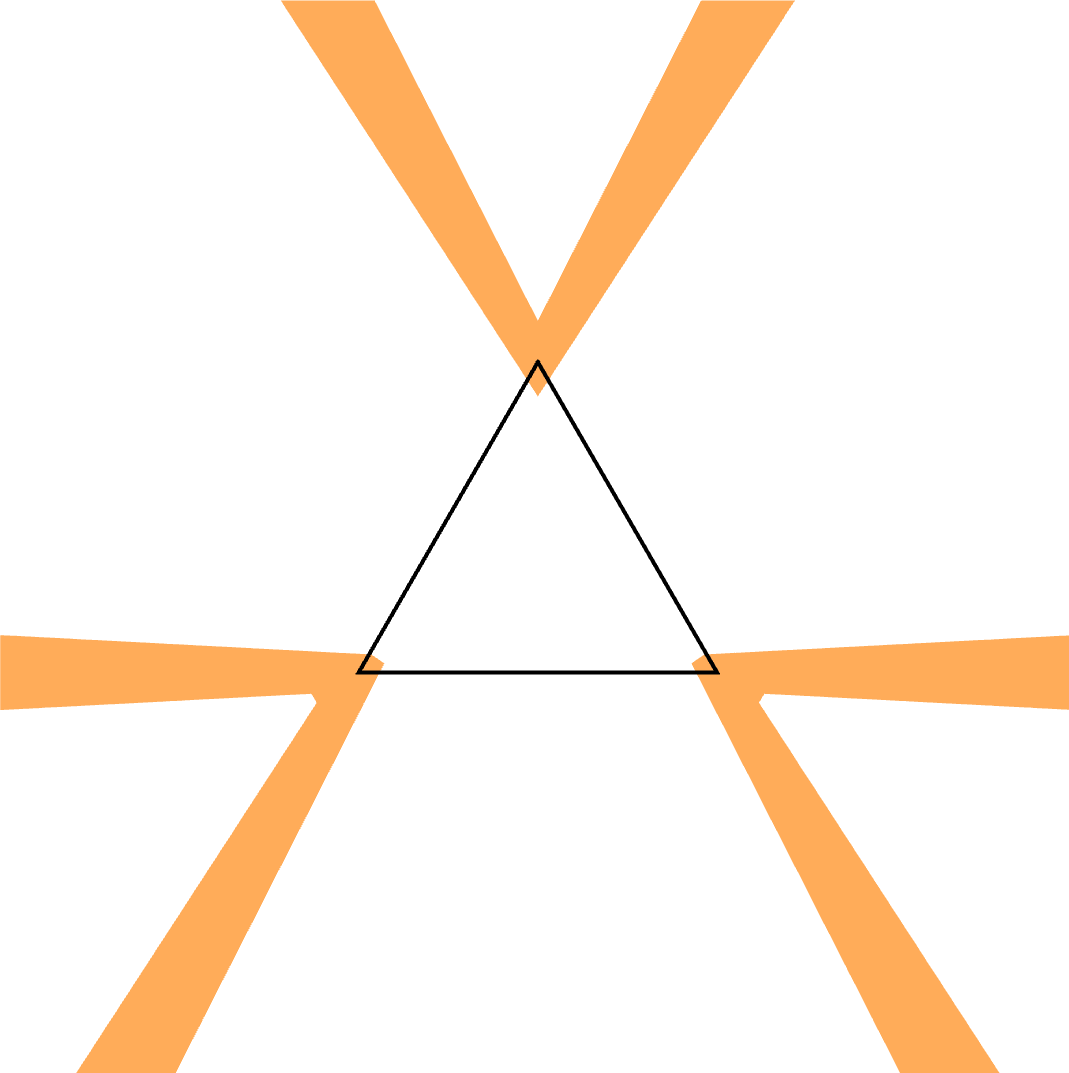}
    \caption{Two-dimensional case}
  \end{subfigure}
  \hfill
  \begin{subfigure}[b]{0.5\textwidth}
    \includegraphics[width=\linewidth]{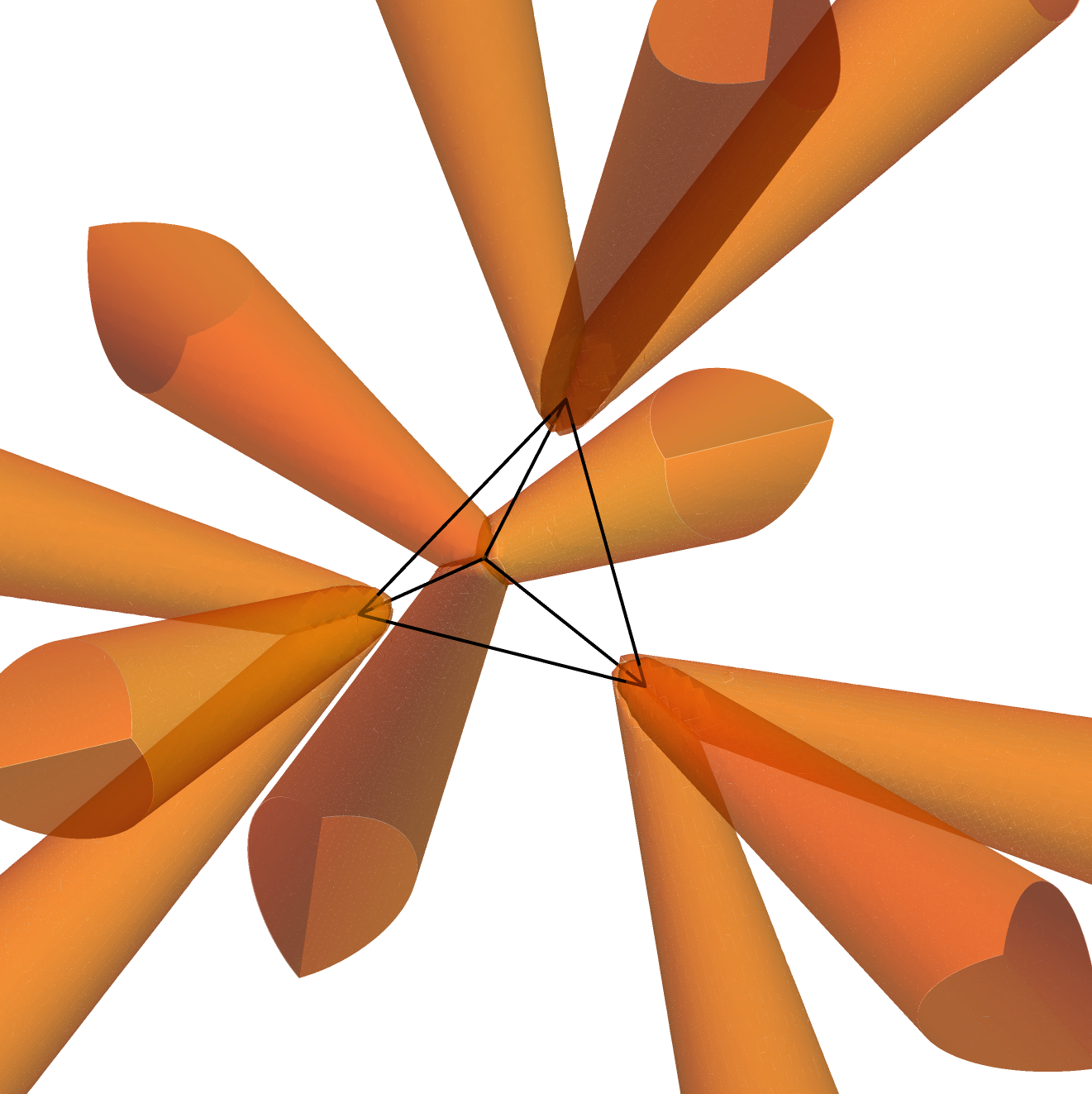}
    \caption{Three-dimensional case}
  \end{subfigure}
  \caption{All bad points lie in the orange region}
  \label{fig:criteria_visualization}
\end{figure}

The proof proceeds as follows. Assume that the point \(q\) satisfies  
\[
\angle BAq < \arccos\left( 1 - \frac{\alpha'}{\delta} \right).
\]  
Otherwise, by the preceding argument, the point \(q\) is \(\alpha'\)-good, and we can provide the response using Lemma~\ref{lem:criteria_good}.

An isometry \(\gamma\) is constructed so that the transformed point \(q' := \gamma(q)\) does not satisfy this property. Specifically, let \(A'\) and \(B'\) denote the farthest and nearest\footnote{We will even prove that the transformation preserves the identities of the nearest and farthest vertices.} vertices of \(\Delta\) with respect to \(q'\). The angle \(\angle B'A'q'\) then satisfies
\[
\angle B'A'q' \ge \arccos\left( 1 - \frac{\alpha'}{\delta} \right).
\] 

It is sufficient to construct such an isometry: if the point \(\gamma(q)\) is \(\alpha'\)-good with respect to the simplex \(\Delta\), then the original point \(q\) is \(\alpha'\)-good with respect to the transformed simplex \(\Delta' := \gamma^{-1}(\Delta)\). Once such a rotation is constructed, the remaining task is to demonstrate that \(\Delta'_{\alpha'} \subset \Delta_\alpha\).

Denote by \( a := \frac{\alpha}{2\delta} \) and \( b := \frac{\alpha'}{2\delta} \) the normalized neighborhood radii.

The main challenge in constructing such an isometry is that, if the transformation is not chosen carefully, for example, if the space is transformed solely to ensure \(\angle BAq' \ge \arccos\left( 1 - \frac{\alpha'}{\delta} \right)\), the point \(q'\) may still turn out to be bad. This can happen because the isometry might unintentionally change the identity of the farthest or nearest vertex of the simplex with respect to \(q'\).

This issue is resolved in Lemma~\ref{lem:rotate-ABQ}, where we construct a sufficiently small isometry \(R_{2\theta}\) for the specific query \(q\) such that it does not change the identity of the farthest or nearest vertex of the simplex with respect to~\(R_{2\theta}[q]\).

{Define \( \theta := \arccos(1 - 2b) \), and consider the rotation \( R_{2\theta} \), defined at the beginning of \cref{app:geom} (see Equation~\ref{eq:rotation}), associated with the query \(q\) and the simplex \(\Delta\).}
 Recall that this rotation acts on the plane \( \Pi := \mathrm{span}(A, B, Q) \), where \( Q \) is the centroid of the remaining vertices in \( \Delta \setminus \{A, B\} \), by rotating around the point \( A \) through an angle of \( 2\theta \). On the orthogonal complement \( \Pi^\perp \), it acts as the identity:
\(
R_{2\theta} \big|_{\Pi^\perp} = \mathrm{Id}_{\Pi^\perp}.
\)

Given the assumption that the point \(q\) is \(\alpha'\)-bad, it follows that
\[
\angle BAq < \arccos\left(1 - 2b\right) = \theta.
\]   

Observe that, under the assumption \(\alpha < \frac{\delta}{4}\), it follows that \(\alpha' < \frac{\delta}{32 \cdot 81}\). Therefore,
\[
1 - 2 \cdot \frac{\alpha'}{2\delta} = 1 - \frac{1}{32 \cdot 81} > \cos\left( \frac{\pi}{18} \right),
\]
which ensures that \(\theta < \frac{\pi}{18}\). This inequality can be verified using standard analysis tools, such as Taylor expansions of the cosine function.

Hence, we may apply Lemma~\ref{lem:rotate-ABQ}, which states that the farthest and nearest vertices with respect to 
\(q' := R_{2\theta}(q)\) are preserved under this isometry whenever \(\theta \le \tfrac{\pi}{18}\):
\[
  \arg\min_{A_i \in \Delta} \dist(A_i, q') = B, 
  \qquad 
  \arg\max_{A_i \in \Delta} \dist(A_i, q') = A.
\]

Moreover, \cref{lem:rotate-ABQ} also states that \(\angle q'AB > \theta\), and hence \(q'\) is \(\alpha'\)-good with respect to the simplex~\(\Delta\).

The remaining task is to demonstrate that \(\Delta'_{\alpha'} \subset \Delta_{\alpha}\). 

The largest displacement under the rotation \(R_{2\theta}^{-1}\) occurs in the plane \(ABQ\). 
Since all vertices of the simplex, except for \(A\), are equidistant from the origin \(A\), 
the point \(B\) is therefore the farthest from its image \(\gamma(B)\). 
Hence, to verify that \(\Delta'_{\alpha'} \subset \Delta_{\alpha}\), 
it suffices to demonstrate that
\[
  B(B', \alpha') \subset B(B, \alpha).
\]
Formally, this implication is proved in \cref{lem:neighborhoods_prereved_rotation}.


Verification that \(B(B', \alpha') \subset B(B, \alpha)\) follows from applying the Law of Cosines to the triangle \(\triangle B A R_{2\theta}(B)\), in which both \(\dist(A, B)\) and \(\dist(A, R_{2\theta}(B))\) are equal to \(2\delta\), and the angle at vertex \(A\) is \(2\theta\). The calculation yields:
\[
\dist(B, R_{2\theta}(B))^2 = 4\delta^2 + 4\delta^2 - 8\delta^2 \cos(2\theta) = 8\delta^2 (1 - \cos(2\theta)).
\]
Therefore,
\[
\dist(B, R_{2\theta}(B)) = 2\delta \sqrt{2(1 - \cos(2\theta))}.
\]

Thus, we require:
\[
\alpha - \alpha' \ge 2\delta \cdot \sqrt{2 - 2\cos(2\theta)} \quad \Longleftrightarrow \quad a - b \ge \sqrt{2(1 - \cos(2\theta))}.
\]

Applying the identity \(\cos(2\theta) = 2\cos^2\theta - 1 = 2(1 - 2b)^2 - 1\) yields:
\[
1 - \cos(2\theta) = 1 - \left[2(1 - 2b)^2 - 1\right] = 8b(1 - b).
\]

Thus, the condition becomes:
\[
a - b \ge \sqrt{16b(1 - b)}.
\]

Given that \(a > b\), both sides may be squared without altering the inequality:
\[
(a - b)^2 \ge 16b(1 - b)
\quad \Longleftrightarrow \quad
17b^2 - (16 + 2a)b + a^2 \ge 0.
\]

The discriminant of the quadratic polynomial \( f(b) = 17b^2 - (16 + 2a)b + a^2\) is given by
\[
\frac{D}{4} = (8 + a)^2 - 17a^2. 
\]

The inequality \( a - b \ge 4\sqrt{b-b^2} \) is satisfied only if \( a < \frac{(8+a) + \sqrt{(8+a)^2 - 17a^2}}{17} \), which occurs when
\[
0< b \le \frac{(8+a) - \sqrt{(8+a)^2 - 17a^2}}{17}.
\]

Finally, observe the following:
\[
\frac{(8+a) - \sqrt{(8+a)^2 - 17a^2}}{17} = \frac{a^2}{(8+a) + \sqrt{(8+a)^2 - 17a^2}} > \frac{a^2}{81} = b.
\]
This confirms the required condition and completes the argument.

\qed 

\bibliographystyle{abbrvnat}
\bibliography{bib}

@book{Blumenthal1953,
  author       = {Blumenthal, Leonard M.},
  title        = {Theory and Applications of Distance Geometry},
  edition      = {2},
  year         = {1970},
  note         = {Originally published 1953},
  publisher    = {Oxford University Press},
  address      = {Oxford}
}

@book{IllanesNadler1999,
  author={Alejandro Illanes and Sam B. Nadler},
  title     = {Hyperspaces: Fundamentals and Recent Advances},
  series    = {Monographs and Textbooks in Pure and Applied Mathematics},
  volume    = {216},
  publisher = {Marcel Dekker},
  address   = {New York},
  year      = {1999}
}

@book{Rodgers2000,
  author    = {Clive D. Rodgers},
  title     = {Inverse Methods for Atmospheric Sounding: Theory and Practice},
  year      = {2000},
  publisher = {World Scientific Publishing Co.},
  address   = {Singapore},
  isbn      = {978-981-02-2740-1}
}

@inproceedings{CohenKMMNST25,
  author       = {Edith Cohen and
                  Haim Kaplan and
                  Yishay Mansour and
                  Shay Moran and
                  Kobbi Nissim and
                  Uri Stemmer and
                  Eliad Tsfadia},
  editor       = {Raghu Meka},
  title        = {Data Reconstruction: When You See It and When You Don't},
  booktitle    = {16th Innovations in Theoretical Computer Science Conference, {ITCS}
                  2025, January 7-10, 2025, Columbia University, New York, NY, {USA}},
  series       = {LIPIcs},
  volume       = {325},
  pages        = {39:1--39:23},
  publisher    = {Schloss Dagstuhl - Leibniz-Zentrum f{\"{u}}r Informatik},
  year         = {2025},
  url          = {https://doi.org/10.4230/LIPIcs.ITCS.2025.39},
  doi          = {10.4230/LIPICS.ITCS.2025.39},
  timestamp    = {Mon, 03 Mar 2025 21:13:48 +0100},
  biburl       = {https://dblp.org/rec/conf/innovations/CohenKMMNST25.bib},
  bibsource    = {dblp computer science bibliography, https://dblp.org}
}

@book{Twomey1977,
  author    = {S. Twomey},
  title     = {Introduction to the Mathematics of Inversion in Remote Sensing and Indirect Measurements},
  year      = {1977},
  publisher = {Elsevier},
  address   = {Amsterdam},
  isbn      = {978-0444415880}
}

@incollection{DCGHandbook-18,
 author = {Disser, Y. and Skiena, S.S.},
 title = {Geometric reconstruction problems},
 booktitle = {Handbook of Discrete and Computational Geometry, Third Edition},
 editor = {Goodman, J.E. and O'Rourke, J. and T\'oth, C.D.},
 chapter = {35},
 publisher = {CRC Press LLC},
 year = {2017},
 edition = {3},
}

@article{bensmail2020sequential,
  author    = {Julien Bensmail and Dorian Mazauric and Fionn Mc Inerney and Nicolas Nisse and Stéphane Pérennes},
  title     = {Sequential Metric Dimension},
  journal   = {Algorithmica},
  volume    = {82},
  number    = {10},
  pages     = {2867--2901},
  year      = {2020},
  doi       = {10.1007/s00453-020-00707-5},
  eprint    = {hal-01717629v3},
  url       = {https://hal.science/hal-01717629v3}
}

@article{seager2013sequential,
  author  = {Suzanne M. Seager},
  title   = {A Sequential Locating Game on Graphs},
  journal = {Ars Combinatoria},
  volume  = {110},
  pages   = {45--54},
  year    = {2013},
  url     = {https://combinatorialpress.com/ars-articles/volume-110-ars-articles/a-sequential-locating-game-on-graphs/}
}

@article{SeqMetricDimRand21, 
title={Sequential metric dimension for random graphs}, 
volume={58}, 
DOI={10.1017/jpr.2021.16}, 
number={4}, 
journal={Journal of Applied Probability}, 
author={Ódor, Gergely and Thiran, Patrick}, 
year={2021}, 
pages={909–951}}

@article{harary1975metric,
  author  = {Frank Harary and Robert A. Melter},
  title   = {On the metric dimension of a graph},
  journal = {Ars Combinatoria},
  volume  = {2},
  pages   = {191--195},
  year    = {1976}
}

@article{Slater75,
  author    = {Peter J. Slater},
  title     = {Leaves of Trees},
  journal   = {Congressus Numerantium},
  volume    = {14},
  pages     = {549--559},
  year      = {1975}
}

@article{MetricDimensionSurvey2023,
author = {Tillquist, Richard C. and Frongillo, Rafael M. and Lladser, Manuel E.},
title = {Getting the Lay of the Land in Discrete Space: A Survey of Metric Dimension and Its Applications},
journal = {SIAM Review},
volume = {65},
number = {4},
pages = {919-962},
year = {2023},
doi = {10.1137/21M1409512},
URL = {https://doi.org/10.1137/21M1409512},
eprint = { https://doi.org/10.1137/21M1409512}
}

@inproceedings{Erdos1963,
  title={On Two Problems Of Information Theory},
  author={Paul Erdos and Alfred Renyi},
  year={1963},
  url={https://api.semanticscholar.org/CorpusID:18080665}
}

@article{Survey2017,
  author = {Cynthia Dwork and Adam Smith and Thomas Steinke and Jonathan Ullman},
  title = {Exposed! A Survey of Attacks on Private Data},
  year = {2017},
  journal = {Annual Review of Statistics and Its Application (2017)},
  language = {eng},
}

@inproceedings{DMT07,
	Author = {Cynthia Dwork and Frank McSherry and Kunal Talwar},
	Booktitle = {STOC},
	Isbn = {978-1-59593-631-8},
	Pages = {85--94},
	Publisher = {ACM},
	Title = {The price of privacy and the limits of LP decoding},
	Year = {2007}}

@inproceedings{DY08,
	Author = {Cynthia Dwork and Sergey Yekhanin},
	Booktitle = {CRYPTO},
	Pages = {469-480},
	Publisher = {Springer},
	Title = {New Efficient Attacks on Statistical Disclosure Control Mechanisms.},
	Year = {2008}}

@inproceedings{HaitnerMST22,
  author       = {Iftach Haitner and
                  Noam Mazor and
                  Jad Silbak and
                  Eliad Tsfadia},
  title        = {On the complexity of two-party differential privacy},
  booktitle    = {{STOC} '22: 54th Annual {ACM} {SIGACT} Symposium on Theory of Computing},
  pages        = {1392--1405},
  publisher    = {{ACM}},
  year         = {2022}
}

@misc{cummings2024attaxonomy,
      title={ATTAXONOMY: Unpacking Differential Privacy Guarantees Against Practical Adversaries}, 
      author={Rachel Cummings and Shlomi Hod and Jayshree Sarathy and Marika Swanberg},
      year={2024},
      eprint={2405.01716},
      archivePrefix={arXiv},
      primaryClass={cs.CR}
}

@inproceedings{balle2022reconstructing,
  title={Reconstructing training data with informed adversaries},
  author={Balle, Borja and Cherubin, Giovanni and Hayes, Jamie},
  booktitle={2022 IEEE Symposium on Security and Privacy (SP)},
  pages={1138--1156},
  year={2022},
  organization={IEEE}
}

@inproceedings{DMNS06,
	Author = {Cynthia Dwork and Frank McSherry and Kobbi Nissim and Adam Smith},
	Booktitle = {TCC},
	Pages = {265-284},
	Title = {Calibrating Noise to Sensitivity in Private Data Analysis},
	Volume = {3876},
	Year = {2006}}

@inproceedings{DinurN03,
  author       = {Irit Dinur and
                  Kobbi Nissim},
  editor       = {Frank Neven and
                  Catriel Beeri and
                  Tova Milo},
  title        = {Revealing information while preserving privacy},
  booktitle    = {Proceedings of the Twenty-Second {ACM} {SIGACT-SIGMOD-SIGART} Symposium
                  on Principles of Database Systems, June 9-12, 2003, San Diego, CA,
                  {USA}},
  pages        = {202--210},
  publisher    = {{ACM}},
  year         = {2003},
  url          = {https://doi.org/10.1145/773153.773173},
  doi          = {10.1145/773153.773173},
  timestamp    = {Sat, 30 Sep 2023 09:54:51 +0200},
  biburl       = {https://dblp.org/rec/conf/pods/DinurN03.bib},
  bibsource    = {dblp computer science bibliography, https://dblp.org}
}

@book{munkres2000topology,
  title={Topology},
  author={Munkres, J.R.},
  isbn={9780131816299},
  lccn={99052942},
  series={Featured Titles for Topology},
  year={2000},
  publisher={Prentice Hall, Incorporated}
}

\pagebreak

\appendix 
\section{Technical Results}
\label{appendix}

For completeness, this appendix includes several arguments that were deferred from the main text. Section \cref{app:right_continuity_e} establishes, via a topological approach, the right-continuity of the diameter-radius profile \(\mathtt e_X(\alpha)\) for totally bounded metric spaces. Section \cref{app:geom} introduces geometric techniques involving rotations of regular simplices, which are essential for the proof of Theorem~\ref{thm:pseudo-finiteness}. Finally, Section \cref{app:examples} contains detailed proofs of the illustrative examples introduced in \cref{sec:examples}.

\subsection{Right-continuity of the diameter–radius profile}
\label{app:right_continuity_e}

We start by fixing notation. 
Given a subset \(S\) in a metric space \((X, \dist)\), the \emph{Chebyshev radius} of~\(S\), denoted \(r(S)\), and the \emph{diameter} of \(S\), denoted \(\operatorname{diam} S\), are defined as follows:
\[
r(S) = \adjustlimits\inf_{q \in X} \sup_{x \in S} \dist(x, q), 
\qquad 
\operatorname{diam} S = \sup_{x, y \in S} \dist(x, y).
\]

The diameter-radius profile at \(\alpha\), denoted \(\mathtt e_X(\alpha)\), is the maximal radius of an enclosing ball among all subsets with diameter at most \(\alpha\). It is formally defined as follows:
\[
\mathtt{e}_X(\alpha) := \sup_{\substack{S \subseteq X \\ \diam(S) \le \alpha}} r(S).
\]
This section demonstrates that the function \(\mathtt e_X\) is right-continuous for every totally bounded metric space~\(X\).

Consider the set of compact subsets of the space \(X\), denoted by~\(\cK(X)\). 
Given two nonempty compact subsets \(A, B \subseteq X\), their \emph{Hausdorff distance} is defined as
\[
\mathrm{dist}_{\cK(X)}(A, B) := \max \left\{ \adjustlimits\sup_{a \in A} \inf_{b \in B} \dist (a, b), \, \adjustlimits\sup_{b \in B} \inf_{a \in A} \dist (a, b) \right\}.
\]

This number represents the smallest distance such that every point of one set lies within that distance of the other; in other words, it measures how far the two sets are from being contained in each other’s neighborhoods. The space \((\mathcal{K}(X), \mathrm{dist}_{\mathcal{K}(X)})\) is called the hyperspace of \(X\), where the distance is understood as the Hausdorff metric.

A classical result establishes that \(\mathcal{K}(X)\) is compact whenever \(X\) is compact (see Theorem 3.5 \cite{IllanesNadler1999}). Consequently, every sequence of compact subsets admits a subsequence that converges in the Hausdorff metric, which implies the right-continuity of \(\mathtt{e}_X(\alpha)\). The proof is provided below.

\begin{lemma}
\label{lem:right_continious_compact}
Let \(X\) be a compact metric space. Then the function \(\mathtt{e}_X(\alpha)\) is right-continuous. 
\end{lemma}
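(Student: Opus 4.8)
The plan is to combine the obvious monotonicity of $\mathtt{e}_X$ with a compactness argument in the hyperspace $(\cK(X),\mathrm{dist}_{\cK(X)})$. First I would note that $\mathtt{e}_X$ is non-decreasing: if $\alpha_1\le\alpha_2$, every set of diameter at most $\alpha_1$ also has diameter at most $\alpha_2$, so the supremum defining $\mathtt{e}_X(\alpha_2)$ is taken over a larger family. Hence for any $\alpha\ge 0$ the right limit $\lim_{\beta\downarrow\alpha}\mathtt{e}_X(\beta)$ exists and is at least $\mathtt{e}_X(\alpha)$, and it suffices to prove the reverse inequality $\lim_k\mathtt{e}_X(\alpha_k)\le\mathtt{e}_X(\alpha)$ for one (hence every) sequence $\alpha_k\downarrow\alpha$.

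Fix such a sequence. For each $k$, by definition of the supremum choose $S_k\subseteq X$ with $\diam(S_k)\le\alpha_k$ and $r(S_k)\ge\mathtt{e}_X(\alpha_k)-1/k$. Passing from $S_k$ to its closure $\overline{S_k}$ changes neither quantity: since $x\mapsto\dist(x,q)$ is continuous, $\sup_{x\in S_k}\dist(x,q)=\sup_{x\in\overline{S_k}}\dist(x,q)$ for every $q\in X$, so $r(\overline{S_k})=r(S_k)$, and a similar limiting argument gives $\diam(\overline{S_k})=\diam(S_k)$. Being a closed subset of the compact space $X$, each $\overline{S_k}$ is nonempty and compact, i.e. $\overline{S_k}\in\cK(X)$. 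Since $X$ is compact, the hyperspace $\cK(X)$ is compact in the Hausdorff metric (Theorem~3.5 of \cite{IllanesNadler1999}), so after passing to a subsequence we may assume $\overline{S_k}\to S$ for some $S\in\cK(X)$.

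Now I would invoke the fact that both $\diam$ and the Chebyshev radius $r$ are continuous on $(\cK(X),\mathrm{dist}_{\cK(X)})$, obtained by bounding their variation in terms of the Hausdorff distance as developed in this section. From continuity of $\diam$, $\diam(S)=\lim_k\diam(\overline{S_k})\le\lim_k\alpha_k=\alpha$, so $S$ is an admissible competitor in the supremum defining $\mathtt{e}_X(\alpha)$. From continuity of $r$, $r(S)=\lim_k r(\overline{S_k})=\lim_k r(S_k)\ge\lim_k\bigl(\mathtt{e}_X(\alpha_k)-1/k\bigr)=\lim_k\mathtt{e}_X(\alpha_k)$, the last limit existing by monotonicity. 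Combining, $\mathtt{e}_X(\alpha)\ge r(S)\ge\lim_k\mathtt{e}_X(\alpha_k)\ge\mathtt{e}_X(\alpha)$, so equality holds throughout, which is exactly right-continuity at $\alpha$.

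The only genuinely delicate points — and the reason the argument yields right- rather than full continuity — are that a near-extremal set for $\mathtt{e}_X(\alpha_k)$ need not be compact (so one must first pass to closures to enter $\cK(X)$) and that the Hausdorff limit $S$ inherits the constraint $\diam(S)\le\alpha$ precisely because the $\alpha_k$ approach $\alpha$ from above; approaching from below would leave no control forcing the limit set to remain extremal at the smaller scale, consistent with the left-discontinuity of $\mathtt{e}_X$ already visible for finite metric spaces. No step is computationally heavy: the substance is entirely in importing compactness of $\cK(X)$ together with the Lipschitz estimates for $\diam$ and $r$.
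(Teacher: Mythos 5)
Your proposal is correct and follows essentially the same route as the paper: reduce to closed (hence compact) subsets, use compactness of the hyperspace $(\cK(X),d_H)$ to extract a Hausdorff-convergent subsequence of near-extremal sets, and conclude via the Lipschitz continuity of $\diam$ and $r$. The only cosmetic difference is that the paper phrases the final step as a proof by contradiction while you argue directly; the substance is identical.
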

\begin{proof}
Any subset \(S \subset X\) has the same diameter and Chebyshev radius as its closure. Therefore, in the definition of the diameter-radius profile \(\mathtt{e}_X(\alpha)\), it is sufficient to consider closed subsets, which are compact due to the compactness of \(X\):
\[
\mathtt{e}_X\left((2+\epsilon)\delta\right) = \sup_{\substack{S \subset X \text{ compact} \\ \diam(S) \le (2+\epsilon)\delta}} r(S).
\]

The Chebyshev radius and the diameter are continuous functions on \(\cK(X)\). This follows from the inequalities below, which are valid for any compact subsets \(A\) and \(B\):
\[
| \diam \, A - \diam \, B | \le 2 \cdot \mathrm{dist}_{\cK(X)}(A, B), \qquad | r(A) - r(B) | \le \mathrm{dist}_{\cK(X)}(A, B).
\]
Therefore, these functions satisfy the Lipschitz condition, which guarantees their continuity.

Consider any decreasing sequence \(\beta_n \to \beta\). The function \(\mathtt{e}_X(\alpha)\) is non-decreasing, so the sequence \(\mathtt{e}_X(\beta_n)\) is non-increasing. To establish the result, it is necessary to show that \(\mathtt{e}_X(\beta_n) \to \mathtt{e}_X(\beta)\).

Suppose, for contradiction, that there exists \(\gamma > 0\) such that for every natural number \(n \in \NN\),
\[
\mathtt{e}_X(\beta_n) > \mathtt{e}_X(\beta) + \gamma.
\]

By the definition of \(\mathtt e_X\), for each natural number \(n\), there exists a set \(S_n\) with diameter at most \(\beta_n\) such that
\[
r(S_n) > \mathtt{e}_X(\beta) + \frac{\gamma}{2}.
\]

Since \(\cK(X)\) is compact, there exists a convergent subsequence of compact subsets \(S_{m(n)} \to S\) in the Hausdorff metric. The continuity of the diameter implies \(\diam \, S \le \beta\), and the continuity of the Chebyshev radius yields \(r(S_{m(n)}) \to r(S)\). Therefore,
\[
r(S) \ge \mathtt{e}_X(\beta) + \frac{\gamma}{2}.
\]
This contradicts the definition of \(\mathtt{e}_X(\beta)\), thereby completing the proof.
\end{proof}

\begin{lemma}
\label{lem:right_continious_totbounded}
For a totally bounded metric space \(X\), the function \(\mathtt e_X\) is right-continuous.
\end{lemma}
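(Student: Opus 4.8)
The plan is to reduce the statement to the already-established compact case, Lemma~\ref{lem:right_continious_compact}, by passing to the completion. Let $\hat X$ be the completion of $X$. Since $X$ is totally bounded, so is $\hat X$ (an $r/2$‑cover of $X$ by balls centered in $X$ becomes an $r$‑cover of $\hat X=\overline X$), and $\hat X$ is complete by construction; hence $\hat X$ is compact by the Heine--Borel characterization (compact $\Leftrightarrow$ complete and totally bounded). By Lemma~\ref{lem:right_continious_compact}, $\mathtt e_{\hat X}$ is right-continuous, so it suffices to establish the identity $\mathtt e_X=\mathtt e_{\hat X}$ on $[0,\infty)$. Throughout, write $r_X$ and $r_{\hat X}$ for the Chebyshev radius computed with centers ranging over $X$ and over $\hat X$, respectively.

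First I would record that the Chebyshev radius is insensitive to the ambient space: for every $S\subseteq X$, $r_X(S)=r_{\hat X}(S)$. Indeed $r_{\hat X}(S)\le r_X(S)$ trivially, while given any center $q\in\hat X$ and $\tau>0$, density of $X$ gives $q'\in X$ with $\dist(q,q')<\tau$, so $\sup_{x\in S}\dist(q',x)\le\sup_{x\in S}\dist(q,x)+\tau$; letting $\tau\to0$ yields $r_X(S)\le r_{\hat X}(S)$. This immediately gives $\mathtt e_X(\alpha)\le\mathtt e_{\hat X}(\alpha)$: any $S\subseteq X$ with $\diam S\le\alpha$ is a subset of $\hat X$ of the same diameter and (by the above) the same radius, hence $r_X(S)\le\mathtt e_{\hat X}(\alpha)$; take the supremum over $S$.

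The reverse inequality $\mathtt e_{\hat X}(\alpha)\le\mathtt e_X(\alpha)$ is the heart of the matter. Fix $\alpha$ and $S\subseteq\hat X$ with $\diam S\le\alpha$; one must produce, for each $\gamma>0$, a set $S'\subseteq X$ with $\diam S'\le\alpha$ and $r_X(S')\ge r_{\hat X}(S)-\gamma$. We may take $S$ compact (replace it by its closure — same diameter and radius), then replace it by a finite $\eta$‑net $N\subseteq S$: this preserves $\diam N\le\diam S\le\alpha$ and loses at most $\eta$ in the Chebyshev radius, since for every center $q$ one has $\sup_{x\in S}\dist(q,x)\le\sup_{p\in N}\dist(q,p)+\eta$. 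Next, perturb the finitely many points of $N$ into $X$ by density, obtaining a finite $N'\subseteq X$ with small Hausdorff distance to $N$; since $N'\subseteq X$, $r_X(N')=r_{\hat X}(N')$ differs from $r_{\hat X}(N)$ by at most $\mathrm{dist}_{\cK(\hat X)}(N,N')$ (the Lipschitz bound from the proof of Lemma~\ref{lem:right_continious_compact}). When $\diam N<\alpha$ strictly there is slack: choosing the perturbation smaller than $\tfrac12(\alpha-\diam N)$ keeps $\diam N'\le\alpha$, and this already yields $\mathtt e_X(\alpha)\ge r_{\hat X}(S)$.

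The delicate point, which I expect to be the main obstacle, is the threshold case $\diam N=\alpha$ — equivalently, when $S$ carries a diameter-realizing pair of points that are isolated in $S$ and cannot be slid ``inward'' within $\hat X$. A naive perturbation then only certifies $\mathtt e_X(\alpha+\epsilon)\ge\mathtt e_{\hat X}(\alpha)-\epsilon$, and removing the $+\epsilon$ requires a finer analysis of extremal configurations: one must show that such a rigid extremal set either admits a small diameter-decreasing perturbation inside $\hat X$ (reducing to the previous case) or is in fact supported on points that already lie in $X$. This part is genuinely subtle — indeed it is precisely where total boundedness (as opposed to mere boundedness) must be used — and it is the crux of the argument. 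Once $\mathtt e_X=\mathtt e_{\hat X}$ is established, right-continuity of $\mathtt e_X$ follows at once from right-continuity of $\mathtt e_{\hat X}$ (Lemma~\ref{lem:right_continious_compact}).
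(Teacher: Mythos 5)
Your route is the same as the paper's: pass to the completion $\hat X$, which is compact, invoke Lemma~\ref{lem:right_continious_compact} there, and transfer the conclusion back to $X$ via the identity $\mathtt{e}_X=\mathtt{e}_{\hat X}$. The easy inequality $\mathtt{e}_X\le\mathtt{e}_{\hat X}$ and the net-plus-perturbation argument in the case of diameter slack are correct. But the proof is not complete: you explicitly leave open the threshold case $\diam S=\alpha$, and that case is the entire content of the claim. What your argument actually delivers is only $\mathtt{e}_{\hat X}(\alpha)\le\mathtt{e}_X(\alpha+\eta)+\eta$ for every $\eta>0$; combined with the easy direction and the right-continuity of $\mathtt{e}_{\hat X}$, this shows that right-continuity of $\mathtt{e}_X$ at $\alpha$ is \emph{equivalent} to the missing inequality $\mathtt{e}_{\hat X}(\alpha)\le\mathtt{e}_X(\alpha)$. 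So deferring ``the crux'' is deferring the lemma itself.

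You should also know that the obstacle you isolated is not a removable technicality, and it is exactly the point the paper's own proof passes over in one sentence (``every subset of $X$ can be approximated arbitrarily well by subsets of $\hat X$ and vice-versa''). The vice-versa direction can genuinely fail at threshold diameters. Concretely, let $A,B,C\in\RR^2$ be the vertices of an equilateral triangle of side $1$ with centroid $G$, let $u_P=(P-G)/\|P-G\|$, and set $X=\bigl\{P+\tfrac{1}{10n}u_P : P\in\{A,B,C\},\ n\in\NN\bigr\}$, a bounded (hence totally bounded) subset of the plane. Every cross-cluster distance strictly exceeds $1$, so any $S\subseteq X$ with $\diam S\le 1$ lies in a single cluster and $\mathtt{e}_X(1)\le\tfrac{1}{10}$; yet for every $\eta>0$ the triple $\{A+\tfrac{1}{10N}u_A,\,B+\tfrac{1}{10N}u_B,\,C+\tfrac{1}{10N}u_C\}$ with $N$ large has diameter at most $1+\eta$ while every center in $X$ is at distance greater than $1$ from one of its three points, so $\mathtt{e}_X(1+\eta)\ge 1$. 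Thus $\mathtt{e}_X$ jumps at $\alpha=1$ even though $X$ is totally bounded, and $\mathtt{e}_X(1)\ne\mathtt{e}_{\hat X}(1)=1$. The moral is that the threshold case cannot be repaired by a finer perturbation analysis: the identity you (and the paper) rely on is false in general, so a correct treatment must either modify the statement (e.g., measure radii and extremal sets in the completion) or impose additional hypotheses on $X$.
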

\begin{proof}
A well-known extension of the Heine–Borel theorem establishes that a metric space is compact
if and only if it is complete and totally bounded (see \cite{munkres2000topology}[Theorem 45.1]).  The completion \(\hat X\) of a totally bounded
space \(X\) is complete by construction and remains totally bounded, hence compact.
Since \(X\) is dense in \(\hat X\), every subset of \(X\) can be approximated
arbitrarily well by subsets of \(\hat X\) (and vice-versa), so \(\mathtt e_X=\mathtt e_{\hat X}\).
Lemma~\ref{lem:right_continious_compact} now applies to \(\hat X\), yielding the desired
right-continuity for \(\mathtt e_X\).
\end{proof}

\subsection{Geometric tools for Euclidean simplices}\label{app:geom}


This section presents the geometric constructions required for the proof of \cref{thm:pseudo-finiteness}, which are separated here to avoid interrupting the main line of argument.

Let \(\Delta = \{A_i\}_{i=0}^{n}\) denote a regular simplex\footnote{We use \( \Delta \) to denote the discrete set of \( n+1 \) vertices of the regular simplex, rather than its convex hull.} in \(\RR^n\) with edge length \(2\delta\). To specify two specific vertices, denote
\[
A := A_1,
\qquad
B := A_2.
\]
Let \(Q\) be the centroid of the remaining vertices:
\[
Q = \frac{1}{n-1} \sum_{C \in \Delta \setminus \{A, B\}} C.
\]

Consider the rotation
\[
R_{2\theta} : \RR^n \to \RR^n
\]
that fixes the vertex \(A\), acts in the plane \(\Pi = \mathrm{span}\{A, B, Q\}\) as a rotation by angle \(2\theta\) around \(A\) in the direction from \(\vvv{AB}\) to \(\vvv{AQ}\) along the smaller angle between them, and acts as the identity on the orthogonal complement \(\Pi^\perp\), which consists of all vectors orthogonal to \(\Pi\).

Place the point \(A\) at the origin and select the orthonormal basis \(\vec{d}_1, \ldots, \vec{d}_n\) such that:
\[
\begin{aligned}
\mathrm{span}\{\vec{d}_1, \vec{d}_2\} = \Pi, & & \vvv{AB} = k \cdot \vec{d}_1 \quad \text{with } k > 0, \\ 
& &\vvv{AQ} = k_1 \vec{d}_1 + k_2 \vec{d}_2 \quad \text{with } k_1, k_2 > 0.
\end{aligned}
\]
The rotation \(R_{2\theta}\) is represented in this basis\footnote{\textcolor{black}{Notice that such a basis exists because \(\angle BAQ \le \frac{\pi}{2}\).}} by the matrix
\begin{equation}
\label{eq:rotation}
R_{2\theta}
=
\begin{pmatrix}
\cos(2\theta) & -\sin(2\theta) & 0 & \cdots & 0 \\
\sin(2\theta) &  \cos(2\theta) & 0 & \cdots & 0 \\
0             &  0             & 1 &        & 0 \\
\vdots        &  \vdots        &   & \ddots &   \\
0             &  0             & 0 &        & 1
\end{pmatrix}.
\end{equation}

The upper-left \(2\times2\) block corresponds to a counterclockwise rotation in the plane \(\Pi\), and the rest acts as the identity on \(\Pi^\perp\).  

This transformation is a Euclidean isometry: it preserves all distances and acts as a rotation in the \(ABQ\)-plane while leaving the orthogonal directions unchanged.

\begin{remark}
\label{rem:cosines_notation}
Given two vectors \(v_1 := \vvv{NM}\) and \(v_2 := \vvv{NK}\) for some points \(\{N, M, K\}\), we will often refer to \(\cos \angle MNK\) as \(\cos(v_1, v_2)\). This emphasizes the computational role of the cosine as the inner product between the normalized vectors \(v_1\) and \(v_2\):
\[
\cos \angle MNK = \frac{v_1}{\|v_1\|} \cdot \frac{v_2}{\|v_2\|}.
\]
\end{remark}

The following lemma is a basic yet useful observation. 

\begin{lemma}
\label{lem:projection_cosine}
Let \(\Pi\) be a plane, and let \(A, B \in \Pi\) be two distinct points. Fix any point \(q \in \RR^n\), distinct from \(A\), and let \(H_q\) denote the orthogonal projection of \(q\) onto \(\Pi\). Then,
\[
\begin{cases}
0 < \cos\!\angle qAB \;\le\; \cos\!\angle H_qAB, & \text{if } \angle qAB < \tfrac\pi2,\\
0 > \cos\!\angle qAB \;\ge\; \cos\!\angle H_qAB, & \text{if } \angle qAB > \tfrac\pi2,\\
\cos\!\angle qAB = \cos\!\angle H_qAB = 0,   & \text{if } \angle qAB = \tfrac\pi2.
\end{cases}
\]
\end{lemma}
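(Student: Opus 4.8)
The plan is to reduce everything to the identity $\vvv{Aq}\cdot \vvv{AB} = \vvv{AH_q}\cdot \vvv{AB}$, which holds because $\vvv{Aq} - \vvv{AH_q} = \vvv{H_qq}$ is orthogonal to the plane $\Pi$, and $\vvv{AB}$ lies in $\Pi$. First I would set up coordinates with $A$ at the origin, write $q = H_q + v$ where $v \perp \Pi$, and observe $\langle q, \vvv{AB}\rangle = \langle H_q, \vvv{AB}\rangle$. Note also that $q \neq A$ and $B \neq A$ so the cosines $\cos\angle qAB$ and (when $H_q \neq A$) $\cos\angle H_qAB$ are well defined; I should handle the degenerate case $H_q = A$ separately, where $\langle q,\vvv{AB}\rangle = 0$ forces $\angle qAB = \pi/2$, which is consistent with all three cases collapsing to the $\pi/2$ line (and one may declare $\cos\angle H_qAB = 0$ by convention, or simply note $H_q=A$ cannot occur when $\angle qAB \neq \pi/2$).

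Next, by the Pythagorean theorem $\|q\|^2 = \|H_q\|^2 + \|v\|^2 \ge \|H_q\|^2$, so $\|\vvv{Aq}\| \ge \|\vvv{AH_q}\|$, with equality iff $v = 0$. Writing $c := \langle \vvv{Aq},\vvv{AB}\rangle = \langle \vvv{AH_q},\vvv{AB}\rangle$ and $L := \|\vvv{AB}\| > 0$, we have
\[
\cos\angle qAB = \frac{c}{\|\vvv{Aq}\|\,L}, \qquad \cos\angle H_qAB = \frac{c}{\|\vvv{AH_q}\|\,L}.
\]
Since the two cosines are equal to $c/L$ divided by the two nonnegative lengths $\|\vvv{Aq}\| \ge \|\vvv{AH_q}\|$, they have the same sign, determined by the sign of $c$. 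If $c > 0$ (i.e.\ $\angle qAB < \pi/2$), dividing the larger positive denominator $\|\vvv{Aq}\|$ gives the smaller positive value, so $0 < \cos\angle qAB \le \cos\angle H_qAB$. If $c < 0$ (i.e.\ $\angle qAB > \pi/2$), dividing the larger denominator makes the negative quantity less negative, i.e.\ closer to zero, so $0 > \cos\angle qAB \ge \cos\angle H_qAB$. If $c = 0$, both cosines vanish. This is exactly the claimed trichotomy.

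There is no serious obstacle here; the only point requiring a little care is the bookkeeping around the degenerate case $H_q = A$, and making sure the direction of the inequalities is stated correctly in the obtuse case (a larger denominator pulls a negative number \emph{up} toward $0$, hence $\cos\angle qAB \ge \cos\angle H_qAB$, not $\le$). Everything else is the one-line orthogonality computation together with Pythagoras.
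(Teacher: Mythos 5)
Your proposal is correct and follows essentially the same route as the paper: both rest on the identity \(\vvv{Aq}\cdot\vvv{AB}=\vvv{AH_q}\cdot\vvv{AB}\) (orthogonality of \(\vvv{H_qq}\) to \(\Pi\)) and then compare the two cosines via the length inequality \(\lVert\vvv{AH_q}\rVert\le\lVert\vvv{Aq}\rVert\), which the paper packages as the factor \(\lVert AH_q\rVert/\lVert Aq\rVert\in[0,1]\). Your handling of the degenerate case \(H_q=A\) is in fact slightly more explicit than the paper's, which dismisses it as trivial.
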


\begin{proof}
Since the cosine between two vectors can be computed via their inner product (see Remark~\ref{rem:cosines_notation}), the key observation is that  
\[
\vvv{AB} \cdot \vvv{Aq} = \vvv{AB} \cdot \vvv{AH_q}.
\]
This equality holds because the vectors \(\vvv{AB}\) and \(\vvv{H_q q}\) are orthogonal.
Indeed, since \(H_q\) is the projection of \(q\) onto the plane \(\Pi\), the vector \(\vvv{H_q q}\) is orthogonal to every vector in \(\Pi\), including \(\vvv{AB}\).

Hence,
\[
\vvv{AB} \cdot \vvv{Aq} = \vvv{AB} \cdot \vvv{AH_q}.
\]

If \(H_q \ne A\) (the case \(H_q = A\) is trivial), we compute:
\begin{align*}
\cos(\vvv{Aq}, \vvv{AB}) 
&= \frac{\vvv{Aq}}{\lVert Aq \rVert} \cdot \frac{\vvv{AB}}{\lVert AB \rVert} 
= \frac{\vvv{AH_q} \cdot \vvv{AB}}{\lVert Aq \rVert \cdot \lVert AB \rVert} \\
&= \left( \frac{\lVert AH_q \rVert}{\lVert Aq \rVert} \right)
\cdot \left( \frac{\vvv{AH_q}}{\lVert AH_q \rVert} \cdot \frac{\vvv{AB}}{\lVert AB \rVert} \right) \\
&= \cos(\vvv{AH_q}, \vvv{AB}) \cdot \frac{\lVert AH_q \rVert}{\lVert Aq \rVert}.
\end{align*}

Since \(AH_q\) is the projection of \(Aq\) onto the plane \(\Pi\), we have  
\[
0 \le \frac{\lVert AH_q \rVert}{\lVert Aq \rVert} \le 1,
\]
with equality only if \(q \in \Pi\).

Now observe that for any angle \(\phi \le \pi\),
\[
\cos \phi < 0 \quad \Longleftrightarrow \quad \phi > \frac{\pi}{2}.
\]
Therefore, the product \(\cos(\vvv{AH_q}, \vvv{AB}) \cdot \frac{\lVert AH_q \rVert}{\lVert Aq \rVert}\) is:
- smaller than \(\cos(\angle H_q AB)\) if \(\angle qAB < \frac{\pi}{2}\),
- larger if \(\angle qAB > \frac{\pi}{2}\), and
- equal when the angle is \(\frac{\pi}{2}\), as both cosines are zero in this case.

This completes the proof.
\end{proof}

\begin{lemma}[Rotation in the $ABQ$-plane keeps near/far order]
\label{lem:rotate-ABQ}
Assume for a query point \(q \in \RR^n\), the nearest and farthest points are \(B\) and \(A\) respectively; that is:
\[
B = \argmin_{A_i \in \Delta} \dist(q, A_i),
\qquad
A = \argmax_{A_i \in \Delta} \dist(q, A_i).
\]
Then, for any angle \(\theta < \frac{\pi}{18}\) whenever \(\angle BAq \le \theta\) the isometry \(R_{2\theta}\) preserves both the nearest and the farthest vertices of~\(\Delta\) with respect to \(q\):
\[
B = \argmin_{A_i \in \Delta} \dist(R_{2\theta} q, A_i),
\qquad
A = \argmax_{A_i \in \Delta} \dist(R_{2\theta} q, A_i).
\]

Moreover, the rotated point \(q' := R_{2\theta} q\) satisfies
\[
\angle q'AB > \theta.
\]
\end{lemma}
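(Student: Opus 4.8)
The plan is to set up coordinates with $A$ at the origin and the orthonormal basis $\vec d_1,\dots,\vec d_n$ as in the definition of $R_{2\theta}$, so that $\vvv{AB}=k\vec d_1$ and the rotation acts only on the $(\vec d_1,\vec d_2)$-plane $\Pi$. I will decompose $q = H_q + q^\perp$, where $H_q$ is the orthogonal projection of $q$ onto $\Pi$ and $q^\perp\in\Pi^\perp$; since $R_{2\theta}$ is the identity on $\Pi^\perp$, we have $q' = R_{2\theta}(H_q) + q^\perp$, and for any vertex $A_i$, writing $A_i = P_i + A_i^\perp$ with $P_i\in\Pi$, $A_i^\perp\in\Pi^\perp$, the Pythagorean identity gives $\dist(q',A_i)^2 = \dist(R_{2\theta}(H_q),P_i)^2 + \dist(q^\perp,A_i^\perp)^2$. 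Thus the only thing that changes as we rotate is the in-plane term, and everything reduces to a two-dimensional problem in $\Pi$: understand how $\dist(R_{2\theta}(H_q),P_i)$ compares to $\dist(H_q,P_i)$.

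The key geometric input is the following. Rotating $H_q$ about $A$ by angle $2\theta$ is the same (up to isometry of $\Pi$) as rotating each $P_i$ by $-2\theta$; and since $A$ is the farthest vertex and $B$ the nearest with the small-angle hypothesis $\angle BAq\le\theta$, Lemma~\ref{lem:projection_cosine} tells us $\angle BAH_q\le\theta$ as well (the projection only shrinks the angle when it is acute). So in the plane $\Pi$, the direction $\vvv{AH_q}$ makes an angle at most $\theta$ with $\vvv{AB}$. Rotating by $2\theta$ in the direction from $\vvv{AB}$ toward $\vvv{AQ}$ therefore moves $\vvv{Aq'}$ to make an angle \emph{at least} $\theta$ with $\vvv{AB}$ on the other side (more precisely the new angle $\angle q'AB$ lies in $(\theta, 3\theta]$), which immediately gives the "moreover" claim $\angle q'AB>\theta$ — this part is essentially trigonometry once the coordinate picture is fixed, using $\theta<\pi/18$ to keep all relevant angles well inside $(0,\pi/2)$ so cosines stay monotone.

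For the near/far preservation I would argue as follows. Write $\rho = \lVert AB\rVert = 2\delta = \lVert AH_q - 0\rVert$-type quantities explicitly: all vertices $A_i$ with $i\ne 1$ satisfy $\lVert AA_i\rVert = 2\delta$, and for such vertices $\dist(q',A_i)^2 = \lVert q'\rVert^2 + 4\delta^2 - 2\lVert q'\rVert\cdot 2\delta\cos\angle q'AA_i$, while $A$ itself has $\dist(q',A)=\lVert q'\rVert$. Since $R_{2\theta}$ is an isometry fixing $A$, $\lVert q'\rVert = \lVert q\rVert$ is unchanged, so $A$ remains the farthest vertex iff $\dist(q,A)\ge\dist(q,A_i)$ continues to hold, i.e. iff $\cos\angle q'AA_i\ge 0$ stays under control; and among the $A_i$, $B$ is nearest iff $\cos\angle q'AB$ is the largest of the $\cos\angle q'AA_i$. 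Before rotation $\cos\angle qAB$ is largest (that is what "$B$ nearest, $A$ farthest" means after projecting via Lemma~\ref{lem:projection_cosine}); the rotation changes $\angle q'AA_i = \angle qAA_i \pm 2\theta$ only for those $A_i$ whose in-plane projection direction is not orthogonal to $\Pi$'s rotation, and one checks that for $n\ge 2$ the angular separation between $\vvv{AB}$ and any $\vvv{AA_i}$ (and between $\vvv{AB}$ and $\vvv{AQ}$) is bounded below by a universal constant — in a regular simplex $\cos\angle A_iAA_j = 1/2$ wait, is $\tfrac12$? no: for the regular simplex with all edges equal, $\angle A_iAA_j = 60^\circ$, so the separation is exactly $\pi/3$. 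Since $2\theta < \pi/9 < \pi/3$, rotating by $2\theta$ cannot move $q'$ close enough to any $\vvv{AA_i}$, $i\ne 1,2$, to overtake $B$, nor can it pull $q'$ away from $A$ by enough to let some $A_i$ become farther than $A$; quantifying this with the cosine law and the bound $\theta<\pi/18$ gives the strict inequalities. The main obstacle I anticipate is exactly this last step: getting clean, quantitative control on all the competing vertices $A_i$ simultaneously, since one must rule out both a new nearest vertex and a new farthest vertex, and the worst case involves $Q$'s direction (toward which we rotate) — here the constant $\pi/18$ is presumably chosen precisely so that $3\theta$ plus the $\pi/3$ simplex-separation still leaves comfortable margin, and carefully tracking these angle budgets is where the real work lies.
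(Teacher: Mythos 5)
Your overall setup matches the paper's: place $A$ at the origin, split $q=H_q+q^\perp$ with $H_q\in\Pi$, note that $R_{2\theta}$ only moves the in-plane part, and prove the ``moreover'' claim via Lemma~\ref{lem:projection_cosine} plus the triangle inequality for angles (the paper's Step~III is exactly your $(\theta,3\theta]$ computation, modulo the fact that $\angle q'AB$ can exceed $3\theta$ when $q^\perp$ is large --- harmless, since only the lower bound is needed there). Where you genuinely diverge is the near/far preservation: the paper grinds this out in explicit simplex coordinates, reducing each comparison $\lvert A_iC\rvert$ vs.\ $\lvert AC\rvert$ to the inner-product criterion $2\,\vvv{AC}\cdot\vvv{AA_i}\ge 1$ and verifying trigonometric inequalities; you instead use the synthetic fact that $\angle A_iAA_j=\pi/3$ in a regular simplex and budget angles. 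For the \emph{nearest} vertex your route actually works cleanly: since all $A_i$ ($i\ne 1$) are equidistant from $A$, ``$B$ nearest among them'' is equivalent to $\angle q'AB\le\angle q'AA_i$, and $\angle q'AB\le 3\theta\le\pi/6\le\pi/3-3\theta\le\angle q'AA_i$ recovers the constant $\pi/18$ exactly --- arguably more transparent than the paper's Step~II.

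The gap is in the \emph{farthest}-vertex step, precisely where you anticipated trouble. A pure angle budget cannot work there: ``$A$ farthest'' is the condition $\cos\angle qAA_i\ge\delta/\lVert Aq\rVert$ for all $i\ne 1$, and this can hold with \emph{equality} for some $i\ge 3$ (a tie between $A$ and $A_i$ for farthest is consistent with all hypotheses). If the rotation then increased $\angle qAA_i$ by any positive amount, $A_i$ would overtake $A$; so no bound of the form ``$2\theta<\pi/3$, hence the angle cannot change enough'' can close this case. What saves the statement is directional, not quantitative: the out-of-plane coordinates $z_3,\dots,z_n$ of $\vvv{Aq}$ are fixed by $R_{2\theta}$, the in-plane component of every $\vvv{AA_i}$ ($i\ge3$) equals $\vvv{AQ}$, and since the in-plane component of $\vvv{Aq}$ starts within $\theta$ of $\vvv{AB}$ while $\angle QAB\ge\pi/4$, rotating by $2\theta$ \emph{toward} $\vvv{AQ}$ strictly decreases the in-plane angle to $\vvv{AQ}$ and hence increases every inner product $\vvv{Aq}\cdot\vvv{AA_i}$. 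This monotonicity (which is what the paper's Step~I inequality $\sqrt{\tfrac{n+1}{n-1}}\cos2\theta-\sin2\theta\ge0$ is really encoding) is the missing ingredient; without it your argument does not rule out a tied $A_i$ becoming strictly farther than $A$. A second, smaller omission (shared, as it happens, by the paper's write-up): neither your sketch nor Steps~I--II explicitly verifies $\lvert Aq'\rvert\ge\lvert Bq'\rvert$, which is needed for both conclusions since the rotation pushes $q$ angularly away from $B$ and so increases $\lvert Bq'\rvert$; this requires combining $2z_1\ge1$ with the constraint $2\,\vvv{Aq}\cdot\vvv{AQ}\ge1$ coming from $A$ being farther than the $A_i$, $i\ge3$.
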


\begin{proof}

We will show that the nearest point \(B\) remains the nearest, and that the farthest point \(A\) likewise remains the farthest, whenever \(\angle qAB \le \frac{\pi}{18}\).
Finally, we demonstrate that \(\angle q'AB > \theta\).

To simplify our analysis, we choose an explicit orthonormal basis adapted to the geometry of the simplex. This choice simplifies computations and highlights geometric structure relevant to our problem.

\medskip
\noindent\textbf{Coordinates in an orthonormal basis.}

To simplify the calculations, we first scale the simplex \(A_1A_2\ldots A_{n+1}\) by a factor of \(1/(2\delta)\).
We then embed it in \(\RR^{n+1}\) by mapping the \(i^{\text{th}}\) vertex to \(\tfrac{1}{\sqrt2}e_i\).
Throughout, we set \(A := A_1\) and \(B := A_2\).
In these coordinates,
\[
\vvv{AB} = \tfrac1{\sqrt2}(-1,1,0,\ldots,0),
\qquad
\vvv{AQ} = \tfrac1{\sqrt2}\Bigl(-1,0,\tfrac1{n-1},\ldots,\tfrac1{n-1}\Bigr).
\]

To proceed, we introduce the two unit vectors that span the affine plane \(ABQ\); these vectors are key for defining the rotation \(R_{2\theta}\) (see~\eqref{eq:rotation}).

Writing the plane as
\[
ABQ = A + \langle \vec d_1,\vec d_2\rangle,
\]
with Minkowski addition and linear span, we take
\[
\vec d_1 := \vvv{AB},\qquad
\vec d_2 := \sqrt{\tfrac{n-1}{2(n+1)}}\Bigl(-1,-1,\tfrac{2}{n-1},\dots,\tfrac{2}{n-1}\Bigr).
\]
A direct check confirms that \(\{\vec d_1,\vec d_2\}\) is orthonormal.  Moreover
\[
2\,\vvv{AQ}-\vvv{AB}= \sqrt{\tfrac{n+1}{n-1}}\;\vec d_2,
\qquad\Longrightarrow\qquad
2\,\vvv{AQ}= \sqrt{\tfrac{n+1}{n-1}}\;\vec d_2+\vec d_1.
\]

\smallskip
\noindent\textit{Completing the basis.}  
Pick an orthonormal completion \(\{\vec d_i\}_{i=3}^n\) of \(\{\vec d_1,\vec d_2\}\).  
A convenient choice is
\[
\vec d_3:=\frac{\vvv{QA_3}}{\|QA_3\|}= 
\gamma\bigl(0,0,\tfrac{n-2}{n-1},-\tfrac1{n-1},\ldots,-\tfrac1{n-1}\bigr),
\quad
\gamma:=\sqrt{\tfrac{n-1}{n-2}},
\]
and analogous definitions for \(i\ge 4\).

With this basis, for any \(3 \le i \le n+1\) we have
\[
\vvv{AA_i}= 
\tfrac12\Bigl(
\sqrt{\tfrac{2(n-2)}{n-1}}\;\vec d_i+
\sqrt{\tfrac{n+1}{n-1}}\;\vec d_2+
\vec d_1
\Bigr).
\]

\medskip
\noindent\textbf{Coordinates of the query point.}
Let \(q\) satisfy \(\angle qAB\le\theta\) and write
\begin{equation}
\label{eq:query_coordinates}
\vvv{Aq}= \sum_{i=1}^{n} z_i\,\vec d_i,
\quad\text{so that}\quad
\frac{z_1}{\sqrt{\sum_{i=1}^{n}z_i^2}}=\cos(\angle qAB)\ge\cos\theta.
\end{equation}
In particular, \(z_1 \ge (\cos\theta / \sin\theta)\,|z_2|\).

Observe that for any vertex \(A_i\) of the simplex and for \emph{any} point \(C\), one has
\begin{equation}
\label{eq:distance_scalar_product}
|AC|\ge|A_iC|\;\Longleftrightarrow\;
2\,\vvv{AC}\cdot\vvv{AA_i}\;\ge\;1, 
\end{equation}
since 
\[
\|\vvv{A_iC} \|^2 = \|\vvv{AC}-\vvv{AA_i}\|^2 = \|\vvv{AC}\|^2 - 2 \vvv{AC} \cdot \vvv{AA_i}+1.
\]

Moreover, for any point \(C\) with coordinates 
\(\vvv{AC} = \sum_{i=1}^n y_i d_i\), 
a straightforward calculation shows that
\begin{equation}
\label{eq:scalar_simplex}
\vvv{AC}\cdot\vvv{AA_i}
=\tfrac12\Bigl(
\sqrt{\tfrac{2(n-2)}{n-1}}\,y_i
+\sqrt{\tfrac{n+1}{n-1}}\,y_2
+ y_1
\Bigr).
\end{equation}

\noindent\textbf{Step I: the farthest point remains the farthest when \(\angle qAB \le \frac{\pi}{8}\).}

Because \(R_{2\theta}\) acts as a planar rotation in \(\langle\vec d_1,\vec d_2\rangle\) and fixes the orthogonal complement, we have
\[
\vvv{Aq'}=
(\cos 2\theta\,z_1-\sin 2\theta\,z_2)\vec d_1
+(\cos 2\theta\,z_2+\sin 2\theta\,z_1)\vec d_2
+\sum_{i=3}^{n} z_i\,\vec d_i.
\]

Fix any \(n+1 \ge i \ge 3\). By \cref{eq:distance_scalar_product} and \cref{eq:scalar_simplex}, to show that \(|Aq| \ge |A_iq|\) implies \(|Aq'| \ge |A_iq'|\), it suffices to verify that the inequality
\[
\sqrt{\tfrac{2(n-2)}{n-1}}\,z_i
+\sqrt{\tfrac{n+1}{n-1}}\,z_2
+ z_1
\;\ge\;1
\]
implies
\[
\sqrt{\tfrac{2(n-2)}{n-1}}\,z_i
+\sqrt{\tfrac{n+1}{n-1}}\bigl(\cos 2\theta\,z_2+\sin 2\theta\,z_1\bigr)
+(\cos 2\theta\,z_1-\sin 2\theta\,z_2)
\;\ge\;1.
\]

\smallskip
For this implication to hold, it suffices to verify
\[
\bigl(\sqrt{\tfrac{n+1}{n-1}}\sin 2\theta +\cos 2\theta -1\bigr)\,z_1
\;\ge\;
\bigl(\sqrt{\tfrac{n+1}{n-1}}(1-\cos 2\theta)+\sin 2\theta\bigr)\,z_2,
\]
or, equivalently (using the double-angle identities), 
\[
(\sqrt{\tfrac{n+1}{n-1}}\cos\theta-\sin\theta)\,z_1
\;\ge\;
(\sqrt{\tfrac{n+1}{n-1}}\sin\theta+\cos\theta)\,z_2.
\]

Because \(z_1\ge(\cos\theta/\sin\theta)|z_2|\) (see \cref{eq:query_coordinates}), it is enough to verify
\[
\bigl(\sqrt{\tfrac{n+1}{n-1}}\cos\theta-\sin\theta\bigr)\,\tfrac{\cos\theta}{\sin\theta}
\;\ge\;
\sqrt{\tfrac{n+1}{n-1}}\sin\theta+\cos\theta,
\]
which in turn is equivalent to
\[
\sqrt{\tfrac{n+1}{n-1}}\cos 2\theta-\sin 2\theta\;\ge\;0.
\]
The latter holds for every \(0<\theta\le\tfrac\pi8\), completing the proof.

\medskip
\noindent\textbf{Step II: the nearest point remains the nearest when \(\angle qAB \le \frac{\pi}{18}\).}

We will prove that whenever \(\angle q'AB \le \frac{\pi}{6}\), all distances \(|A_iq'|\) for \(A_i \notin \{A, B\}\) are greater than \(|Bq'|\). To that end, let us denote the angle \(\angle q'AB\) by \(\phi\) and fix any \(n+1 \ge i \ge 3\).

By \cref{eq:distance_scalar_product}, the claim is equivalent to the scalar-product inequality
\[
\vvv{Aq'}\cdot \vvv{BA}\;\le\;\vvv{Aq'}\cdot \vvv{A_iA},
\]
and by \cref{eq:scalar_simplex}, in our orthonormal coordinates, this becomes
\[
\vvv{Aq'}\cdot \vvv{AB}=z_1
\;\ge\;
\tfrac12\Bigl(
   \sqrt{\tfrac{2(n-2)}{n-1}}\,z_i
   +\sqrt{\tfrac{n+1}{n-1}}\,z_2
   + z_1
\Bigr)  = \vvv{Aq'}\cdot \vvv{AA_i},
\]
or, equivalently,
\begin{equation}\label{eq:B-vs-A_3}
z_1
\;\ge\;
\sqrt{\tfrac{2(n-2)}{n-1}}\,z_i
+\sqrt{\tfrac{n+1}{n-1}}\,z_2.
\end{equation}

Assume \(\angle q'AB = \phi \le \pi/6\). Then, by \cref{eq:query_coordinates},
\[
z_1 \;\ge\; \frac{\cos\phi}{\sin\phi}\,\sqrt{z_2^2 + z_i^2}
          \;\ge\; \sqrt{3} \cdot \sqrt{z_2^2 + z_i^2},
\]
where the last inequality uses \(\frac{\cos \phi}{\sin \phi} \ge \frac{\cos(\pi/6)}{\sin(\pi/6)} = \sqrt{3}\), which holds for all \(0 < \phi \le \pi/6\).

We will now prove that
\[ 
3z_2^2 + 3z_i^2 \;\ge\; \left( \sqrt{\tfrac{2(n-2)}{n-1}}\,z_i
+ \sqrt{\tfrac{n+1}{n-1}}\,z_2 \right)^2.
\]
Expanding the right-hand side, this is equivalent to
\[ 
\frac{n+1}{n-1} z_i^2 + \frac{2(n-2)}{n-1} z_2^2 
\;\ge\; 
2 \sqrt{ \frac{2(n-2)(n+1)}{(n-1)^2} }\, z_i z_2,
\]
which is always true by the inequality \(U^2 + S^2 \ge 2US\), applied with
\[
U = \sqrt{\frac{n+1}{n-1}}\,z_i,
\qquad
S = \sqrt{\frac{2(n-2)}{n-1}}\,z_2.
\]

Putting these estimates together, we obtain
\[
z_1 \;\ge\; \sqrt{3z_2^2 + 3z_i^2}
          \;\ge\; \sqrt{\tfrac{2(n-2)}{n-1}}\,z_i
                 + \sqrt{\tfrac{n+1}{n-1}}\,z_2,
\]
which is exactly \eqref{eq:B-vs-A_3}. Hence \(\lvert Bq' \rvert \le \lvert A_iq' \rvert\) whenever \(\angle q'AB \le \pi/6\).

\smallskip
\noindent\textit{Consequence for the rotated point.}
If the rotation parameter satisfies \(\theta \le \pi/18\), then the rotated point \(q'\) obeys
\(\angle BAq' \le \pi/6\); therefore the point \(B\) remains the nearest after rotation.

Hence, if the angle \(\theta \le \frac{\pi}{18}\), the angle \(\angle BAq' \le \frac{\pi}{6}\) and the point \(B\) still remains the nearest. 

\noindent\textbf{Step III: proving \(\angle q'AB > \theta\)}

It remains to verify that the rotation \(R_{2\theta}\) sufficiently increases the angle; specifically, we need to confirm that \(\angle BAq' \ge \theta\).

To show that decompose the vector \(\vvv{Aq}\) as
\[
\vvv{Aq} = \vvv{AH_q} + \vvv{H_q q},
\]
where \(H_q\) is the projection of \(q\) onto the plane \(\Pi\). Denote \(R_{2 \theta}[H_q]\) by \(H_q'\). 
Since \(R_{2 \theta}\) is not only an isometry, but by construction also a linear  transformation with placing the point \(A\) as the origin: 
\[
\begin{aligned}
\vvv{Aq'} = R_{2 \theta}[\vvv{Aq}] = R_{2 \theta}[\vvv{AH_q}] + R_{2 \theta}[\vvv{H_q q}]= \\
\vvv{AH_q'} + R_{2 \theta}[\vvv{H_q q}]. 
\end{aligned}
\]
Since the rotation \(R_{2 \theta}\) preserves the vectors orthogonal to \(\Pi\), it also preserves \(\vvv{H_q q}\). 
Hence (see \cref{fig:rotation-decomposition}):
\[ 
\vvv{Aq'} = \vvv{AH_q'} + \vvv{H_qq}, 
\]
and moreover, \(H_q'\) is the projection of \(q'\) onto the plane \(\Pi\).

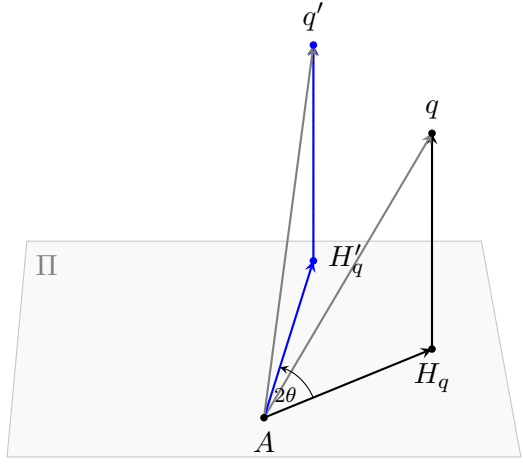
\begin{figure}[ht]
  \centering
\begin{tikzpicture}[scale=2.6,>=stealth,
                    dot/.style={circle,fill,inner sep=1pt},
                    plane/.style={gray!18,fill opacity=.25}]

  \coordinate (P1) at (-1.3,-0.2);
  \coordinate (P2) at ( 1.3,-0.2);
  \coordinate (P3) at ( 1.1, 0.9);
  \coordinate (P4) at (-1.2, 0.9);
  \fill[plane] (P1)--(P2)--(P3)--(P4)--cycle;       
  \draw[gray!45] (P1)--(P2)--(P3)--(P4)--cycle;

  \coordinate (A)   at (0,0);        
  \coordinate (Hq)  at (0.85,0.35);  
  \coordinate (Hqp) at (0.25,0.80);  
  \coordinate (q)   at ($(Hq)+(0,1.1)$);   
  \coordinate (qp)  at ($(Hqp)+(0,1.1)$);  

  \draw[->,thick] (A)  -- (Hq);
  \draw[->,thick] (Hq) -- (q);
  \draw[->,thick,gray] (A)  -- (q);

  \draw[->,thick,blue]
        (A)  -- (Hqp);
  \draw[->,thick,blue]
        (Hqp) -- (qp);
  \draw[->,thick,gray]
        (A)  -- (qp);]

  \draw[dotted] (Hq) -- (q);
  \draw[dotted] (Hqp) -- (qp);

  \node[dot,label={below:$A$}] at (A) {};
  \node[dot,label={below:$H_q$}] at (Hq) {};
  \node[dot,blue,label={right:$H'_q$}] at (Hqp) {};
  \node[dot,label={above:$q$}] at (q) {};
  \node[dot,blue,label={above:$q'$}] at (qp) {};
  \node[gray] at (-1.1,0.78) {$\Pi$};
  \pic[draw,->,angle radius=7mm,
       "$2\theta$"{font=\scriptsize}] {angle = Hq--A--Hqp};
\end{tikzpicture}
\caption{Rotation and vector decomposition of $\vvv{Aq}$.}
  \label{fig:rotation-decomposition}
\end{figure}

To see that \(\angle q'AB \ge \theta\), note that the cosine function is decreasing on the interval \([0, \pi]\). Therefore, it suffices to show that
\[
  \cos(\angle q'AB) \le \cos\theta.
\]

For this purpose, we will use Lemma~\ref{lem:projection_cosine},  which formalizes the observation that projecting a point onto a plane either increases or decreases the cosine of an angle, 
depending on whether the angle is acute or obtuse.
Specifically, if \(\angle qAB < \frac{\pi}{2}\), then \(\cos \angle qAB \ge \cos \angle H_qAB\), and the inequality is reversed when the angle is obtuse.\footnote{It also shows that the projected angle is acute or obtuse if and only if the original angle was acute or obtuse.} 
However, to apply this lemma correctly, one must verify that either the original angle or its projection is acute or obtuse, as the conclusion depends on this distinction.

By Lemma~\ref{lem:projection_cosine}, we have \(\cos \angle H_qAB \ge \cos \angle qAB\), since \(\angle qAB \le \theta < \frac{\pi}{2}\). Therefore, using again that cosine is decreasing on the interval \([0, \pi]\), it follows that \(\angle H_qAB \le \theta\).

Using the triangle inequality for angles, and the fact that \(\angle H_qAH_q' = 2\theta\), we deduce:
\[
\begin{aligned}
\angle H_q'AB &\le \angle H_qAH_q' + \angle H_qAB \le 2\theta + \theta = 3\theta < \frac{\pi}{2}, \\
\angle H_q'AB &\ge \angle H_qAH_q' - \angle H_qAB \ge 2\theta - \theta = \theta.
\end{aligned}
\]

Therefore, applying Lemma~\ref{lem:projection_cosine} once again, we conclude:
\[
\cos \angle q'AB \le \cos \angle H_q'AB \le \cos \theta.
\]
\end{proof}

\begin{lemma}
\label{lem:neighborhoods_prereved_rotation}
    Suppose \(\alpha', \alpha > 0\) are such that 
    \(R^{-1}_{2\theta}\big[B(B, \alpha')\big] \subset B(B, \alpha)\).
    Then 
    \[
        R^{-1}_{2\theta}\big[\Delta_{\alpha'}\big] \subset \Delta_\alpha.
    \]
\end{lemma}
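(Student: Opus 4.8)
\emph{Proof plan.} The plan is to peel off the union defining $\Delta_{\alpha'}$, reduce the inclusion to a per-vertex ball containment, and then control the displacement of each vertex under $R^{-1}_{2\theta}$ by the displacement of $B$, which the hypothesis already bounds.

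First I would use that $R^{-1}_{2\theta}=R_{-2\theta}$ is a Euclidean isometry, so it carries balls to balls: $R^{-1}_{2\theta}\big[B(A_i,\alpha')\big]=B\big(R^{-1}_{2\theta}(A_i),\alpha'\big)$ for every vertex $A_i$ of $\Delta$. Since $\Delta_{\alpha'}=\bigcup_i B(A_i,\alpha')$ and $\Delta_{\alpha}=\bigcup_i B(A_i,\alpha)$, it follows that
\[
R^{-1}_{2\theta}\big[\Delta_{\alpha'}\big]=\bigcup_{i} B\big(R^{-1}_{2\theta}(A_i),\alpha'\big),
\]
so it suffices to show $B\big(R^{-1}_{2\theta}(A_i),\alpha'\big)\subseteq B(A_i,\alpha)$ for each $i$. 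In Euclidean space this containment is equivalent to the scalar inequality $\dist\big(R^{-1}_{2\theta}(A_i),A_i\big)\le \alpha-\alpha'$, since the point of $B(c_1,\alpha')$ farthest from $c_2$ lies at distance $\dist(c_1,c_2)+\alpha'$. Applying the same isometry-on-balls remark to $B$ rewrites the hypothesis $R^{-1}_{2\theta}\big[B(B,\alpha')\big]\subseteq B(B,\alpha)$ as exactly $\dist\big(R^{-1}_{2\theta}(B),B\big)\le\alpha-\alpha'$. Thus the lemma reduces to the purely geometric claim that $B$ is displaced at least as far as any other vertex:
\[
\dist\big(R^{-1}_{2\theta}(A_i),A_i\big)\;\le\;\dist\big(R^{-1}_{2\theta}(B),B\big)\qquad\text{for every vertex }A_i\text{ of }\Delta.
\]

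To prove this claim I would use the explicit structure of $R^{-1}_{2\theta}$ from \cref{app:geom}: it fixes $A$, acts on the plane $\Pi=\mathrm{span}\{A,B,Q\}$ as a rotation about $A$ by angle $-2\theta$, and acts as the identity on $\Pi^{\perp}$. Decompose a vertex as $A_i=H_i+v_i$, where $H_i$ is the orthogonal projection of $A_i$ onto $\Pi$ and $v_i\perp\Pi$; then $v_i$ is fixed and $\dist\big(R^{-1}_{2\theta}(A_i),A_i\big)=\dist\big(R^{-1}_{2\theta}(H_i),H_i\big)$, which is the chord of a planar rotation of $H_i$ about $A$ and therefore equals $2\,\dist(A,H_i)\sin\theta$. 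Since $H_i$ is a projection, $\dist(A,H_i)\le\dist(A,A_i)=2\delta$ for every vertex $A_i\neq A$, while $B\in\Pi$ gives $H_B=B$ and $\dist(A,H_B)=\dist(A,B)=2\delta$, so $B$ attains the maximum; for the vertex $A$ itself the displacement is $0$. Hence $\dist\big(R^{-1}_{2\theta}(A_i),A_i\big)\le 4\delta\sin\theta=\dist\big(R^{-1}_{2\theta}(B),B\big)\le\alpha-\alpha'$ for all $i$, which yields $B\big(R^{-1}_{2\theta}(A_i),\alpha'\big)\subseteq B(A_i,\alpha)\subseteq\Delta_{\alpha}$, and taking the union over $i$ gives $R^{-1}_{2\theta}\big[\Delta_{\alpha'}\big]\subseteq\Delta_{\alpha}$.

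The only delicate point — more careful bookkeeping than real difficulty — is the displacement comparison: one must be precise that a rotation about $A$ in $\Pi$ moves a point by an amount proportional to the distance from $A$ of its $\Pi$-projection, and that among all vertices the in-plane vertex $B$ has the largest such projection distance (namely the full edge length $2\delta$). Everything else is the standard Euclidean fact that $B(c_1,r_1)\subseteq B(c_2,r_2)$ iff $\dist(c_1,c_2)\le r_2-r_1$, together with commuting the isometry past the union.
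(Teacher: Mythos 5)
Your proposal is correct, and its overall skeleton matches the paper's: both reduce the inclusion to the statement that no vertex of \(\Delta\) is displaced by the rotation more than \(B\) is, and both then invoke the hypothesis (equivalently, \(\dist(R^{-1}_{2\theta}(B),B)\le\alpha-\alpha'\)) to finish. Where you differ is in how the displacement comparison is established. The paper fixes a vertex \(C\notin\{A,B\}\), uses the specific simplex geometry \(\vvv{AQ}\perp\vvv{QC}\) to show \(\cos(\vvv{AC},\vvv{AC'})\ge\cos 2\theta=\cos(\vvv{AB},\vvv{AB'})\), and then compares \(\|BB'\|^2-\|CC'\|^2\) via inner products. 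You instead decompose each vertex into its component in the rotation plane \(\Pi\) and its fixed orthogonal component, observe that the displacement equals the chord \(2\,\dist(A,H_i)\sin\theta\), and use that orthogonal projection onto \(\Pi\) contracts the distance from the center \(A\), so the in-plane vertex \(B\) (with \(\dist(A,H_B)=2\delta\)) is extremal. Your route is slightly more general — it does not need the projection of \(C\) to land at \(Q\), a fact the paper itself flags in a footnote as inessential — and it yields the explicit displacement formula \(4\delta\sin\theta\) that the paper only recovers later via the law of cosines. Both arguments are sound; yours trades the paper's coordinate/inner-product bookkeeping for the standard facts that isometries commute with balls and that rotations move points proportionally to their in-plane distance from the axis.
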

\begin{proof}
Denote the simplex \(R^{-1}_{2\theta}\big[\Delta\big]\) by \(\Delta'\).
To ensure the inclusion \( \Delta'_{\alpha'} \subset \Delta_\alpha \), it suffices to check that no vertex of the simplex moves by more than \( \alpha - \alpha' \) under the rotation \(R_{2\theta}\). 

Let \(C\) be any vertex distinct from \(A\) and \(B\). Then the point \(C\) does not lie in the affine plane \(ABQ\).
Denote by \(B' := R_{2 \theta} B\), by \(C' := R_{2 \theta} C\), and by \(Q' := R_{2 \theta} Q\). The point \(C\) is projected onto~\(Q\).\footnote{But this fact is not essential; the argument still holds without requiring that the projection coincides with \(Q\).}
Then,
\[
\cos(\vvv{AQ}, \vvv{AQ'}) = \frac{\vvv{AQ} \cdot \vvv{AQ'}}{\|AQ\|^2} = \cos 2\theta,
\]
due to the construction of the rotation.
On the other hand, since \( \vvv{AQ} \perp \vvv{QC} \), one has:
\[
\cos(\vvv{AC}, \vvv{AC'}) = \frac{\vvv{AQ} \cdot \vvv{AQ'} + \|CQ\|^2}{\|AQ\|^2 + \|CQ\|^2}
\;\ge\;
\frac{\vvv{AQ} \cdot \vvv{AQ'}}{\|AQ\|^2} = \cos 2\theta,
\]
since adding the same positive value to both the numerator and denominator of a ratio in \((0, 1]\) does not decrease the ratio.
Now,
\[
\begin{aligned}
\|BB'\|^2 - \|CC'\|^2 &= \|\vvv{AB} - \vvv{AB'}\|^2 - \|\vvv{AC} - \vvv{AC'}\|^2 \\
&= 2\left( \vvv{AC} \cdot \vvv{AC'} - \vvv{AB} \cdot \vvv{AB'} \right) \\
&= 2\|\vvv{AB}\|^2 \left( \cos(\vvv{AC}, \vvv{AC'}) - \cos 2\theta \right) > 0,
\end{aligned}
\]
which shows that indeed \(\|CC'\| < \|BB'\|\). 
\end{proof}

\subsection{Examples}\label{app:examples}

We begin by analyzing the game on the real line.

\begingroup
\begin{example*}[\Cref{ex:unbounded-real} from the Introduction]
Let \(\epsilon > 0\) and \(\delta \ge 0\). Then, for any number \(T\) of queries,
\[
\OPT_{\RR}(T, \epsilon, \delta) = +\infty.
\]
\end{example*}
\endgroup

\begin{proof}
The idea of the proof is straightforward: since the space is unbounded, the responder can—already in the first round—return an arbitrarily large answer. This ensures that the initial feasible region is as large as desired. Then, over the course of \(T\) interactions, the responder can control how fast the region shrinks, ensuring that the final feasible region remains arbitrarily large.

Let us elaborate.

At the start of the game, for any large number \(L_0 > 0\) and any query \(q \in \RR\), the responder may answer with
\[
r_q := \frac{1+\epsilon}{(1+\epsilon)^2 - 1} \left(L_0 - (2+\epsilon)\delta \right),
\]
which results in a feasible region that includes two intervals of length \(L_0\).

Now fix an interval \([a, b]\) of length \(L\), and suppose the reconstructor asks a query \(q \in \RR\). The responder then answers with
\[
r_q := \frac{\max\{|q - b|,\, |q - a|\} - \delta}{1 + \epsilon}.
\]
This response places the point in \([a, b]\) that is farthest from \(q\) right on the boundary of the feasible region \(\Phi(q, r_q)\). 
In particular, this implies that every point \(x \in [a,b]\) satisfies \(\lvert x - q\rvert \leq (1+\epsilon)r_q+\delta\).

Assume without loss of generality that \(\max\{|q - b|,\, |q - a|\} = |q - b|\), i.e., \(q \le \frac{a + b}{2}\).
On the other hand, all points \(x \in [a, b]\) satisfying
\[
\lvert x - q\rvert  \ge \frac{\lvert b - q\rvert - (2 + \epsilon)\delta}{(1+\epsilon)^2}
\]
also satisfy \((1+\epsilon) \cdot \lvert x - q\rvert + \delta \geq r_q\).
Thus, all such points lie within \(\Phi(q, r_q)\).

The length of the subinterval of \([a,b]\) consisting of such points is
\[
\frac{((1+\epsilon)^2 - 1) |b - q| + (2 + \epsilon)\delta}{(1+\epsilon)^2} \ge \frac{((1+\epsilon)^2 - 1) \cdot \lvert b - a\rvert}{2(1+\epsilon)^2} = \frac{( (1+\epsilon)^2 - 1 )}{2(1+\epsilon)^2} \cdot \lvert b - a\rvert.
\]
Hence, on each round, the responder can reduce the feasible region’s length by a constant multiplicative factor \(c := \frac{(1+\epsilon)^2 - 1}{2(1+\epsilon)^2}\). Starting from an interval of arbitrary length \(L_0\), the feasible region after \(T\) rounds can still have length at least \(c^T \cdot L_0\), which diverges as \(L_0 \to \infty\).

Therefore,
\[
\OPT_{\RR}(T, \epsilon, \delta) = +\infty.
\]
\end{proof}
The next example demonstrates that when $\epsilon=0$, the real line is $(\epsilon,\delta)-$pseudo-finite for every~$\delta>0$:
\begin{example}[Pseudo‑finiteness on the real line]\label{ex:R-pseudo}
For every $\delta\ge 0$, the real line $\RR$ with its usual metric is $(0,\delta)$‑pseudo‑finite.  
\end{example}

\begin{proof}
The optimal reconstructor strategy is to ask two query points $q_1,q_2\in\RR$ with $q_2-q_1>2\delta$. Let the answers of the responder be $r_1,r_2$. 

Intuitively, each answer restricts the secret to intervals of length $2\delta$ centered at $q_i\pm r_i$. Because the distance between the center of the leftmost interval and the center of the rightmost interval exceeds \(2\delta\), at most two of the four candidate intervals overlap, and their intersection has diameter \(2\delta\), attaining the optimal error (see Figure~\ref{fig:overlap-intervals}).

Formally, the feasible regions are
\[
\begin{aligned}
\Phi(q_1, r_1) &= B\left(q_1 - r_1, \delta\right) \cup B\left(q_1 + r_1, \delta\right), \\
\Phi(q_2, r_2) &= B\left(q_2 - r_2, \delta\right) \cup B\left(q_2 + r_2, \delta\right).
\end{aligned}
\]

Assume there are two points \(x, y\) in the intersection \(\Phi(q_1, r_1) \cap \Phi(q_2, r_2)\), such that \(y - x > 2\delta\).  
These points cannot lie in the same ball of radius \(\delta\), hence
\[
x \in B\left(q_1 - r_1, \delta\right), \quad y \in B\left(q_1 + r_1, \delta\right),
\]
and also
\[
x \in B\left(q_2 - r_2, \delta\right), \quad y \in B\left(q_2 + r_2, \delta\right).
\]

Therefore, the balls with larger and smaller centers must overlap:
\[
|q_2 + r_2 - (q_1 + r_1)| < 2\delta, \quad \text{and} \quad |q_2 - r_2 - (q_1 - r_1)| < 2\delta.
\]

On the other hand,
\[
|q_2 - q_1 + r_2 - r_1| + |q_2 - q_1 + r_1 - r_2| \ge 2|q_2 - q_1| > 4\delta,
\]
which leads to a contradiction. Hence the result.
\end{proof}

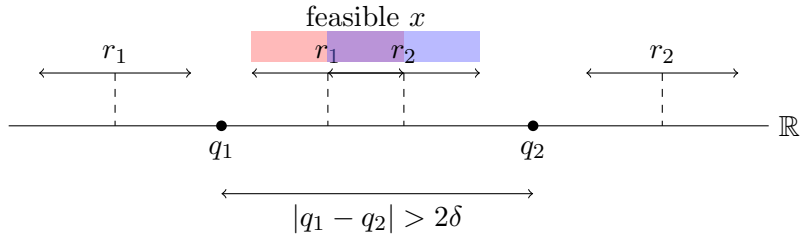
\begin{figure}[ht]
\centering
\begin{tikzpicture}[x=1cm,y=1cm]

  \draw (-5,0) -- (5,0) node[right] {$\mathbb R$};

  \fill (-2.2,0) circle (2pt) node[below=2pt] {$q_1$};
  \fill ( 1.9,0) circle (2pt) node[below=2pt] {$q_2$};

  \draw[<->] (-2.2,-0.9) -- (1.9,-0.9)
        node[midway,below] {$|q_1-q_2|>2\delta$};


  \coordinate (L1) at (-3.6,0.7);
  \draw[dashed] (L1) -- ++(0,-0.7);
  \draw[<->] ($(L1)+(-1,0)$) -- ($(L1)+(1,0)$);
  \node[above] at (L1) {$r_1$};

  \coordinate (C1) at (-0.8,0.7);
  \draw[dashed] (C1) -- ++(0,-0.7);
  \fill[red!60,fill opacity=0.45] ($(C1)+(-1,0.15)$) rectangle ($(C1)+(1,0.55)$);
  \draw[<->] ($(C1)+(-1,0)$) -- ($(C1)+(1,0)$);
  \node[above] at (C1) {$r_1$};

  \coordinate (C2) at (0.2,0.7);
  \draw[dashed] (C2) -- ++(0,-0.7);
  \fill[blue!60, fill opacity=0.45] ($(C2)+(-1,0.15)$) rectangle ($(C2)+(1,0.55)$);
  \draw[<->] ($(C2)+(-1,0)$) -- ($(C2)+(1,0)$);
  \node[above] at (C2) {$r_2$};

  \coordinate (R2) at (3.6,0.7);
  \draw[dashed] (R2) -- ++(0,-0.7);
  \draw[<->] ($(R2)+(-1,0)$) -- ($(R2)+(1,0)$);
  \node[above] at (R2) {$r_2$};

  \node[above=4pt] at (-0.3,1.05) {feasible $x$};

\end{tikzpicture}
\caption{With \(|q_1-q_2|>2\delta\), only the red interval
\([q_1+r_1-\delta,\,q_1+r_1+\delta]\) and the blue interval
\([q_2-r_2-\delta,\,q_2-r_2+\delta]\) intersect, pinning the secret point
to their (purple) overlap.}
\label{fig:overlap-intervals}
\end{figure}

The following simple observation shows that in bounded metric spaces, the reconstruction game becomes trivial whenever \( (2+\epsilon)\delta \) exceeds the diameter of the space.

\begin{example}
Any bounded metric space \( X \) with \( \diam(X) \leq (2+\epsilon)\delta \) is \((\epsilon, \delta)\)-pseudo-finite. Indeed, in this regime, the responder can maintain the entire space as feasible throughout the interaction by consistently replying with the constant value \( \delta \). As a result, the optimal reconstruction error is simply the Chebyshev radius of \( X \), which the reconstructor can achieve without submitting any queries.
\end{example}



The next two examples concern noiseless responders (i.e., \( \epsilon = \delta = 0 \)):

\begin{example}
The Euclidean space \( \mathbb{R}^n \) is \((0,0)\)-pseudo-finite.  
Indeed, any point \( x \in \mathbb{R}^n \) is uniquely determined by its distances to the
\( n+1 \) vertices of a non-degenerate \( n \)-simplex~\cite[§2]{Blumenthal1953}.
\end{example}
{The same holds for any subset of \(\mathbb{R}^n\) that contains such a simplex. However, even in the noiseless setting \((\epsilon, \delta) = (0,0)\), pseudo-finiteness does not hold in all metric spaces—even if the space is totally bounded:}
\begin{example}
\label{ex:binary_sequences}
Let \( X = \{0,1\}^{\mathbb{N}} \) be the space of infinite binary sequences, equipped with the standard ultrametric: the distance between two sequences \( \alpha = (\alpha_i)_{i \in \mathbb{N}} \) and \( \beta = (\beta_i)_{i \in \mathbb{N}} \) is defined as \( d(\alpha, \beta) = 2^{-j} \), where \( j \) is the first index for which \( \alpha_j \neq \beta_j \). Then \( X \) is a compact (and hence totally bounded) metric space that is not \((0,0)\)-pseudo-finite.
\end{example}

\begin{proof}
We show that \( \OPT_X(T, 0, 0) \ge 2^{-T-1} \) for every \( T \), by explicitly constructing a responder strategy. The goal is to preserve a feasible set of sequences that agree on at most \( T \) coordinates.

Assume that after round \( t \), the responder has committed to at most the first \( t' \le t \) bits of the secret sequence. Given a query \( q = (q_i)_{i \in \mathbb{N}} \), the responder replies as follows:

\begin{itemize}
    \item If the prefix \( (q_1, \dots, q_{t'}) \) disagrees with the committed prefix, respond with the true distance \( 2^{-j} \), where \( j \) is the first index of disagreement.
    \item Otherwise, respond with \( r = 2^{-t'-1} \), and define the next bit of the secret sequence as \( \alpha_{t'+1} := 1 - q_{t'+1} \).
\end{itemize}

After \( T \) rounds, the responder has specified exactly \( T \) bits. Let the reconstructor return a sequence~\( \hat{x} \). Then the responder chooses a secret point \( x^\star \) that agrees with~\( \hat{x} \) on all bits except for bit \( T+1 \), which is flipped. This implies that \( \dist(\hat{x}, x^\star) = 2^{-T-1} \), yielding the lower bound.

\smallskip
\noindent
\emph{Remark:} One can further show that this lower bound is tight, and that \( \OPT_X(T, 0, 0) = 2^{-T-1} \), since every informative query forces the responder to reveal one additional bit.
\end{proof}

We now present an example of a non–totally bounded metric space for which the diameter-radius profile \(\mathtt e_X\) fails to be right-continuous.

\begin{example}[Failure of right-continuity of \(\mathtt e_X\)]
\label{ex:not_right_eX}
Recall that for a metric space \((X, \dist)\) the function \(\mathtt e_X\) is defined by
\[
\mathtt e_X(\alpha)\;:=\;\sup\{\, r(S)\;:\; S\subseteq X,\ \diam(S)\le \alpha \,\},
\]
where the Chebyshev radius and diameter are
\[
r(S)\;:=\;\inf_{q\in X}\ \sup_{x\in S}\dist(x,q),
\qquad
\diam(S)\;:=\;\sup_{x,y\in S}\dist(x,y).
\]

Let \(X=\{x_n,y_n: n\in\NN\}\) with metric
\[
\dist(x_n,y_n)=1+\tfrac1n\quad\text{for each }n,\qquad
\dist(u,v)=2\ \text{for all other distinct }u\neq v.
\]
Then \(\mathtt e_X\) is \emph{not} right-continuous\footnote{The space \(X\) is not totally bounded: for \(\alpha\le 1\) every \(\alpha\)-ball contains at most one point (all nonzero distances in \(X\) exceed \(1\)), so no finite \(\alpha\)-net exists; equivalently, the only \(\alpha\)-cover is \(X\) itself, which is infinite.}. 
\end{example}

\begin{proof}
If \(\alpha\le 1\), then any subset \(S\subseteq X\) with \(\diam(S)\le\alpha\) must be a singleton (since every nontrivial distance is \(>1\)), hence \(\mathtt e_X(\alpha)=0\).

For each \(n\), let \(S_n=\{x_n,y_n\}\). Then \(\diam(S_n)=1+\tfrac1n\). Moreover,
\[
r(S_n)\;=\;\inf_{q\in X}\ \max\{\dist(x_n,q),\dist(y_n,q)\}
=\min\{\,1+\tfrac1n,\ 2\,\}=1+\tfrac1n,
\]
because choosing \(q\in\{x_n,y_n\}\) yields value \(1+\tfrac1n\), while any \(q\notin S_n\) is at distance \(2\) from both points. 

Note that any subset of \(X\) with at least three distinct points contains two points at distance \(2\), hence has diameter \(2\). Therefore, for \(1+\tfrac1n\le \alpha < 1+\tfrac1{n-1}\) the only nontrivial subsets with \(\diam\le \alpha\) are the pairs \(S_k\) with \(k\ge n\), and thus
\[
\mathtt e_X(\alpha)=\max_{k\ge n} r(S_k)=1+\tfrac1n.
\]
Consequently,
\[
\lim_{\alpha\downarrow 1}\mathtt e_X(\alpha)
=\lim_{n\to\infty}\mathtt e_X\!\left(1+\tfrac1n\right)=1,
\qquad\text{while}\qquad
\mathtt e_X(1)=0,
\]
so \(\mathtt e_X\) is not right-continuous at \(\alpha=1\).
\end{proof}

We conclude the section by formally proving the equivalence between the Dinur–Nissim model and the reconstruction game on the Boolean cube as referenced in \cref{ex:DinurNissim_equivalence}.

\begin{example}[Dinur–Nissim model]
\label{ex:app_DinurNissim_equivalence}
The counting-query game in the Dinur–Nissim model is equivalent to the distance-based game on the Boolean cube with the Hamming metric, namely, every query in one game can be simulated by at most two queries in the other. 
\end{example}
\begin{proof}
    We show that the counting-query game is equivalent to the distance-based game on the Boolean cube (with Hamming distance) by introducing an intermediate step: both games are equivalent to an inner-product game played on \( \{\pm 1\}^n\). 
    
    The inner-product game is defined as follows. The responder chooses a secret vector \( D' = (d_1', \ldots, d_n') \in \{\pm 1\}^n \). In each round, the reconstructor submits a query vector \[ w = (w_1, \ldots, w_n) \in \{\pm 1\}^n, \] and the responder replies with a noisy approximation of the inner product
    \[
    \langle D', w \rangle = \sum_{i=1}^n w_i d_i'.
    \]
    
    \noindent\textbf{Step I: From the Dinur–Nissim model to the inner-product game.} In the Dinur–Nissim model, the dataset is a binary vector \(D = (d_1, \ldots, d_n) \in \{0,1\}^n\), and each query is a subset \(q \subseteq [n]\), whose (noisy) answer is the count \[a_q = \sum_{i \in q} d_i.\] 
    We can represent the subset \(q\) by its indicator vector \(v_q \in \{0,1\}^n\), so that \(a_q = \langle D, v_q \rangle\). 
    To simulate this count using the inner-product game on \(\{\pm 1\}^n\), consider the transformation 
    \[v \mapsto 2v - 1,\] 
    which maps \(\{0,1\}^n\) to \(\{\pm 1\}^n\). 
    Let 
    \[D’ = 2D - 1 \quad \text{and} \quad w_q = 2v_q - 1.\]
    Then we have the identity 
    \[\langle D’, w_q \rangle = 4 \langle D, v_q \rangle - 2 \langle D, \mathbf{1} \rangle - 2|q| + n.\]
    Therefore, we can recover the original count \(\langle D, v_q \rangle\) by submitting two inner-product queries: one with \(w_q\) and one with the all-ones vector \(\mathbf{1}\).
    A similar argument gives the reverse direction. 
    
    \noindent\textbf{Step II: From the inner-product game to the distance-based game.} 
    Next, we show that the inner-product game on \(\{\pm 1\}^n\) is equivalent to the distance game on \(\{\pm 1\}^n\) equipped with the Hamming metric.
    On this space, one has the identity 
    \[\mathrm{dist}_{\text{Ham}}(x, y) = \frac{1}{4} \|x - y\|_2^2 = \frac{n}{2} - \frac{1}{2} \langle x, y \rangle.\]

    Hence, given the inner product \(\langle x, y \rangle\) one can recover the Hamming distance, and conversely, via simple affine transformations. This correspondence between the models also modifies the noise parameters, but only in a controlled manner. Since the simulation uses at most two queries and involves only affine transformations, the noise in the simulated model increases by at most a constant multiplicative factor.

\end{proof}

\end{document}